\PassOptionsToPackage{table,dvipsnames}{xcolor}
\documentclass[11pt,a4paper,logo,onecolumn,copyright]{tmlrmel}

\usepackage[numbers,sort&compress,square]{natbib}
\usepackage{times}
\usepackage{latexsym}
\usepackage{graphicx}
\usepackage{mathtools}
\usepackage{amssymb}
\usepackage{amsmath}
\usepackage{amsthm}
\usepackage{amsfonts}
\usepackage{mathrsfs}
\usepackage{dsfont}
\usepackage{nicefrac}
\usepackage{tabularx}
\usepackage{multirow}
\usepackage{threeparttable}
\usepackage{booktabs}
\usepackage{makecell}
\usepackage{adjustbox}
\usepackage{arydshln}
\usepackage{caption}
\usepackage{subcaption}
\usepackage{enumitem}
\usepackage{inconsolata}
\usepackage{wrapfig}
\usepackage{xspace}
\usepackage{pifont}
\usepackage{listings}
\usepackage[ruled,vlined]{algorithm2e}

\tcbuselibrary{breakable,skins,listings}
\tcbset{
  enhanced,
  attach boxed title to top left={
    yshift=-3mm,
    yshifttext=-1mm,
    xshift=3mm
  },
  boxed title style={size=small}
}

\newtheorem{theorem}{Theorem}[section]
\newtheorem{proposition}[theorem]{Proposition}
\newtheorem{lemma}[theorem]{Lemma}
\newtheorem{corollary}[theorem]{Corollary}

\newtheorem{assumption}[theorem]{Assumption}
\newtheorem{remark}[theorem]{Remark}

\SetKwInput{KwInput}{Input}
\SetKwInput{KwOutput}{Output}
\SetKwInOut{KwRequire}{Require}
\SetKwInOut{KwEnsure}{Ensure}
\DontPrintSemicolon

\newcommand{\cmark}{\textcolor{ForestGreen}{\ding{51}}}
\newcommand{\xmark}{\textcolor{red}{\ding{55}}}
\newcommand{\model}{\textsc{ReDe}\xspace}

\definecolor{shallowgray}{HTML}{F5F5F5}

\newtcblisting{promptbox}[1][]{
  enhanced,
  colback=shallowgray,
  colframe=black,
  boxsep=2pt,
  left=8pt,
  right=8pt,
  top=6pt,
  bottom=6pt,
  title={\small #1},
  listing only,
  listing options={
    basicstyle=\ttfamily\footnotesize,
    breaklines=true,
    breakatwhitespace=false,
    columns=fullflexible,
    keepspaces=true,
    showstringspaces=false
  }
}

\newcommand{\answerTODO}[1][]{\textcolor{red}{\bfseries [TODO]}}
\newcommand{\justificationTODO}[1][]{\textcolor{red}{\bfseries [TODO]}}

\title{Reasoning Denoiser: Denoising Reasoning Traces for Hallucination Detection in Large Reasoning Models}

\author[1]{Junlin Fang}
\author[1]{Do Nguyen-Thanh}
\author[2]{Xiaogang Xu}
\author[3]{Zhen Fang}
\author[1,*]{Sean Du}

\affil[1]{College of Computing and Data Science, Nanyang Technological University}
\affil[2]{Zhejiang University}
\affil[3]{Australian Artificial Intelligence Institute, University of Technology Sydney}
\affil[*]{Corresponding author}

\correspondingauthors{
  \mbox{\texttt{junlin001@e.ntu.edu.sg}};
  \mbox{\texttt{xuefeng.du@ntu.edu.sg}}
}

\coderepo{July 24, 2026}

\begin{abstract}
Large reasoning models (LRMs) generate long reasoning traces before producing final answers. While these traces may contain useful signals for hallucination detection, harnessing them is non-trivial because long trajectories often include noisy steps that obscure the cues relevant to truthfulness assessment. In this paper, we identify two prevalent forms of reasoning noises, i.e., irrelevant steps and repetitive steps, and show that both substantially degrade hallucination detection performance. Existing confidence-based scores and naive embedding-based filtering fail to reliably separate noisy from informative steps. To address this challenge, we propose \model, a novel learning framework for denoising reasoning traces for hallucination detection. Specifically, \model leverages final-answer attention as an automatic supervision signal to shape the step-level representation space, yielding refined embeddings in which noisy steps can be reliably identified and filtered. \model can be readily plugged into diverse hallucination detectors by operating on the filtered reasoning trajectory after removing noisy steps. Extensive experiments on multiple reasoning benchmarks show that \model consistently improves detection performance over competitive baselines.
\end{abstract}

\begin{document}
\maketitle

\section{Introduction}
\label{sec:intro}

Hallucination detection for large reasoning models (LRMs) is critical for building reliable AI systems. Recent LRMs, such as OpenAI's o series~\citep{openai} and DeepSeek-R1~\citep{guo2025deepseek}, generate long reasoning traces before producing final answers. While these traces can improve reasoning capability, hallucination remains a serious reliability concern~\citep{luauditing,bao2025deepseek,guo2025deepseek,hout1}. Existing studies have begun to leverage reasoning traces for hallucination detection~\citep{sun2025detection,lu2026streaming}, motivated by the intuition that intermediate reasoning may reveal useful signals about the final answer correctness. However, harnessing reasoning traces is non-trivial. Recent evidence shows that, although long reasoning traces can improve reasoning performance, they can also obscure the cues useful for hallucination detection~\citep{cheng2025chain}. This suggests that naively incorporating the full reasoning trace does not consistently benefit detection.

\begin{figure*}[t]
    \centering
    \scalebox{1.0}{\includegraphics[width=\linewidth,trim=0 10 0 0,clip]{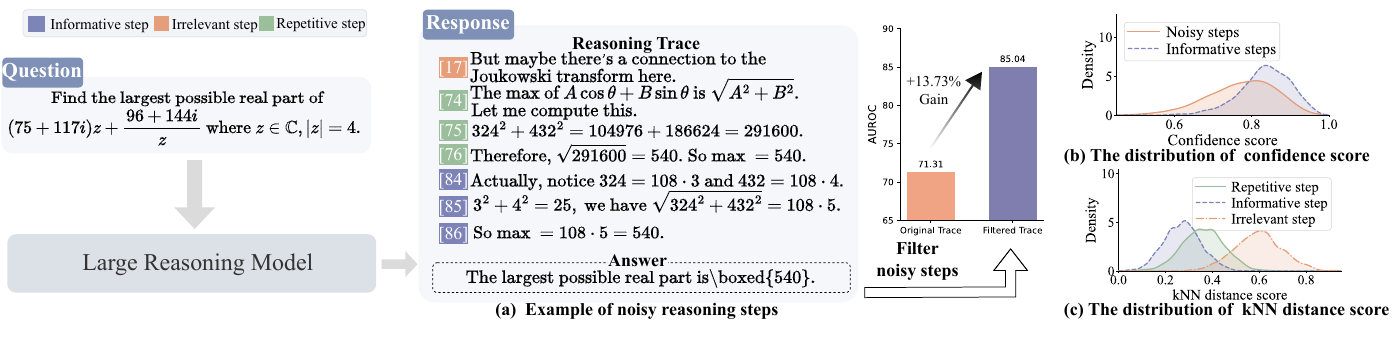}}
    \vspace{-0.6cm}
    \caption{\small 
(a) Long reasoning traces contain \textit{irrelevant} and \textit{repetitive} steps, which hurt hallucination detection; removing them yields clear improvements, as validated with supervised probing~\citep{azaria2023internal} using the last-layer embeddings.
(b) Confidence-based signals fail to distinguish informative and noisy steps, as their score distributions overlap heavily.
(c) Naive $k$NN-based on raw step embeddings can identify irrelevant steps but fails to separate repetitive from informative ones.
Experiments are based on AIME 2024~\citep{aime24} with Qwen3-8B~\citep{yang2025qwen3}. We annotate the reasoning steps into informative, irrelevant, and repetitive categories. Annotation details are in Appendix~\ref{sec:anno}.
}
    \vspace{-0.6cm}
    \label{fig:tease}
\end{figure*}

A key challenge is that long reasoning traces often contain \emph{noisy steps} that are weakly informative for hallucination detection. By examining LRM trajectories, we identify two prevalent forms of noisy steps: \emph{irrelevant steps}, which do not contribute meaningful problem-specific information, and \emph{repetitive steps}, whose information is already covered by later steps in the trajectory. As shown in Figure~\ref{fig:tease}(a), these noisy steps can substantially degrade hallucination detection performance, while removing them yields clear improvements. This observation suggests that \textit{identifying and filtering noisy reasoning steps is crucial} for effectively using reasoning traces in hallucination detection.

A natural approach is to identify noisy steps using confidence-based signals. Recent work has used confidence for improving reasoning-time generation~\citep{varshney2023stitch,SUGAR,CoCoLex,CCD}. However, these methods are designed to improve generation quality, rather than to identify which reasoning steps are informative for detecting hallucination in the final answer. Confidence reflects how certain the model is about what it generates, not how useful a step is for downstream hallucination detection. Indeed, Figure~\ref{fig:tease}(b) shows that the confidence distributions of informative and noisy steps overlap heavily, making confidence-based filtering ineffective.

Another possible route is to identify noisy steps in the embedding space. While irrelevant steps may behave as outliers and can sometimes be detected by simple representation-similarity methods such as $k$NN, this strategy remains inadequate for repetitive steps. Unlike irrelevant steps, repetitive steps are often semantically similar to informative ones and therefore remain close in the raw representation space. As shown in Figure~\ref{fig:tease}(c), naive $k$NN-based filtering can identify some irrelevant steps but fails to distinguish repetitive from informative ones. These limitations suggest that effective noisy-step filtering requires a signal beyond confidence or raw semantic similarity---one that better reflects each step's contribution to the final answer and to hallucination detection.

To address this challenge, we propose \textbf{Reasoning Denoiser (\model)}, a novel framework for filtering noisy reasoning steps by shaping the step-level representation space. Our key idea is to use the attention score between the final answer token and each reasoning step as a supervision signal, which requires no human annotation. Intuitively, this attention reflects how much a step contributes to forming the final answer: \emph{irrelevant steps} tend to receive low attention because they are weakly connected to the answer, while \emph{repetitive steps} can also receive low attention because their information has already been covered by later steps in the reasoning trajectory. Based on this observation, \model learns a lightweight projection that pulls informative steps closer together while pushing noisy steps away. The resulting refined step embeddings allow noisy steps to be reliably identified and filtered by simple distance-based methods such as $k$NN.

\model can be readily plugged into diverse hallucination detection methods by operating on the filtered reasoning trajectory after removing noisy steps, including probing-based~\citep{azaria2023internal,burnsdiscovering}, uncertainty-based~\citep{ren2023outofdistribution}, and verbalized methods~\citep{linteaching}. Extensive experiments on four representative reasoning tasks show that \model consistently improves hallucination detection performance across diverse datasets.  Specifically, on a representative benchmark TruthfulQA, \model improves detection performance by up to 18.69\% over using the original unfiltered reasoning trace, and achieves state-of-the-art performance of 87.32\%, demonstrating that reasoning-step denoising is an effective and general strategy for hallucination detection in LRMs.

Our key contributions are summarized as follows:

\begin{itemize}
\vspace{-0.6cm}
    \item We identify \textit{noisy reasoning steps} as an overlooked obstacle to hallucination detection. We further show that these noisy steps degrade hallucination detection performance, establishing reasoning-step filtering as an important solution for reliable hallucination detection.
    \item We propose Reasoning Denoiser (\model), a novel learning framework that leverages final-answer attention to shape the step-level representation space, enabling reliable filtering of noisy reasoning steps without human annotation.
    \item Extensive experiments (Section~\ref{sec:ablation} and Appendix~\ref{app:ablation}) and theoretical analyses (Appendix~\ref{app:theory}) show when and why \model improves detection performance. We also demonstrate that \model can be plugged into diverse hallucination detection methods for consistent gains. 
\end{itemize}

\section{Problem setup}
\label{sec:setup}

\vspace{-1em}
We study hallucination detection for large reasoning models that generate reasoning traces before final answers, which have become increasingly prominent in today’s foundation model landscape.

\noindent \textbf{Reasoning generation.}
Let an $L$-layer causal LRM take a prompt
$\mathbf{p}$ and generate a response autoregressively.
In reasoning-centric settings, we decompose the generated response into a reasoning trace
$\mathbf{C}=(\mathbf{c}_1,\ldots,\mathbf{c}_K)$ and a final answer $\mathbf{a}$, where each $\mathbf{c}_k$ is a contiguous token span corresponding to one reasoning step. Both the number of reasoning steps $K$ and the length of each step can vary across inputs.

\noindent \textbf{Hallucination detection.}
Given $(\mathbf{p}, \mathbf{C}, \mathbf{a})$, the goal is to determine whether the final answer $\mathbf{a}$ is truthful under the task-specific criterion. Let $y\in\{0,1\}$ denote the label, where $y=0$ means the answer is truthful and $y=1$ means it is hallucinated. We aim to learn a binary detector
$
    G(\mathbf{p},  \mathbf{C}, \mathbf{a})\in\{0,1\},
$
which predicts the truthfulness of the final answer~\citep{lu2026streaming,sun2025detection,wang2026joint,zhang2026harnessing}.

\noindent \textbf{Challenge of long reasoning traces.}
Unlike standard generation, the reasoning trace in an LRM is not uniformly useful for hallucination detection. Long traces often contain noisy steps that are weakly informative about the correctness of the final answer, thereby diluting the signals most relevant to truthfulness assessment~\citep{cheng2025chain}. Consequently, naively using the original reasoning trace may hurt detection performance. This motivates our goal of denoising the reasoning trace before hallucination detection. We provide analysis on the  noisy  reasoning steps next.

\vspace{-1em}

\section{Understanding noisy reasoning steps for hallucination detection}
\vspace{-0.5em}
\label{sec:pre}
Before introducing our method, we first seek to better understand the nature of noisy reasoning steps and to identify a suitable signal for detecting them. Our analysis leads to two main observations. First, the final-answer attention in LRMs provides a meaningful signal for distinguishing informative steps from noisy ones. Second, among semantically similar steps, earlier steps are more likely to be redundant, whereas later steps tend to preserve the information most relevant to the final answer. These observations motivate the design of our framework in Section~\ref{sec:method}.

\begin{figure}[t]
    \centering
    \includegraphics[width=1.0\linewidth, trim=0 60 0 0, clip]{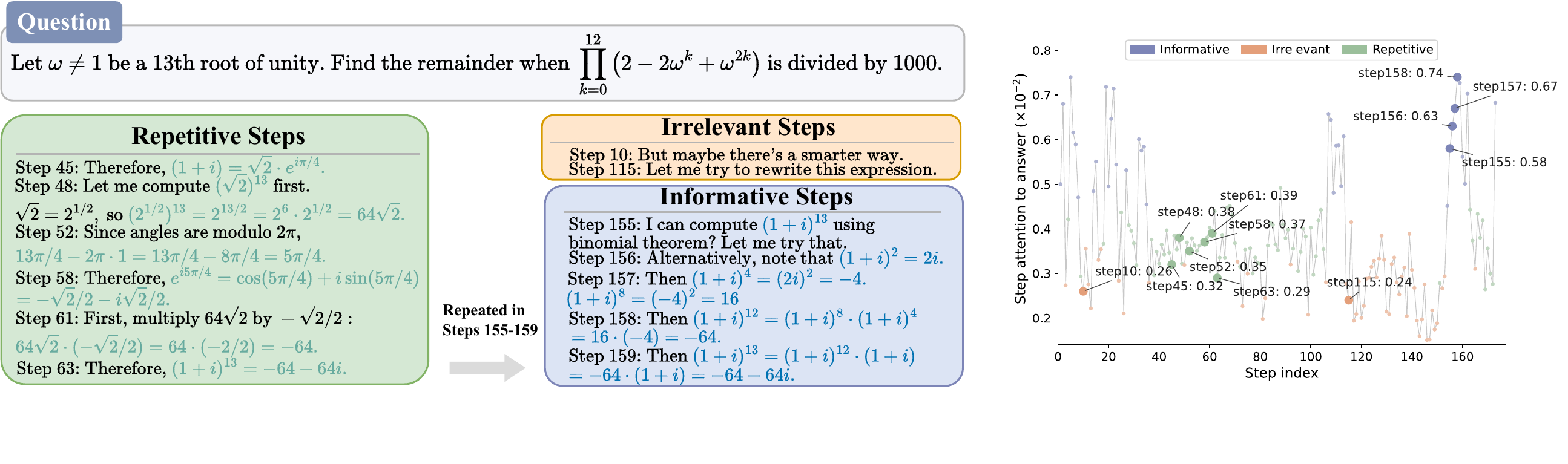}
    \vspace{-0.6cm}
    \caption{\small Attention scores from the final-answer token to each reasoning step on a representative AIME 2024~\citep{aime24} example with Qwen3-8B~\citep{yang2025qwen3}. Irrelevant and repetitive steps receive low attention, while informative steps receive high attention. The computation of attention scores is detailed in Section~\ref{sec:scoring}.}
    \label{fig:attention_case}
    \vspace{-0.6cm}
\end{figure}

\noindent \textbf{Final-answer attention reflects step informativeness.} We  examine whether the LRM's own internal signals can reveal which reasoning steps are useful for hallucination detection. Our key observation is that the attention score from the final-answer token to each reasoning step provides a meaningful relevance signal: steps that contribute more directly to the final answer tend to receive higher attention.

We verify this on our annotated AIME 2024~\citep{aime24} (\emph{cf.} Appendix~\ref{sec:anno}) using Qwen3-8B~\citep{yang2025qwen3}, where we compute the attention score assigned by the final-answer token to each reasoning step (formally defined in Eq.~\ref{eq:step_score}). Figure~\ref{fig:attention_case} shows a representative example. Three patterns emerge clearly: (1) \emph{irrelevant steps} receive low attention; (2) \emph{repetitive steps}, whose content is effectively subsumed by later steps, also receive low attention; and (3) \emph{informative steps} that directly support the final answer receive substantially higher attention. This pattern also holds at the dataset level. Averaged over all examples, informative steps receive a mean attention score of $6.5\times10^{-3}$, compared to $2.1\times10^{-3}$ for irrelevant steps and $3.3\times10^{-3}$ for repetitive steps. These results suggest that final-answer attention offers a practical and unsupervised signal for separating noisy steps from informative ones. We further validate the effectiveness of this signal in Appendix~\ref{app:attention_validation}.

\noindent \textbf{Earlier semantically similar steps are often redundant.}
Our analysis shows low-attention steps are not homogeneous: some are irrelevant, while others are part of repeated reasoning. We next focus on the latter case. When multiple semantically similar steps appear in a reasoning trajectory, an important question is which occurrence preserves the information most relevant to the answer. If later steps consistently retain the essential information better than earlier ones, this suggests earlier occurrences are redundant. To study this, we cluster reasoning steps based on embedding similarity and, from each cluster, 
\begin{wrapfigure}{r}{0.35\textwidth}
    \centering
    \vspace{-0.5em}
    \includegraphics[width=\linewidth]{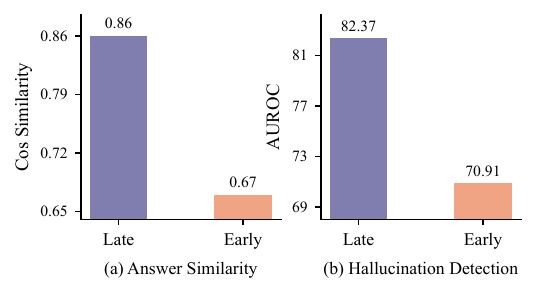}
    \vspace{-2em}
    \caption{\small (a) Cosine similarity between the answer embedding from the retained earliest/latest steps and that from the full reasoning trace. (b) Hallucination detection performance of linear probing based on the retained earliest/latest steps. Results on AIME 2024~\citep{aime24} with Qwen3-8B~\citep{yang2025qwen3}.}
    \label{fig:earlier_repetitive}
    \vspace{-1em}
\end{wrapfigure}
retain either the earliest or the latest step; details are in Appendix~\ref{app:clustering}.

For each retained subset, we concatenate the selected steps with the original question and answer, and then extract the resulting answer embedding from the LRM. We evaluate two criteria: (1) the cosine similarity between the new answer embedding and the original full-trace answer embedding, which measures how well the retained steps preserve the original reasoning signal; and (2) the hallucination detection performance obtained by applying a linear probe to the new answer embedding. Figure~\ref{fig:earlier_repetitive} shows retaining the latest steps consistently yields higher cosine similarity and stronger detection performance than retaining the earliest steps. This result suggests that \textit{earlier steps are often redundant}, whereas later steps might preserve the information  relevant for hallucination detection.

\vspace{-1em}

\section{Method}
\label{sec:method}
\begin{figure*}[t]
    \centering
    \includegraphics[width=\textwidth]{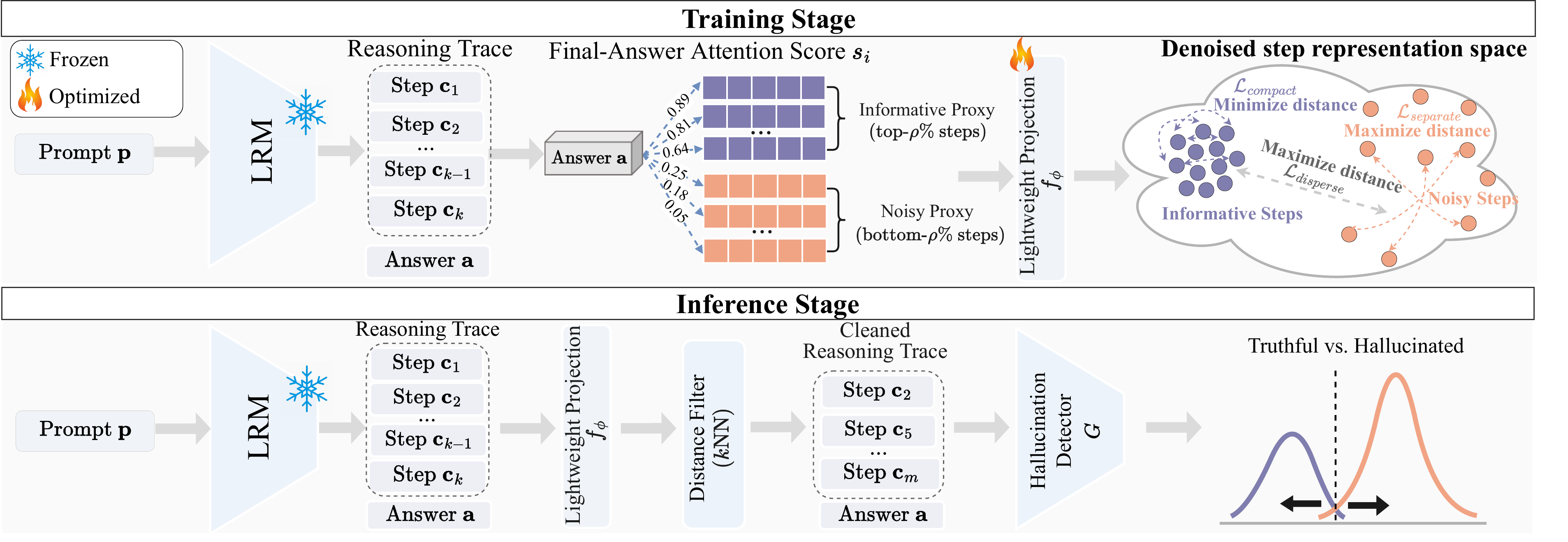}
    \vspace{-0.6cm}
    \caption{\small \textbf{Overview of the REDE framework for hallucination detection in LRMs.} During training, REDE scores each reasoning step via final-answer attention and learns a lightweight projection $f_\phi$ that maps step embeddings into a denoised representation space where informative and noisy steps are well separated. During inference, step embeddings are projected by $f_\phi$ and noisy steps are filtered via distance-based scoring for downstream hallucination detection.
    }
    \label{fig:overview}
    \vspace{-0.6cm}
\end{figure*}

Our novel framework \model addresses the challenge of denoising reasoning traces for hallucination detection. The key idea is to learn a step-level representation space in which informative reasoning steps form a compact region, while noisy steps become isolated and hence easy to filter. Rather than using the raw reasoning trace directly, \model first identifies an annotation-free supervision signal from the LRM itself, then learns a lightweight projection that reshapes the step embeddings, and finally performs filtering in the learned space before applying a downstream hallucination detector. Our framework is illustrated in Figure~\ref{fig:overview}.

\noindent \textbf{Framework overview.}
Given a prompt $\mathbf{p}$, a reasoning trace $\mathbf{C}=\{\mathbf{c}_1,\ldots,\mathbf{c}_K\}$, and a final answer $\mathbf{a}$ generated by an LRM, our framework encompasses three components addressing the following questions:
(1) \emph{How can we obtain a reliable supervision signal without human annotation?} We estimate how much each reasoning step contributes to the final answer using final-answer attention, yielding an annotation-free step score (Section~\ref{sec:scoring}).
(2) \emph{How can we learn a representation space favorable for noisy-step filtering?} Using the attention scores as supervision, we train a projection network that maps step embeddings into a new space where informative and noisy steps are more separable (Section~\ref{sec:representation_shaping}).
(3) \emph{How can we perform efficient filtering at test time?} We filter noisy steps in the shaped space and feed the resulting trace to a downstream hallucination detector (Section~\ref{sec:inference}).
The training and inference procedures are summarized in Algorithms~\ref{alg:training} and~\ref{alg:inference}.

\subsection{Scoring reasoning steps with final-answer attention}
\label{sec:scoring}

\model estimates the usefulness of each reasoning step for producing the final answer. As shown in Section~\ref{sec:pre}, the internal attention of LRMs provides a natural signal for this purpose.

Let $a_T$ denote the last token of answer $\mathbf{a}$, $\mathbf{x}_{1:N}=(x_1,\ldots,x_N)$ denote the full causal context visible to $a_T$, and $I_i \subseteq \{1,\ldots,N\}$ denote the set of token indices corresponding to the $i$-th reasoning step $\mathbf{c}_i$. We define the score of step $\mathbf{c}_i$ as the total attention mass assigned by $a_T$ to the tokens in that step:
\begin{equation}
\small
s_i
=
\sum_{t \in I_i}
\frac{
\exp\!\left(
\mathbf{q}(a_T)^\top \mathbf{k}(x_t)/\sqrt{\omega}
\right)
}{
\sum_{u=1}^{N}
\exp\!\left(
\mathbf{q}(a_T)^\top \mathbf{k}(x_u)/\sqrt{\omega}
\right)
},
\label{eq:step_score}
\end{equation}
where $\mathbf{q}(\cdot)$ and $\mathbf{k}(\cdot)$ are the query and key projections, and $\omega$ is the query/key projection dimension.

The score $s_i$ measures how strongly the final answer attends to step $\mathbf{c}_i$, which admits a simple interpretation.
If a step is \textit{irrelevant} to the answer, its key representations are weakly aligned with the query of the answer token, resulting in a low score.
If a step is \textit{repetitive}, its content is often subsumed by later steps, and the attention tends to concentrate on the later occurrence, again leading to a low score.
By contrast, informative steps that directly support the answer receive higher scores.
 
\noindent \textbf{From attention scores to representation shaping.}
While directly thresholding the attention score $s_i$ at test time and discarding low-attention steps is feasible, this approach has two important limitations. First, it requires recomputing attention scores for every test sample, which incurs non-trivial inference overhead. Second, although attention provides a meaningful signal when aggregated across many examples, per-instance attention scores can vary with reasoning trace length and structure, and prior work has shown that vanilla attention may not reliably transfer to downstream tasks without further learning~\citep{jain2019attention,wiegreffe2019attention}. We verify this empirically in Section~\ref{sec:ablation}, where direct attention-based filtering consistently underperforms our learning-based approach.

Instead of using attention as a direct filter, we use it as an annotation-free supervision signal to shape the step-level representation space. By learning a lightweight projection over many examples, \model converts the per-instance attention signal into a more stable embedding structure that reflects step informativeness. We introduce this process next.

\subsection{Learning a denoised step representation space}
\label{sec:representation_shaping}
 
We now describe how \model extracts step embeddings and learns a shaped representation space for noisy-step filtering.
 
\noindent \textbf{Step embedding extraction.}
A reasoning step often contains both informative and uninformative tokens.
To construct a step representation that emphasizes content-bearing tokens, we follow~\citep{miralles2025not} and adopt a perplexity-weighted aggregation of token embeddings.
Specifically, for the $i$-th reasoning step $\mathbf{c}_i$ spanning token indices $I_i$, we define
\begin{equation}
\small
\mathbf{e}_i
=
\frac{
\sum_{t \in I_i}
\mathrm{PPL}(x_t)\, \mathbf{h}(x_t)
}{
\sum_{t \in I_i}
\mathrm{PPL}(x_t)
},
\qquad
\mathrm{PPL}(x_t)
=
\frac{1}{p_\theta(x_t \mid x_{<t})},
\label{eq:ppl_embedding}
\end{equation}
where $\mathbf{h}(x_t)\in\mathbb{R}^d$ denotes the hidden representation of token $x_t$ and $p_\theta(x_t \mid x_{<t})$ is the next-token probability assigned by the LRM. This weighting assigns larger importance to less predictable tokens, which typically carry richer semantic information (different strategies are compared in Appendix~\ref{app:ablation_embedding}).

\noindent \textbf{Selecting informative and noisy proxies.}
For each training trace, we rank the steps by the attention scores $\{s_i\}_{i=1}^K$ (Eq.~\ref{eq:step_score}).
We denote by $\mathcal{T}$ the top-$\rho\%$ steps and by $\mathcal{B}$ the bottom-$\rho\%$ steps.
The set $\mathcal{T}$ serves as a proxy set of informative steps, whereas $\mathcal{B}$ serves as a proxy set of noisy steps (the effect of $\rho$ is studied in Appendix~\ref{app:ablation_ratio}).

\noindent \textbf{Projection learning.}
We train a lightweight projection network
$f_\phi:\mathbb{R}^d\rightarrow\mathbb{R}^{d'}$
to map each step embedding $\mathbf{e}_i$ to a projected embedding
$
\mathbf{z}_i = f_\phi(\mathbf{e}_i).
$
Throughout, the underlying LRM is frozen and only the projection parameters $\phi$ are optimized.

The goal of the projection is to induce an embedding distribution favorable for distance-based filtering: informative steps should form a compact cluster, noisy steps should remain scattered rather than collapse together, and the two groups should be well separated.
To realize this, we define the cosine similarity between projected steps $i$ and $j$ as
$
\mathrm{sim}_{ij}
=
\mathbf{z}_i^\top \mathbf{z}_j / (\|\mathbf{z}_i\|_2\|\mathbf{z}_j\|_2),
$
and optimize the following objective:
\begin{equation}
\small 
\mathcal{L}
=
\underbrace{
\mathbb{E}_{i\neq j,\; i,j\in\mathcal{T}}
~(1-\mathrm{sim}_{ij})
}_{\mathcal{L}_{\mathrm{compact}}}
+\;
\lambda_{\mathrm{disperse}} \cdot
\underbrace{
\mathbb{E}_{i\neq j,\; i,j\in\mathcal{B}}
~\mathrm{sim}_{ij}
}_{\mathcal{L}_{\mathrm{disperse}}}
+\;
\lambda_{\mathrm{separate}} \cdot 
\underbrace{
\mathbb{E}_{i\in\mathcal{T},\; j\in\mathcal{B}}
~\mathrm{sim}_{ij}
}_{\mathcal{L}_{\mathrm{separate}}}.
\label{eq:total_loss}
\end{equation}
\vspace{-1em}

Each term plays a distinct role.
$\mathcal{L}_{\mathrm{compact}}$ pulls informative steps together, forming a compact region.
$\mathcal{L}_{\mathrm{disperse}}$ discourages noisy steps from clustering with one another, so that they remain individually identifiable as outliers.
$\mathcal{L}_{\mathrm{separate}}$ pushes informative and noisy steps apart, improving their global separability.
Overall, these three terms shape a representation space in which noisy steps have large distances to the informative region and can therefore be filtered effectively using simple non-parametric rules (sensitivity to loss weights is analyzed in Appendix~\ref{app:ablation_weight}).
 
\noindent \textbf{Discussion.}
Our design is intentionally lightweight: the projection network operates on frozen step representations and requires only annotation-free supervision from the LRM itself.
Moreover, the learned embedding distribution is detector-agnostic---once the noisy steps are removed, the filtered trace can be passed to any downstream hallucination detector. We further provide a formal mathematical analysis showing that \textit{hallucination detection based on the filtered trace admits a bounded detection error}, where the bound depends directly on the quality of noisy-step filtering induced by the learned projection.
The full theorem and proof are deferred to Appendix~\ref{app:theory}.

\begin{table*}[t]
\centering
\caption{\small \textbf{Comparison with competitive hallucination detection methods.} 
``Single Generation'' indicates whether the method requires only one generation during inference. ``Supervision'' indicates whether the method requires ground-truth annotations during training or testing. We report the results of \model with CCS and supervised probing as downstream detectors. The best results are highlighted in \textbf{bold}.}
\vspace{-0.2cm}
\label{tab:main_baselines}
\small
\renewcommand{\arraystretch}{1.10}
\setlength{\tabcolsep}{0.5pt}
\resizebox{\textwidth}{!}{
\begin{tabular}{llcccccccccc}
\toprule
\multirow{3}{*}{\textbf{Method}} 
& \multirow{3}{*}{\makecell[l]{\textbf{Single}\\\textbf{Generation}}} 
& \multirow{3}{*}{\textbf{Supervision}} 
& \multicolumn{4}{c}{\textbf{Qwen3-8B}} 
& \multicolumn{4}{c}{\textbf{\makecell[c]{DeepSeek-R1-\\Distill-Llama-8B}}} \\
\cmidrule(lr){4-7} \cmidrule(lr){8-11}
& & 
& TruthfulQA & \textsc{Math} & CodeElo & \textsc{MultiHopQA}
& TruthfulQA & \textsc{Math} & CodeElo & \textsc{MultiHopQA} \\
\midrule

Lexical Similarity~\citep{lingenerating} 
& \xmark & \xmark 
& 52.00 & 49.02 & 55.44 & 58.26 
& 44.09 & 66.28 & 52.57 & 54.33 \\

SelfCheckGPT~\citep{manakul2023selfcheckgpt} 
& \xmark & \xmark 
& 53.11 & 47.20 & 51.85 & 52.61 
& 58.54 & 51.81 & 63.33 & 50.21 \\

Semantic Entropy~\citep{kuhnsemantic} 
& \xmark & \xmark 
& 54.35 & 40.69 & 53.76 & 66.83 
& 49.43 & 51.11 & 72.52 & 50.13 \\

EigenScore~\citep{cheninside} 
& \xmark & \xmark 
& 49.50 & 55.31 & 53.88 & 60.27 
& 45.31 & 51.17 & 55.20 & 58.12 \\

P(True)~\citep{kadavath2022language} 
& \cmark & \xmark 
& 64.47 & 75.16 & 55.62 & 75.59 
& 42.93 & 45.64 & 53.88 & 65.25 \\

HaloScope~\citep{du2024haloscope} 
& \cmark & \xmark 
& 67.22 & 61.43 & 57.19 & 61.03 
& 54.19 & 64.18 & 54.22 & 51.31 \\

TSV~\citep{park2025steer} 
& \cmark & \cmark 
& 69.91 & 71.88 & 59.42 & 67.43 
& 52.59 & 68.75 & 59.28 & 59.71 \\

CED~\citep{lee2024ced}
& \cmark & \xmark
& 70.29 & 68.42 & 60.23 & 72.18
& 54.51 & 65.30 & 61.72 & 57.32 \\

HaMI~\citep{niurobust}
& \cmark & \cmark
& 71.42 & 70.05 & 60.44 & 70.63
& 58.72 & 67.41 & 60.57 & 61.74 \\

\midrule
\multicolumn{11}{c}{\textit{LRM-based}} \\
\midrule

RHD~\citep{sun2025detection} 
& \cmark & \xmark 
& 58.08 & 64.15 & 62.21 & 58.16 
& 57.16 & 57.63 & 65.20 & 63.02 \\

RACE~\citep{wang2026joint} 
& \xmark & \xmark 
& 80.82 & 70.55 & 77.14 & 74.58 
& 61.23 & 62.62 & 59.02 & 64.62 \\

HalluGuard~\citep{zeng2026halluguard} 
& \cmark & \xmark 
& 51.67 & 53.58 & 56.52 & 55.73 
& 51.15 & 57.57 & 51.86 & 53.55 \\

ARS (CCS)~\citep{zhang2026harnessing} 
& \cmark & \xmark 
& 81.92 & 77.03 & 80.05 & 70.31 
& 75.42 & 72.18 & 73.97 & 68.40 \\

\rowcolor{gray!10}
\textbf{\model (CCS)} 
& \cmark & \xmark 
& \textbf{87.32}$^{\pm0.83}$ & 81.33$^{\pm2.14}$ & 82.17$^{\pm1.76}$ & 73.41$^{\pm2.53}$ 
& \textbf{78.30}$^{\pm1.91}$ & 74.24$^{\pm2.37}$ & 79.09$^{\pm0.62}$ & 72.85$^{\pm2.85}$ \\

\rowcolor{gray!10}
\textbf{\model (Probing)} 
& \cmark & \cmark 
& 86.44$^{\pm0.47}$ & \textbf{87.06}$^{\pm1.23}$ & \textbf{87.19}$^{\pm0.95}$ & \textbf{82.94}$^{\pm1.67}$ 
& 74.37$^{\pm2.08}$ & \textbf{83.91}$^{\pm0.71}$ & \textbf{82.55}$^{\pm1.42}$ & \textbf{78.71}$^{\pm1.89}$ \\

\bottomrule
\end{tabular}
}
\vspace{-1em}
\end{table*}

\vspace{-1em}
\subsection{Inference-time filtering and hallucination detection}
\label{sec:inference}

During inference, \model performs step filtering without requiring either attention scores or ground-truth labels. Given a test trace $\bar{\mathbf{C}}=\{\bar{\mathbf{c}}_1,\ldots,\bar{\mathbf{c}}_{\bar{K}}\}$, we first extract the step embeddings $\{\bar{\mathbf{e}}_i\}_{i=1}^{\bar{K}}$ using Eq.~\ref{eq:ppl_embedding} and project them into the learned space: $\bar{\mathbf{z}}_i = f_\phi(\bar{\mathbf{e}}_i)$.
We then compute a distance score $S_i$ for each step as its $k$-nearest neighbor cosine distance with respect to all other steps within the same trace: $S_i = 1 - \bar{\mathbf{z}}_i^{\top}\bar{\mathbf{z}}_i^{(k)} / (\|\bar{\mathbf{z}}_i\|_2\|\bar{\mathbf{z}}_i^{(k)}\|_2)$, where $\bar{\mathbf{z}}_{i}^{(k)}$ denotes the $k$-th nearest neighbor of $\bar{\mathbf{z}}_i$ among $\{\bar{\mathbf{z}}_j\}_{j \neq i}$.
Intuitively, informative steps cluster together in the shaped space and thus have small $k$-NN distances, while noisy steps are scattered and yield larger distances.
We remove the top-$\zeta\%$ steps with the largest distance scores and retain the remaining steps as the filtered trace $\bar{\mathbf{C}}^{\mathrm{fil}} \subseteq \bar{\mathbf{C}}$. The filtered trace is passed to a downstream hallucination detector $G$, which outputs
$
\hat{y}
=
G(\bar{\mathbf{p}}, \bar{\mathbf{C}}^{\mathrm{fil}}, \bar{\mathbf{a}}).
$
Importantly, \model is an agnostic denoising module for diverse downstream detection pipelines.

\vspace{-1em}
\section{Experiments}
\label{exp}

\textbf{Datasets and Models.}
We evaluate \model on four reasoning benchmarks: TruthfulQA~\citep{truthfulqa}, \textsc{MATH}, constructed by combining MATH500~\citep{math500}, AIME 2024~\citep{aime24}, and AIME 2025~\citep{aime25}, \textsc{CodeElo}~\citep{codeelo}, and \textsc{MULTIHOPQA}, which is built following prior work~\citep{sun2025detection} by sampling from HotpotQA~\citep{yang2018hotpotqa}, 2WikiMultihopQA~\citep{ho2020constructing}, MuSiQue~\citep{trivedi2022musique}, and Bamboogle~\citep{press2023measuring}. These datasets cover open-domain question answering, mathematical reasoning, code generation, and multi-hop question answering, respectively. For all datasets, we use 25\% of the available data for testing, reserve 100 examples for validation, and use the remaining data for training. We evaluate all methods using AUROC, and all reported results are averaged over three random seeds.

We evaluate our method on two widely used LRM families: Qwen3-8B/32B~\citep{yang2025qwen3} and DeepSeek-R1-Distill-Llama-8B/DeepSeek-R1-Distill-Qwen-32B~\citep{guo2025deepseek}.  Correctness labels are obtained using a
strong external judge model Qwen3-32B following prior work~\citep{yao2025reasoning} (robustness to alternative labeling methods is verified in Appendix~\ref{app:labeling}). By default, model outputs are generated with greedy decoding. Additional prompt, dataset, sampling strategies, and implementation details are provided in Appendices~\ref{app:prompts},~\ref{app:dataset_details},~\ref{app:sampling}, and~\ref{app:impl_details}, respectively. We also report qualitative case studies, results on GSM8K~\citep{cobbe2021training}, the impact of filtering on task accuracy, and additional evaluation metrics in Appendices~\ref{app:qualitative},~\ref{app:gsm8k},~\ref{appendix:filtering_accuracy}, and~\ref{app:metrics}, respectively.

\noindent \textbf{Baselines.} We first compare detection performance using the original unfiltered reasoning trace and our filtered one under four representative hallucination detectors: probing-based detector (Supervised Probing~\citep{azaria2023internal} and Contrast-Consistent Search, CCS~\citep{burnsdiscovering}), uncertainty-based detector (Perplexity~\citep{ren2023outofdistribution}), and verbalized detector (Verbalized Certainty~\citep{linteaching}). We then compare our method with a comprehensive set of baselines, including: (1) embedding-based methods: Truthfulness Separator Vector (TSV)~\citep{park2025steer}, HaloScope~\citep{du2024haloscope}, CED~\citep{lee2024ced}, and HaMI~\citep{niurobust}; (2) consistency-based methods: Lexical Similarity~\citep{lingenerating},  SelfCheckGPT~\citep{manakul2023selfcheckgpt}, Semantic Entropy~\citep{kuhnsemantic} and EigenScore~\citep{cheninside}; and (3) verbalized methods: P(True)~\citep{kadavath2022language}. We further include methods specifically designed for LRMs, including Reasoning and Answer Consistency Evaluation (RACE)~\citep{wang2026joint}, Reasoning Hallucination Detection (RHD)~\citep{sun2025detection}, HalluGuard~\citep{zeng2026halluguard}, and Answer-agreement Representation Shaping (ARS)~\citep{zhang2026harnessing}. For fair comparison, all baselines are evaluated on the same test sets under the default settings reported in their original papers. Implementation details of all baselines are provided in Appendix~\ref{app:impl_details}. 

\begin{table*}[t]
\centering
\caption{\small \textbf{Comparison of detecting hallucination using the original reasoning trace vs.\ our filtered one.}  All
values are percentages (AUROC) on different  detection methods. The best results are highlighted in \textbf{bold}.}
\vspace{-0.2cm}
\label{tab:main_detection}
\small
\setlength{\tabcolsep}{6pt}
\renewcommand{\arraystretch}{1.12}
\resizebox{\textwidth}{!}{
\begin{tabular}{llcccccccc}
\toprule
\multirow{2}{*}{\textbf{Model}} 
& \multirow{2}{*}{\textbf{Dataset}} 
& \multicolumn{2}{c}{\textbf{CCS}~\citep{burnsdiscovering}}
& \multicolumn{2}{c}{\textbf{Supervised Probing}~\citep{azaria2023internal}}
& \multicolumn{2}{c}{\textbf{Perplexity}~\citep{ren2023outofdistribution}}
& \multicolumn{2}{c}{\textbf{Verbalized Certainty}~\citep{linteaching}} \\
\cmidrule(lr){3-4}
\cmidrule(lr){5-6}
\cmidrule(lr){7-8}
\cmidrule(lr){9-10}
& & Original & Filtered & Original & Filtered & Original & Filtered & Original & Filtered \\
\midrule
\multirow{4}{*}{Qwen3-8B}
& TruthfulQA & 68.63 & \textbf{87.32} & 80.42 & \textbf{86.44} & 58.17 & \textbf{63.46} & 50.37 & \textbf{61.10} \\
& \textsc{Math} & 63.27 & \textbf{81.33} & 81.05 & \textbf{87.06} & 65.16 & \textbf{71.03} & 57.19 & \textbf{65.13} \\
& CodeElo & 61.64 & \textbf{82.17} & 82.82 & \textbf{87.19} & 61.78 & \textbf{72.03} & 78.89 & \textbf{83.89} \\
& \textsc{MultiHopQA} & 58.19 & \textbf{73.41} & 77.84 & \textbf{82.94} & 52.79 & \textbf{65.84} & 63.75 & \textbf{69.50} \\
\midrule
\multirow{4}{*}{\makecell[l]{DeepSeek-R1-\\Distill-Llama-8B}}
& TruthfulQA & 54.14 & \textbf{78.30} & 70.02 & \textbf{74.37} & 52.79 & \textbf{67.72} & 56.61 & \textbf{61.94} \\
& \textsc{Math} & 63.62 & \textbf{74.24} & 77.19 & \textbf{83.91} & 65.85 & \textbf{69.14} & 69.71 & \textbf{76.13} \\
& CodeElo & 55.16 & \textbf{79.09} & 75.07 & \textbf{82.55} & 69.35 & \textbf{78.49} & 79.13 & \textbf{84.49} \\
& \textsc{MultiHopQA} & 60.77 & \textbf{72.85} & 70.05 & \textbf{78.71} & 55.13 & \textbf{64.45} & 50.87 & \textbf{59.56} \\
\bottomrule
\end{tabular}
}
\vspace{-0.5cm}
\end{table*}

\subsection{Main results}
\label{sec:main_result}

\textbf{Effect of reasoning step filtering.} Table~\ref{tab:main_detection} compares hallucination detection using the original unfiltered reasoning trace versus our filtered trace across four representative detectors. Across all datasets and two LRMs, our step selection consistently yields substantial improvements, indicating that the benefit stems from filtering noisy steps rather than from any particular downstream detector. For example, on Qwen3-8B over TruthfulQA, AUROC improves from 68.63 to 87.32 for CCS, and from 80.42 to 86.44 for supervised probing. Similar trends hold on DeepSeek-R1-Distill-Llama-8B, confirming that \model generalizes across different LRM families. 

\begin{wrapfigure}{r}{0.5\textwidth}
\vspace{-0em}
\centering
\includegraphics[width=0.5\textwidth, trim=0 5 0 0,clip]{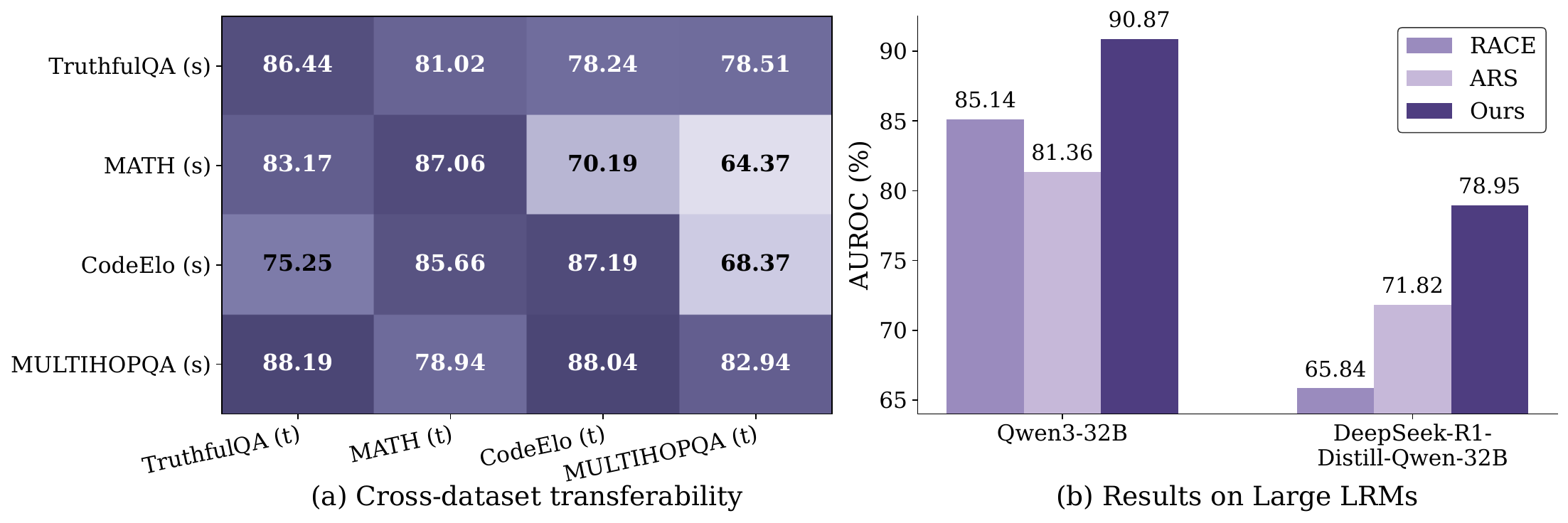}
\vspace{-0.5cm}
\caption{\small (a) \textbf{Cross-dataset transferability of \model.} Each row corresponds to a source dataset, and each column corresponds to a target dataset. (b) \textbf{Scalability to larger LRMs.} On Qwen3-32B and DeepSeek-R1-Distill-Qwen-32B, \model outperforms state-of-the-art baselines on TruthfulQA.}
\label{fig:generalizability}
\vspace{-1em}
\end{wrapfigure}
\noindent \textbf{Comparison with competitive baselines.} Table~\ref{tab:main_baselines} compares \model against a comprehensive set of baselines spanning embedding-based, consistency-based, verbalized, and LRM-specific detectors. 
\model achieves the best performance across nearly all settings. With supervised probing, \model reaches 87.06\% on \textsc{Math}, 87.19\% on CodeElo, and 82.94\% on \textsc{MultiHopQA} with Qwen3-8B. Even without supervision, \model with CCS substantially outperforms all unsupervised baselines, achieving 87.32\% on TruthfulQA. Notably, \model surpasses the strongest and most relevant representation-shaping baseline ARS~\citep{zhang2026harnessing} by a large margin (\emph{e.g.}, 81.33 vs.\ 77.03 on \textsc{Math}), demonstrating that reasoning-step denoising is a key strategy for hallucination detection. Computational cost is analyzed in Appendix~\ref{app:compute_time}.

\noindent \textbf{Generalization across data distributions.} Following prior work~\citep{park2025steer}, we train \model on a source dataset $s$ and apply the learned projection $f_\phi$ to a different target dataset $t$ for downstream detection. As shown in Figure~\ref{fig:generalizability} (results are based on Qwen3-8B with supervised probing as the downstream detector), \model transfers reliably across diverse datasets: for example, training on \textsc{MultiHopQA} and testing on CodeElo achieves 88.04\% AUROC, slightly surpassing the in-domain result of 87.19\%. 
These results indicate that the learned projection captures a general denoising signal that is largely invariant to dataset-specific surface form.

\noindent \textbf{Scalability to larger LRMs.} To assess scalability, we evaluate \model on Qwen3-32B and DeepSeek-R1-Distill-Qwen-32B. As shown in Figure~\ref{fig:generalizability}(b) (results are based on TruthfulQA with supervised probing as the downstream detector), \model consistently outperforms the two strongest baselines across both models: on Qwen3-32B, \model achieves 90.87\% AUROC, improving over RACE by 5.73\%; on 
DeepSeek-R1-Distill-Qwen-32B, \model reaches 78.95\%, outperforming RACE by 13.11\%. The consistent gains confirm that reasoning-step denoising remains effective in larger LRMs. Additional results over other benchmarks are provided in Appendix~\ref{app:large_lrm_more_results}.

\vspace{-1em}
\subsection{Analysis}
\label{sec:ablation}

We provide analysis to understand \model. Due to space limitations, we defer additional experiments to the Appendix, including (1) further ablations on design choices (Appendix~\ref{app:ablation}), (2) qualitative case studies (Appendix~\ref{app:qualitative}), (3) step selection accuracy against human annotations (Appendix~\ref{app:step_acc}), and (4) further analyses on the black-box applicability, and robustness across sampling, labeling, and additional metrics (Appendices~\ref{app:sampling}--\ref{app:metrics}).

\noindent \textbf{Ablation on selection strategies.}
We compare \model with six alternative strategies in Table~\ref{tab:selection_strategy}: random selection, four existing selection methods~\citep{liang2026hidden,cui2025stepwise,zeng2025pruning,li2026making}, and an LLM-as-Judge baseline using Qwen3-32B~\citep{yang2025qwen3}. Random selection degrades all detectors below the original baseline, confirming that naive subsampling discards useful signal. The other methods also fail to yield consistent improvements over the original reasoning trace, and in some cases even degrade detection performance. 
\begin{wraptable}{r}{0.45\textwidth}
\centering
\vspace{-0.5em}
\caption{\small \textbf{Comparing step selection strategies.} Results are on TruthfulQA using Qwen3-8B.}
\label{tab:selection_strategy}
\small
\setlength{\tabcolsep}{5pt}
\renewcommand{\arraystretch}{1.08}
\resizebox{\linewidth}{!}{%
\begin{tabular}{lcccc}
\toprule
\textbf{Selection Strategy}
& \textbf{CCS}~\citep{burnsdiscovering}
& \textbf{Probing}~\citep{azaria2023internal}
& \textbf{PPL}~\citep{ren2023outofdistribution}
& \textbf{Verb.}~\citep{linteaching} \\
\midrule
Original                         & 68.63 & 80.42 & 58.17 & 50.37 \\
\midrule
Random                           & 63.18 & 74.57 & 54.23 & 47.12 \\
STEP~\citep{liang2026hidden}      & 80.17 & 84.92 & 60.52 & 58.33 \\
SPIRIT~\citep{cui2025stepwise}    & 83.22 & 78.14 & 57.93 & 56.78 \\
ASAP~\citep{zeng2025pruning}      & 76.39 & 79.18 & 54.33 & 52.57 \\
Step Entropy~\citep{li2026making} & 82.33 & \textbf{87.31} & 61.49 & 60.03 \\
LLM-as-Judge~\citep{yang2025qwen3} & 70.11 & 74.81 & 59.17 & 50.94 \\
\rowcolor{gray!10}
\textbf{\model}                  & \textbf{87.32} & 86.44 & \textbf{63.46} & \textbf{61.10} \\
\bottomrule
\end{tabular}}
\vspace{-0.5em}
\end{wraptable}
This is because they are designed to improve generation quality and efficiency rather than to filter the proper noisy steps for improved detection, whereas \model specifically shapes representations with final-answer attention signal for this purpose. The LLM-as-Judge baseline also struggles because the long reasoning trace impairs its ability to reliably filter noisy steps. \model avoids this by operating in the representation space where noisy steps are identified via the LRM's own internal attention signal.

\noindent \textbf{Can attention scores alone identify noisy steps?} 
As shown in Table~\ref{tab:attention_vs_rede}, direct attention-based filtering consistently underperforms \model across all detectors (e.g., 80.51 vs.\ 87.32 for CCS). 
\begin{wraptable}{r}{0.4\textwidth}
\vspace{-0em}
\centering
\small
\setlength{\tabcolsep}{10pt}
\caption{\small Comparison of direct attention-based filtering and \model on TruthfulQA.}
\vspace{-0.2cm}
\resizebox{\linewidth}{!}{%
\begin{tabular}{lcc}
\toprule
\textbf{Detector} & \textbf{Attention} & \textbf{Ours} \\
\midrule
CCS                  & 80.51 & \textbf{87.32} \\
Supervised Probing   & 84.97 & \textbf{86.44} \\
Perplexity           & 60.11 & \textbf{63.46} \\
Verbalized Certainty & 59.73 & \textbf{61.10} \\
\bottomrule
\end{tabular}%
}
\label{tab:attention_vs_rede}
\vspace{-1.2em}
\end{wraptable}
Although attention scores provide a useful signal for distinguishing informative from noisy steps (e.g., 80.51 for filtering by final-answer attention signal vs. 58.08 for RHD~\citep{sun2025detection}), they might be sensitive to the heterogeneous reasoning traces and suboptimal as a direct filtering rule for detection. In contrast, \model uses attention as supervision to learn a projection that reshapes step embeddings, such that informative steps form a compact region while noisy steps become outliers. This enables more reliable distance-based filtering at test time, while avoiding attention computation during inference.

\noindent \textbf{Effect of individual loss components.}
We ablate each loss component by removing it from the full objective (Equation~\ref{eq:total_loss}).
As shown in Figure~\ref{fig:ablation}~(a) (results are based on TruthfulQA using Qwen3-8B with supervised probing), removing any single component degrades performance.
Removing $\mathcal{L}_\text{disperse}$ causes the largest drop (\emph{e.g.}, 86.44$\to$80.06), as noisy steps cluster together and become indistinguishable from informative ones with $k$NN.
Removing $\mathcal{L}_\text{compact}$ causes informative steps to spread out and overlap with noisy ones, and removing $\mathcal{L}_\text{separate}$ reduces the distance between the two groups.
We visualize the projected embeddings via t-SNE in Appendix~\ref{app:ablation_loss}, confirming that each loss induces a distinct embedding shaping effect.

\noindent \textbf{Effect of the drop ratio $\zeta$.} We vary the drop ratio $\zeta$ during inference, which controls the fraction of reasoning steps removed as noisy. As shown in Figure~\ref{fig:ablation}~(b), performance improves steadily from $\zeta=0$ (no filtering) and peaks at $\zeta=70\%$, then declines as $\zeta$ increases further. The results suggest that a large fraction of reasoning steps are noisy, while overly filtering will remove informative steps.

\noindent \textbf{Effect of different distance metrics.} We compare alternative distance metrics (cosine, Euclidean, and Mahalanobis) for identifying noisy steps in the projected space at test time. All three metrics yield comparable results, confirming the robustness of the learned projection. We defer the detailed comparison to Appendix~\ref{app:ablation_distance}.

\noindent \textbf{Effect of the output dimension $d'$.} We vary the output dimension $d'$ of the projection module $f_\phi$. As shown in Figure~\ref{fig:ablation}~(c), performance peaks at $d'=1024$ and remains competitive across a wide range, suggesting that the denoising signal can be effectively captured at various dimensionalities.

\noindent \textbf{How do different layers impact \model?} In Figure~\ref{fig:ablation}~(d), we train \model using embeddings from different layers. Consistent with prior findings~\citep{zhang2026harnessing, cheninside}, intermediate-to-late layers are most discriminative. \model improves separability across all layers (\emph{i.e.}, above 82.58 at all layers vs.\ 80.42 for the original trace), indicating that it does not rely on a specific layer depth.

\begin{figure}[t]
\centering
\includegraphics[width=\textwidth]{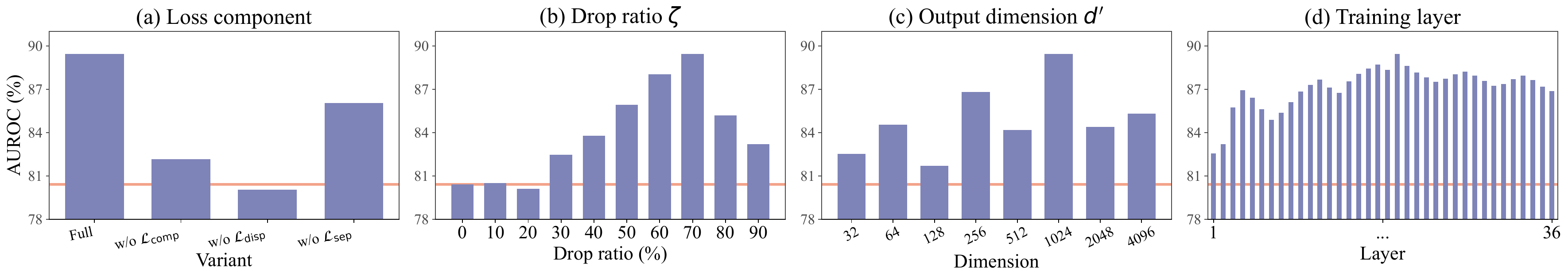}
\vspace{-0.6cm}
\caption{\small \textbf{Ablation studies.} (a) Effect of individual loss components, (b) effect of drop ratio $\zeta$, (c) effect of output dimension $d'$, and (d) effect of training layer. Results are based on TruthfulQA using Qwen3-8B with supervised probing. The orange horizontal line denotes the performance of using the unfiltered reasoning trace.}
\vspace{-0.7cm}
\label{fig:ablation}
\end{figure}

\vspace{-1em}

\section{Related work}

\textbf{Hallucination detection} has attracted broad interest across multiple paradigms, including logit- and probability-based uncertainty~\citep{kadavath2022language,ren2023outofdistribution,varshney2023stitch}, consistency-based scoring~\citep{burnsdiscovering,kuhnsemantic,lingenerating,manakul2023selfcheckgpt}, verbalized confidence~\citep{linteaching}, embedding-based approaches~\citep{azaria2023internal,cheninside,du2024haloscope,park2025steer,hu2025harp,bhatnagar2026drift,niurobust}, and methods that reveal mechanistic distinctions between factual and reasoning-driven hallucinations~\citep{luo2026two}. While effective for standard LLMs, these methods are not designed for LRMs, whose long reasoning traces introduce additional complexity. Empirical studies show that the reasoning trace actively obscures hallucination signals~\citep{yao2025reasoning,cheng2025chain}. Recent LRM-specific detectors exploit reasoning traces more directly~\citep{luauditing,lu2026streaming,sun2025detection,srey2025unsupervised,wang2026joint,zeng2026halluguard}, but none address the noise steps embedded within the trace itself. The most relevant work is ARS~\citep{zhang2026harnessing}, which shapes answer-level representations via latent perturbations at the trace boundary but treats the reasoning trace as a monolithic context, without considering the reasoning noise steps. \model fills this gap by filtering noisy steps before hallucination detection.

\noindent \textbf{Reasoning step selection} has been explored recently to select reasoning steps for better generation quality and efficiency~\citep{xia2025tokenskip,wu2025beyond,hou2025thinkprune,luo2025deconstructing,hong2025slim,tu2025deepprune,fu2025deep,do2025defines,hong2025pruning}. Representative approaches include measuring perplexity shifts upon step removal (SPIRIT~\citep{cui2025stepwise}), using first-token surprisal to identify logical pivots (ASAP~\citep{zeng2025pruning}), compressing traces via step-level entropy (Step Entropy~\citep{li2026making}), training hidden-state probes to evaluate step contributions (STEP~\citep{liang2026hidden}), and using various step-level importance signals to identify critical or redundant steps~\citep{wang2025stepwise,yuan2025not,an2025don}. All of them target generation rather than hallucination detection. As we show in Section~\ref{sec:ablation}, their selection signals do not reliably identify noisy steps for detection. More broadly, attention scores have been used for token-level filtering and KV cache compression in LLMs~\citep{zhang2023h2o,li2024snapkv,chen2024image}. Prior work applies attention patterns to identify pivotal reasoning sentences for interpretability~\citep{bogdan2025thought}. We explore using final-answer attention as a training and filtering signal to shape a representation space tailored for hallucination detection.

\section{Conclusion}

We propose \model, a novel framework for filtering noisy reasoning steps to detect hallucination in LRMs. Our framework leverages final-answer attention to shape the step-level representation space, enabling reliable identification and removal of noisy steps. \model is lightweight, requires no human annotation, and can be readily plugged into diverse downstream hallucination detectors. Extensive experiments show that \model consistently improves hallucination detection performance across datasets, models, and detector families. We hope our work can inspire future research on reasoning-trace denoising and more broadly on building reliable methods for assessing LRMs.

\newpage
\appendix
\section{Additional experimental details}
\label{app:setup}

\subsection{Input prompts}
\label{app:prompts}

We provide the detailed prompts used in our experiments for two purposes:
(1) generating the original reasoning traces and final answers, and
(2) evaluating the correctness of model-generated answers with an external judge model.

\noindent \textbf{Generation prompts.}
For each dataset, we prepend a task-specific system instruction before the question. The system instructions are as follows:
\begin{itemize}[leftmargin=*,itemsep=2pt]
    \item \textbf{TruthfulQA:} ``\textit{You are a factual question answering expert. Provide one concise and direct final answer.}''
    \item \textbf{MATH:} ``\textit{You are a mathematical reasoning expert. Solve the following problem step by step and give the final answer in the format \textbackslash boxed\{YOUR\_ANSWER\}. Answer concisely.}''
    \item \textbf{CodeElo:} ``\textit{You are a competitive programming expert. Solve the following Codeforces problem and provide a correct and efficient C++17 solution. Return only one final C++ code block. Answer concisely.}''
    \item \textbf{MULTIHOPQA:} ``\textit{You are a multi-hop question answering expert. Reason across the evidence\footnote{Here, ``evidence'' refers to the multiple pieces of supporting information that must be connected to answer a multi-hop question. For example, answering ``Was the director of \textit{Jaws} born in the same country as the author of \textit{Hamlet}?'' requires combining evidence about the film and its director, the play and its author, and their birth countries.} and provide one concise final answer.}''
\end{itemize}

These system instructions are combined with the model-specific prompt templates below.

\begin{figure}[htbp]
\centering
\begin{minipage}{0.96\linewidth}
\begin{promptbox}[Qwen Prompt]
<|im_start|>user
{system_instruction}
Question: {question}
<|im_end|>
<|im_start|>assistant
\end{promptbox}
\end{minipage}
\caption{\small Prompt used to generate reasoning traces and final answers for Qwen models.}
\label{fig:prompt_qwen}
\end{figure}

\begin{figure}[htbp]
\centering
\begin{minipage}{0.96\linewidth}
\begin{promptbox}[DeepSeek-R1 Prompt]
<|begin_of_sentence|><|User|>
{system_instruction}
Question: {question}
<|Assistant|><think>
\end{promptbox}
\end{minipage}
\caption{\small Prompt used to generate reasoning traces and final answers for DeepSeek-R1 models.}
\label{fig:prompt_deepseek}
\end{figure}

\noindent \textbf{Correctness evaluation prompt.}
Following prior work~\citep{zhang2026harnessing,yao2025reasoning,zheng2023judging}, we use Qwen3-32B as an external judge to assess whether a model-generated answer is correct, using the prompt shown in Figure~\ref{fig:prompt_judge}. Answers graded as \texttt{INCORRECT} or \texttt{NOT\_ATTEMPTED} are treated as hallucinations.

\begin{figure}[htbp]
\centering
\begin{minipage}{0.96\linewidth}
\begin{promptbox}[Judge Prompt]
Your job is to look at a question, multiple acceptable gold targets, and a predicted answer, and then assign a grade of either "CORRECT", "INCORRECT", or "NOT_ATTEMPTED".

IMPORTANT: The question has multiple acceptable correct answers provided as gold targets. The predicted answer is CORRECT if it matches any one of the gold targets according to the criteria below.

A predicted answer is CORRECT if:
- It fully contains the important information from at least one gold target.
- It does not contain any information that contradicts any gold target.
- Only semantic meaning matters; capitalization, punctuation, grammar, and order do not matter.
- Hedging or guessing is allowed as long as at least one gold target is fully included and the response contains no incorrect or contradictory statements.

A predicted answer is INCORRECT if:
- Any factual statement in the answer contradicts all the gold targets.
- Hedging statements are still considered incorrect if they introduce contradictory content.

A predicted answer is NOT_ATTEMPTED if:
- The important information from any gold target is not included.
- No statements in the answer contradict any of the gold targets.

Additional notes:
- Numerical answers must match the last significant figure of the gold target.
- If the gold target contains more information than the question requires, the predicted answer only needs to match the information requested by the question.
- Do not penalize omissions that are clearly inferable from the question.
- Do not penalize small typos in names if the intended meaning is clear.

Question: {question}
Gold targets: {targets}
Predicted answer: {predicted_answer}

Return only one of the following:
A
B
C

where:
A = CORRECT
B = INCORRECT
C = NOT_ATTEMPTED
\end{promptbox}
\end{minipage}
\caption{\small Prompt used to evaluate the correctness of model-generated answers with Qwen3-32B.}
\label{fig:prompt_judge}
\end{figure}

\begin{figure}[htbp]
\centering
\begin{minipage}{0.96\linewidth}
\begin{promptbox}[Verbalized Uncertainty Prompt for Qwen Models]
<|im_start|>user
A language model was asked: {question}
The model's answer was: {answer}

a. Very Certain
b. Fairly Certain
c. Moderately Certain
d. Somewhat Certain
e. Not Certain
f. Very Uncertain

Analyze its answer. How certain are you of the model's answer?
<|im_end|>
<|im_start|>assistant
\end{promptbox}
\end{minipage}
\caption{\small Prompt used for verbalized uncertainty estimation with Qwen models.}
\label{fig:prompt_verbalized_qwen}
\end{figure}

\begin{figure}[htbp]
\centering
\begin{minipage}{0.96\linewidth}
\begin{promptbox}[Verbalized Uncertainty Prompt for DeepSeek-R1 Models]
<|begin_of_sentence|><|User|>
A language model was asked: {question}
The model's answer was: {answer}

a. Very Certain
b. Fairly Certain
c. Moderately Certain
d. Somewhat Certain
e. Not Certain
f. Very Uncertain

Analyze its answer. How certain are you of the model's answer?
<|Assistant|><think>
</think>
Answer:
\end{promptbox}
\end{minipage}
\caption{\small Prompt used for verbalized uncertainty estimation with DeepSeek-R1 models.}
\label{fig:prompt_verbalized_deepseek}
\end{figure}

\subsection{Dataset details}
\label{app:dataset_details}
We evaluate \model on four tasks: TruthfulQA, \textsc{Math}, CodeElo, and \textsc{MultiHopQA}.

\noindent \textbf{TruthfulQA.} We use TruthfulQA~\citep{truthfulqa}, which contains 817 questions covering factual knowledge across diverse topics.

\noindent \textbf{\textsc{Math}.} We construct the evaluation set by combining MATH500~\citep{math500}, AIME 2024~\citep{aime24}, and AIME 2025~\citep{aime25}, resulting in 560 problems in total.

\noindent \textbf{CodeElo.} We use CodeElo~\citep{codeelo}, which contains 408 Codeforces problems requiring competition-level code generation.

\noindent \textbf{\textsc{MultiHopQA}.} We construct the evaluation set following prior work~\citep{sun2025detection} by sampling 1,000 questions from HotpotQA~\citep{yang2018hotpotqa}, 2WikiMultihopQA~\citep{ho2020constructing}, MuSiQue~\citep{trivedi2022musique}, and Bamboogle~\citep{press2023measuring}.

For all datasets, we reserve 25\% of the available examples for testing, use 100 examples for validation, and use the remaining examples for training. Unless otherwise specified, model outputs are generated with greedy decoding.

\noindent \textbf{Reasoning trace statistics.} Table~\ref{tab:trace_stats} reports the average number of reasoning tokens and reasoning steps per sample across datasets and models. Reasoning trace length varies substantially across tasks: CodeElo produces the longest traces (up to 7,808 tokens and 254.3 steps on average), while TruthfulQA and \textsc{MultiHopQA} are considerably shorter. Math problems fall in between, reflecting the iterative nature of symbolic reasoning. These statistics motivate the need for effective step filtering, as longer traces contain more opportunities for noisy steps to accumulate.

\begin{table}[h]
\centering
\caption{\small Average reasoning trace length (tokens) and number of reasoning steps per sample.}
\label{tab:trace_stats}
\small
\begin{tabular}{llrr}
\toprule
\textbf{Model} & \textbf{Dataset} & \textbf{Avg.\ Tokens} & \textbf{Avg.\ Steps} \\
\midrule
\multirow{4}{*}{Qwen3-8B}
  & TruthfulQA     & 2{,}294 & 65.8  \\
  & \textsc{Math}       & 5{,}606 & 145.0 \\
  & CodeElo    & 7{,}808 & 254.3 \\
  & \textsc{MultiHopQA}& 2{,}969 & 86.6  \\
\midrule
\multirow{4}{*}{DeepSeek-R1-Distill-Llama-8B}
  & TruthfulQA     & 2{,}090 & 54.5  \\
  & \textsc{Math}       & 4{,}251 & 112.4 \\
  & CodeElo    & 7{,}656 & 267.2 \\
  & \textsc{MultiHopQA}& 2{,}210 & 50.4  \\
\bottomrule
\end{tabular}
\end{table}

\subsection{Annotation details on AIME 2024}
\label{sec:anno}
To better understand the structure of noisy reasoning traces, we annotate the reasoning steps on AIME 2024 generated by Qwen3-8B. These annotations are used for analysis, including the examples in Figure~\ref{fig:tease} and the step-selection accuracy evaluation in Section~\ref{sec:ablation}.

\noindent \textbf{Step segmentation.}
Before annotation, we segment each reasoning trace into step-level units. Following prior work on reasoning hallucination analysis~\citep{sun2025detection}, we adopt a hybrid segmentation strategy that combines discourse-marker cues with formatting-based boundaries. Specifically, we split the trace at explicit discourse markers that typically indicate a transition in reasoning, such as ``Wait'', ``But'', ``However'', ``Hmm'', and ``Alternatively''. We further use formatting-based boundaries, such as paragraph breaks, when they correspond to coherent changes in reasoning. This procedure yields a sequence of contiguous reasoning steps for subsequent annotation.

\noindent \textbf{Annotation labels.}
Each reasoning step is assigned exactly one of three labels:
\begin{itemize}
    \item \textbf{Informative}, if it provides non-redundant, problem-specific information that directly supports the final answer.
    \item \textbf{Irrelevant}, if it does not contribute meaningful problem-specific information for solving the problem or verifying the answer.
    \item \textbf{Repetitive}, if its main content is semantically redundant with later reasoning steps and is expressed more completely or more decisively afterwards.
\end{itemize}

\noindent \textbf{Annotation procedure.} We adopt a two-stage procedure to ensure annotation quality. In the first stage, we conduct three successive rounds of labeling on the segmented reasoning steps, where the label guidelines are iteratively refined based on observed ambiguous cases before producing an initial label for each step. In the second stage, three graduate students with relevant expertise in mathematical reasoning and NLP \textit{independently} verify the initial labels on a randomly sampled subset without access to each other's judgments; disagreements with the initial label are resolved by majority vote among the verifiers, and the result serves as the final label for the sampled subset. The three verifiers achieve a Fleiss' $\kappa$ of $0.76$ on this subset, indicating substantial agreement~\citep{landis1977measurement}, which supports the reliability of the initial labels.

\noindent \textbf{Annotation statistics.} Table~\ref{tab:aime_annotation_stats} summarizes the distribution of the final labels across the three categories. Overall, repetitive steps form the largest category (2,127 steps, 48.8\%), while informative steps account for 1,178 steps (27.1\%) and irrelevant steps for 1,050 steps (24.1\%). Taken together, irrelevant and repetitive steps make up 3,177 steps (73.0\%) of the annotated traces, so we treat them as a single noisy class for the accuracy evaluation in Section~\ref{sec:ablation}.

\begin{table}[t]
\centering
\caption{\small Statistics of manual step annotations on AIME 2024 with Qwen3-8B.}
\label{tab:aime_annotation_stats}
\small
\setlength{\tabcolsep}{14pt}
\renewcommand{\arraystretch}{1.1}
\begin{tabular}{lc}
\toprule
\textbf{Step Category} & \textbf{Count} \\
\midrule
Informative & 1{,}178 \\
Irrelevant & 1{,}050 \\
Repetitive & 2{,}127 \\
\bottomrule
\end{tabular}
\end{table}

\subsection{Implementation details}
\label{app:impl_details}
\textbf{\model.}
For each reasoning trace, we first segment the trace into reasoning steps and extract a step-level embedding from the hidden representations of the backbone model. Unless otherwise specified, we use the final-layer hidden states and compute each step embedding by perplexity-weighted averaging over its tokens, as described in the main paper. The projection module $f_{\phi}$ is implemented as a two-layer MLP with ReLU activation. We train it using Adam with learning rate $1\mathrm{e}{-4}$, weight decay $1\mathrm{e}{-5}$, cosine learning rate decay, and batch size 128. The projection dimension, the loss weights, the top/bottom selection ratio, and the step drop ratio are selected on the validation set. At test time, noisy steps are identified using $k$NN distance in the projected space with $k=15$, and the remaining steps are used to construct the final detection representation. For the step score in Eq.~\ref{eq:step_score}, we uniformly average the per-head attention probabilities over all heads in the chosen layer.

\noindent \textbf{Downstream detectors.}
We evaluate \model with four downstream detectors: supervised probing~\citep{azaria2023internal}, CCS~\citep{burnsdiscovering}, verbalized uncertainty~\citep{linteaching}, and perplexity~\citep{ren2023outofdistribution,cheng2025chain}. For supervised probing, we follow prior work~\citep{azaria2023internal} and adopt an MLP classifier with a 512-dimensional hidden layer. We use dropout 0.6, Gaussian input noise with $\sigma=0.008$, and BatchNorm momentum 0.05. The classifier is trained with \texttt{BCEWithLogitsLoss} using SGD with learning rate $8\times10^{-3}$, momentum 0.9, and weight decay $5\times10^{-2}$ for 100 epochs with batch size 128. A ReduceLROnPlateau scheduler is used with factor 0.5, patience 7, and minimum learning rate $10^{-4}$. Unless otherwise specified, we extract representations from layers 0--35 and use the last-token representation for classification. For CCS, we follow its original setup~\citep{burnsdiscovering} and train a lightweight CCS classifier instantiated as a single linear layer, optimized with AdamW (learning rate 1e-3, weight decay 1e-2). The model is trained on balanced positive–negative embedding pairs constructed via difference vectors for 1000 epochs and with a logical consistency loss. We repeat training for 10 random initializations and retain the best-performing checkpoint. For verbalized uncertainty, we follow prior work~\citep{linteaching} and prompt the backbone model to self-assess its confidence using a six-level certainty scale and use the resulting score for AUROC evaluation. For perplexity-based detection, we follow prior work~\citep{ren2023outofdistribution,cheng2025chain} and use the mean perplexity over answer tokens as the detection score.

\noindent \textbf{Baselines.}
For lexical similarity~\citep{lingenerating} and semantic entropy~\citep{kuhnsemantic}, we reproduce them based on the original papers using multiple sampled responses from the same backbone model. SelfCheckGPT~\citep{manakul2023selfcheckgpt} is implemented with its NLI-based variant, which uses a fine-tuned DeBERTa-v3-large model\footnote{\url{https://huggingface.co/MoritzLaurer/DeBERTa-v3-large-mnli-fever-anli-ling-wanli}} to estimate the contradiction or entailment relation between the most-likely answer and sampled responses. For P(True)~\citep{kadavath2022language}, we reproduce it following the original paper. For TSV~\citep{park2025steer}, we follow the default settings described in the original paper and train it on the same dataset using embeddings extracted from the same layer as in our main experiments. For HaloScope~\citep{du2024haloscope}, we train it on the same unlabeled training set as our method and use pseudo-labels generated via PCA to train the lightweight MLP classifier. For EigenScore~\citep{cheninside}, we search the optimal number of sampled generations on the validation set and report the best AUROC. For RACE~\citep{wang2026joint}, we follow the original implementation and adopt the setting without pre-extracted chains of thought, where reasoning steps are summarized automatically by the built-in extractor. For RHD~\citep{sun2025detection}, we reproduce it based on the official code. For HalluGuard~\citep{zeng2026halluguard}, we follow the default configuration described in the original paper. For ARS~\citep{zhang2026harnessing}, we use the officially released codebase with the default hyperparameters reported in the paper. All baselines are evaluated on the same test sets under the default settings reported in their respective papers.

\subsection{Clustering details for Section~\ref{sec:pre}}
\label{app:clustering}

We describe the procedure for clustering semantically similar reasoning steps and retaining representative steps from each cluster. The full procedure is summarized in Algorithm~\ref{alg:clustering}.

Given a reasoning trace $\mathbf{C} = \{\mathbf{c}_1, \ldots, \mathbf{c}_K\}$, we first extract step embeddings $\{\mathbf{e}_1, \ldots, \mathbf{e}_K\}$ using perplexity-weighted averaging (Eq.~\ref{eq:ppl_embedding}). We then iteratively assign each step to an existing cluster if its cosine similarity to the cluster centroid exceeds a threshold $\tau_c$, or create a new cluster otherwise. After all steps are assigned, we retain either the earliest or latest step from each cluster based on its position in the trace. The retained steps are concatenated with the original question and answer, and the resulting sequence is fed through the LRM to extract the last-token hidden state of the answer as the answer embedding. We set $\tau_c = 0.7$.

\begin{algorithm}[h]
\caption{\small Step Clustering and Retention}
\label{alg:clustering}
\KwInput{Reasoning trace $\mathbf{C}=\{\mathbf{c}_1,\ldots,\mathbf{c}_K\}$, similarity threshold $\tau_c$, retention mode $m \in \{\texttt{earliest}, \texttt{latest}\}$}
\KwOutput{Retained reasoning trace $\mathbf{C}^{\mathrm{ret}}$}
Extract step embeddings $\{\mathbf{e}_1,\ldots,\mathbf{e}_K\}$ via Eq.~\ref{eq:ppl_embedding}\;
Initialize cluster list $\mathcal{G} \leftarrow \emptyset$\;
\For{$i=1,\ldots,K$}{
    $\mathrm{assigned} \leftarrow \mathrm{False}$\;
    
    \For{each cluster $G \in \mathcal{G}$}{
        Compute centroid $\bar{\mathbf{e}}_G = \frac{1}{|G|}\sum_{j \in G} \mathbf{e}_j$\;
        
        \If{$\cos(\mathbf{e}_i, \bar{\mathbf{e}}_G) \ge \tau_c$}{
            $G \leftarrow G \cup \{i\}$\;
            $\mathrm{assigned} \leftarrow \mathrm{True}$\;
            \textbf{break}\;
        }
    }
    
    \If{$\mathrm{assigned}=\mathrm{False}$}{
        Append singleton cluster $\{i\}$ to $\mathcal{G}$\;
    }
}
Initialize retained index set $\mathcal{I}^{\mathrm{ret}} \leftarrow \emptyset$\;
\For{each cluster $G \in \mathcal{G}$}{
    \eIf{$m=\texttt{earliest}$}{
        $\mathcal{I}^{\mathrm{ret}} \leftarrow \mathcal{I}^{\mathrm{ret}} \cup \{\min(G)\}$\;
    }{
        $\mathcal{I}^{\mathrm{ret}} \leftarrow \mathcal{I}^{\mathrm{ret}} \cup \{\max(G)\}$\;
    }
}
Sort the indices in $\mathcal{I}^{\mathrm{ret}}$ in ascending order\;
Construct $\mathbf{C}^{\mathrm{ret}}$ by retaining $\{\mathbf{c}_i \mid i \in \mathcal{I}^{\mathrm{ret}}\}$ in the original trace order\;
\KwRet{$\mathbf{C}^{\mathrm{ret}}$}\;
\end{algorithm}
\subsection{Training and inference algorithms of our \model}
\label{app:algorithms}
 
We provide the complete training and inference procedures of \model in Algorithm~\ref{alg:training} and Algorithm~\ref{alg:inference}, respectively. During training, \model computes step-level attention scores from the LRM's own attention mechanism and uses them as supervision to train a lightweight projection module $f_\phi$. During inference, \model applies the learned projection to identify and remove noisy steps without requiring attention scores or ground-truth labels, and passes the filtered reasoning trace to a downstream hallucination detector.

\begin{algorithm}[h]
\caption{\small Training Procedure of \model}
\label{alg:training}
\KwInput{Training set $\mathcal{D}_{\mathrm{train}}=\{(\mathbf{p}^{(i)}, \mathbf{C}^{(i)}, \mathbf{a}^{(i)})\}_{i=1}^{N}$, frozen LRM with parameters $\theta$, selection ratio $\rho$, loss weights $\lambda_{\mathrm{disperse}}$ and $\lambda_{\mathrm{separate}}$, number of epochs $E$}
\KwOutput{Trained projection module $f_\phi$}
Initialize projection module $f_\phi \colon \mathbb{R}^{d} \rightarrow \mathbb{R}^{d'}$\;
\tcp{Step 1: Construct proxy sets from final-answer attention}
\For{each training sample $(\mathbf{p}, \mathbf{C}, \mathbf{a})$ in $\mathcal{D}_{\mathrm{train}}$}{
    Let $\mathbf{C}=\{\mathbf{c}_1,\ldots,\mathbf{c}_K\}$\;
    
    \For{$i=1,\ldots,K$}{
        Compute PPL-weighted step embedding $\mathbf{e}_i$ via Eq.~\ref{eq:ppl_embedding}\;
        Compute step-level attention score $s_i$ via Eq.~\ref{eq:step_score}\;
    }
    
    $\mathcal{T} \leftarrow$ indices of the top $\rho\%$ steps ranked by $\{s_i\}_{i=1}^{K}$\;
    $\mathcal{B} \leftarrow$ indices of the bottom $\rho\%$ steps ranked by $\{s_i\}_{i=1}^{K}$\;
    
    Store $(\{\mathbf{e}_i\}_{i=1}^{K}, \mathcal{T}, \mathcal{B})$ for training\;
}
\tcp{Step 2: Train the projection module}
\For{epoch $=1,\ldots,E$}{
    \For{each mini-batch of stored $(\{\mathbf{e}_i\}, \mathcal{T}, \mathcal{B})$}{
        Compute projected embeddings $\mathbf{z}_i=f_\phi(\mathbf{e}_i)$ for all steps in the mini-batch\;
        
        Compute $\mathcal{L}_{\mathrm{compact}}$, $\mathcal{L}_{\mathrm{disperse}}$, and $\mathcal{L}_{\mathrm{separate}}$ from the projected embeddings and proxy sets\;
        
        Compute total loss
        $\mathcal{L}=\mathcal{L}_{\mathrm{compact}}+\lambda_{\mathrm{disperse}}\mathcal{L}_{\mathrm{disperse}}+\lambda_{\mathrm{separate}}\mathcal{L}_{\mathrm{separate}}$
        via Eq.~\ref{eq:total_loss}\;
        
        Update $\phi$ by gradient descent on $\mathcal{L}$\;
    }
}
\KwRet{$f_\phi$}\;
\end{algorithm}

\begin{algorithm}[h]
\caption{\small Inference Procedure of \model}
\label{alg:inference}
\KwInput{Test sample $(\bar{\mathbf{p}}, \bar{\mathbf{C}}, \bar{\mathbf{a}})$, trained projection $f_\phi$, number of nearest neighbors $k$, drop ratio $\zeta$, downstream detector $G$}
\KwOutput{Hallucination prediction $\hat{y} \in \{0,1\}$}
Let $\bar{\mathbf{C}}=\{\bar{\mathbf{c}}_1,\ldots,\bar{\mathbf{c}}_{\bar{K}}\}$\;
\tcp{Step 1: Extract and project step embeddings}
\For{$i=1,\ldots,\bar{K}$}{
    Compute PPL-weighted step embedding $\bar{\mathbf{e}}_i$ via Eq.~\ref{eq:ppl_embedding}\;
    Compute projected embedding $\bar{\mathbf{z}}_i=f_\phi(\bar{\mathbf{e}}_i)$\;
}
\tcp{Step 2: Compute within-trace $k$NN distance scores}
\For{$i=1,\ldots,\bar{K}$}{
    Let $\bar{\mathbf{z}}_{i}^{(k)}$ denote the $k$-th nearest neighbor of $\bar{\mathbf{z}}_i$ among $\{\bar{\mathbf{z}}_j\}_{j\ne i}$ within the same trace\;
    Compute distance score $S_i = 1 - \bar{\mathbf{z}}_i^{\top}\bar{\mathbf{z}}_i^{(k)} / (\|\bar{\mathbf{z}}_i\|_2\|\bar{\mathbf{z}}_i^{(k)}\|_2)$\;
}
\tcp{Step 3: Filter noisy steps}
$\mathcal{I}_{\mathrm{drop}} \leftarrow$ indices of the top-$\zeta\%$ steps with the largest scores in $\{S_i\}_{i=1}^{\bar{K}}$\;
Construct $\bar{\mathbf{C}}^{\mathrm{fil}}$ by retaining $\{\bar{\mathbf{c}}_i \mid i \notin \mathcal{I}_{\mathrm{drop}}\}$ in the original trace order\;
\tcp{Step 4: Downstream hallucination detection}
$\hat{y} \leftarrow G(\bar{\mathbf{p}}, \bar{\mathbf{C}}^{\mathrm{fil}}, \bar{\mathbf{a}})$\;
\KwRet{$\hat{y}$}\;
\end{algorithm}
\subsection{Compute resources and time}
\label{app:compute_time}

\textbf{Software and hardware.}
We conduct all experiments using Python 3.10 and PyTorch 2.3.1 on NVIDIA A100 GPUs with 80\,GB memory.

\noindent \textbf{Training and inference time.}
We report the training and inference time of \model on TruthfulQA with Qwen3-8B.
Based on tracked runs, training \model takes 681 seconds, and inference on the TruthfulQA test set takes 49 seconds.
For comparison, Semantic Entropy~\citep{kuhnsemantic} requires 1141 seconds, RACE~\citep{wang2026joint} requires 1507 seconds, and SelfCheckGPT~\citep{manakul2023selfcheckgpt} requires 2077 seconds under the same setting.
These results show that \model has a clear efficiency advantage over these baselines while achieving better hallucination detection performance.

\subsection{Licenses of existing assets}
\label{app:licenses}

We list the licenses of the datasets and pre-trained models used in this work. All assets are used in accordance with their respective licenses and intended research use.

\paragraph{Datasets.} TruthfulQA~\citep{truthfulqa}, 2WikiMultihopQA~\citep{ho2020constructing}, and CodeElo~\citep{codeelo} are released under Apache License 2.0; MATH~\citep{math500}, AIME 2024~\citep{aime24}, AIME 2025~\citep{aime25}, Bamboogle~\citep{press2023measuring}, and GSM8K~\citep{cobbe2021training} are released under MIT License; HotpotQA~\citep{yang2018hotpotqa} is released under CC BY-SA 4.0; and MuSiQue~\citep{trivedi2022musique} is released under CC BY 4.0.

\paragraph{Pre-trained models.} Qwen3-8B and Qwen3-32B~\citep{yang2025qwen3} are released under Apache License 2.0. DeepSeek-R1-Distill-Llama-8B and DeepSeek-R1-Distill-Qwen-32B~\citep{guo2025deepseek} are released under MIT License, with their base models additionally subject to the Llama 3.1 Community License and Apache License 2.0, respectively. The DeBERTa-v3-large-mnli-fever-anli-ling-wanli model used for the NLI variant of SelfCheckGPT~\citep{manakul2023selfcheckgpt} is released under MIT License.

\section{Additional ablation studies}
\label{app:ablation}
We conduct additional ablation studies to validate the key design choices of \model.
Unless otherwise specified, all experiments are conducted on TruthfulQA using Qwen3-8B with supervised probing as the downstream detector.

\subsection{Effect of training selection ratio}
\label{app:ablation_ratio}
During training, we select the top $\rho\%$ and bottom $\rho\%$ of steps by attention score to form the informative and noisy sets, respectively.
Table~\ref{tab:ablation_ratio} reports detection performance under varying $\rho$ across four downstream detectors.
A small $\rho$ provides insufficient contrast between the two groups, weakening the contrastive signal; a large $\rho$ reduces the separation between the two ends of the attention distribution, leading to less discriminative contrastive samples.
Performance consistently peaks in the range $\rho \in [20, 25]$, confirming that well-separated contrastive samples are important for learning a discriminative projection.
The training-time ratio $\rho$ complements the inference-time drop ratio $\zeta = 70\%$: $\rho$ selects the most contrastive samples at the two ends of the attention distribution for effective supervision, while $\zeta$ enables thorough denoising at test time.

\begin{table}[h]
\centering
\caption{\small Effect of training selection ratio $\rho$ (\%) on hallucination detection (AUROC, \%) on TruthfulQA with Qwen3-8B. The best result in each column is highlighted in \textbf{bold}.}
\label{tab:ablation_ratio}
\small
\resizebox{0.7\linewidth}{!}{%
\begin{tabular}{lcccc}
\toprule
$\rho$ (\%) 
& \textbf{CCS}~\citep{burnsdiscovering} 
& \textbf{Supervised Probing}~\citep{azaria2023internal} 
& \textbf{Perplexity}~\citep{ren2023outofdistribution} 
& \textbf{Verbalized Certainty}~\citep{linteaching} \\
\midrule
10 & 82.19 & 84.06 & 59.11 & 54.42 \\
15 & 84.52 & 82.29 & 61.05 & 57.82 \\
20 & \textbf{87.32} & 83.60 & \textbf{63.46} & \textbf{61.10} \\
25 & 87.05 & \textbf{86.44} & 62.74 & 59.18 \\
30 & 85.53 & 85.51 & 63.02 & 53.67 \\
35 & 84.41 & 86.07 & 60.31 & 57.93 \\
40 & 86.07 & 84.05 & 60.85 & 54.13 \\
\bottomrule
\end{tabular}%
}
\end{table}

\subsection{Visualization of loss ablation}
\label{app:ablation_loss}
Figure~\ref{fig:ablation_tsne} provides t-SNE visualizations corresponding to the loss ablation in Section~\ref{sec:ablation}. In the original embedding space, informative and noisy steps are heavily mixed. The full model produces a compact informative cluster clearly separated from scattered noisy steps, while removing any single loss leads to increased overlap.

\begin{figure}[h]
\centering
\includegraphics[width=\linewidth]{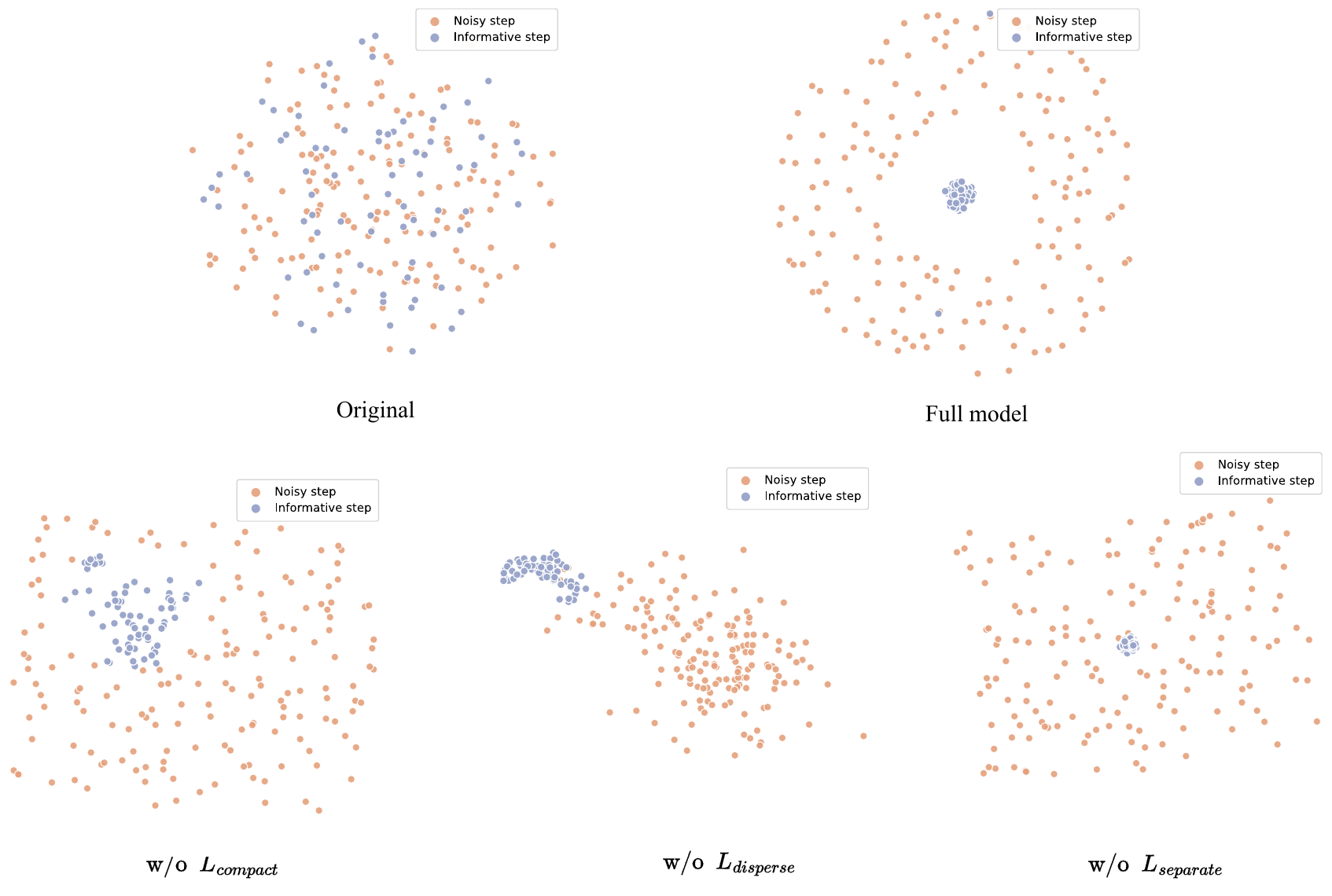}
\caption{\small t-SNE visualization of step embeddings under each ablated loss variant on AIME 2024~\citep{aime24} with Qwen3-8B. \textcolor{orange}{Orange} and \textcolor{blue!60}{blue} points denote noisy and informative steps, respectively.}
\label{fig:ablation_tsne}
\end{figure}

\subsection{Effect of different distance metrics}
\label{app:ablation_distance}
We compare alternative distance metrics for the $k$NN-based filtering in the projected space at test time. Our default method uses cosine distance. We additionally evaluate Mahalanobis distance and Euclidean distance. As shown in Table~\ref{tab:ablation_distance}, all three metrics yield strong and comparable results, confirming that the learned projection produces a well-separated score distribution where noisy steps can be reliably identified regardless of the specific distance metric.
\begin{table}[h]
\centering
\caption{\small Effect of distance metrics for $k$NN-based filtering on hallucination detection on TruthfulQA with Qwen3-8B using supervised probing.}
\label{tab:ablation_distance}
\small
\begin{tabular}{lc}
\toprule
\textbf{Distance Metric} & \textbf{Supervised Probing}~\citep{azaria2023internal} \\
\midrule
Cosine (Ours)   & \textbf{86.44} \\
Mahalanobis     & 81.51 \\
Euclidean       & 83.02 \\
\bottomrule
\end{tabular}
\end{table}

\subsection{Effect of loss weights}
\label{app:ablation_weight}
\model uses two weighting hyperparameters $\lambda_\text{disperse}$ and $\lambda_\text{separate}$ to balance the three loss components (Eq.~\ref{eq:total_loss}).
We independently vary each weight while fixing the other to its default value, and report detection performance with supervised probing on TruthfulQA with Qwen3-8B in Figure~\ref{fig:ablation_weight}.
Note that $\lambda = 0$ corresponds to removing the respective loss component entirely.
Performance remains stable across a moderate range of both weights, indicating that \model is not overly sensitive to the precise choice of loss weights.
A small weight weakens the corresponding geometric constraint, reducing the quality of the learned projection; a large weight dominates the objective and suppresses the contributions of the other two losses.

\begin{figure}[h]
\centering
\includegraphics[width=0.6\linewidth]{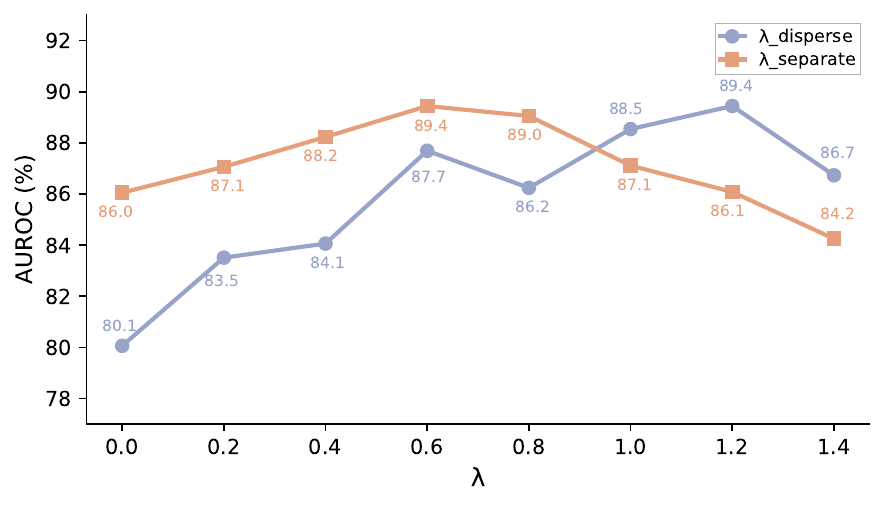}
\caption{\small Effect of loss weights $\lambda_\text{disperse}$ and $\lambda_\text{separate}$ on hallucination detection on TruthfulQA with Qwen3-8B, evaluated with supervised probing. $\lambda = 0$ indicates the corresponding loss is removed.}
\label{fig:ablation_weight}
\end{figure}

\subsection{Effect of the number of nearest neighbors $k$}
\label{app:ablation_k}
At inference time, we compute each step's distance score as its $k$-NN cosine distance with respect to all other steps within the same trace.
Table~\ref{tab:ablation_k} reports detection performance under varying $k$ across four downstream detectors.
Across all values of $k$, \model consistently outperforms the original unfiltered reasoning trace, demonstrating that the learned projection produces a stable density structure that is robust to the choice of $k$.
Performance peaks at $k = 15$ across all four detectors, confirming that this setting best captures the local density signal induced by the learned projection.
\begin{table}[h]
\centering
\caption{\small Effect of the number of nearest neighbors $k$ on hallucination detection (AUROC, \%) on TruthfulQA with Qwen3-8B. The best result in each column is highlighted in \textbf{bold}.}
\label{tab:ablation_k}
\small
\resizebox{0.7\linewidth}{!}{%
\begin{tabular}{lcccc}
\toprule
$k$
& \textbf{CCS}~\citep{burnsdiscovering}
& \textbf{Supervised Probing}~\citep{azaria2023internal}
& \textbf{Perplexity}~\citep{ren2023outofdistribution}
& \textbf{Verbalized Certainty}~\citep{linteaching} \\
\midrule
5  & 83.97 & 82.61 & 60.42 & 57.13 \\
10 & 85.34 & 85.78 & 61.89 & 60.27 \\
15 & \textbf{87.32} & \textbf{86.44} & \textbf{63.46} & \textbf{61.10} \\
20 & 86.18 & 85.39 & 62.07 & 59.45 \\
25 & 85.91 & 84.02 & 62.83 & 58.74 \\
30 & 84.05 & 84.71 & 60.51 & 58.06 \\
\bottomrule
\end{tabular}%
}
\end{table}

\subsection{Effect of step score aggregation}
\label{app:ablation_score}

In Eq.~\ref{eq:step_score}, we compute the step-level attention score $s_i$ by aggregating attention weights from the final answer token to each reasoning step. We ablate two design choices for this aggregation: the position of the answer token used as the query, and the layer from which the attention is extracted.

\noindent \textbf{Answer token position.} We compare four choices: the first token, the middle token (1/2), mean pooling over all answer tokens, and the last token. As shown in Table~\ref{tab:ablation_score}, using the last answer token yields the best performance across all downstream detectors, suggesting that it accumulates the most complete contextual information from the answer sequence and therefore provides the most informative relevance signal for each reasoning step.

\noindent \textbf{Layer.} Fixing the query to the last answer token, we further vary the transformer layer from which the attention score is computed. As shown in Figure~\ref{fig:attn_layer}, early layers (0--7) are generally less effective ($\sim 83$\% AUROC), while intermediate-to-late layers yield stronger results, with local peaks around layer 10, 20, and 22. Layers 27--28 show a mild drop to around 84.5\%, after which performance recovers and peaks at the final layer (86.44\%). This supports our default choice of computing the step score from the last answer token at the final transformer layer.

\begin{table}[h]
\centering
\caption{\small Effect of answer token position for step score computation on TruthfulQA with Qwen3-8B. The best result in each column is highlighted in \textbf{bold}.}
\label{tab:ablation_score}
\small
\resizebox{0.7\linewidth}{!}{%
\begin{tabular}{lcccc}
\toprule
\textbf{Token Position}
& \textbf{CCS}~\citep{burnsdiscovering}
& \textbf{Supervised Probing}~\citep{azaria2023internal}
& \textbf{Perplexity}~\citep{ren2023outofdistribution}
& \textbf{Verbalized Certainty}~\citep{linteaching} \\
\midrule
First token        & 86.17 & 82.15 & 54.09 & 52.96 \\
Middle token (1/2) & 85.09 & 85.35 & 57.48 & 55.72 \\
Mean pooling       & 86.71 & 86.34 & 61.30 & 60.27 \\
Last token (Ours)  & \textbf{87.32} & \textbf{86.44} & \textbf{63.46} & \textbf{61.10} \\
\bottomrule
\end{tabular}%
}
\end{table}

\begin{figure}[h]
\centering
\includegraphics[width=0.95\linewidth]{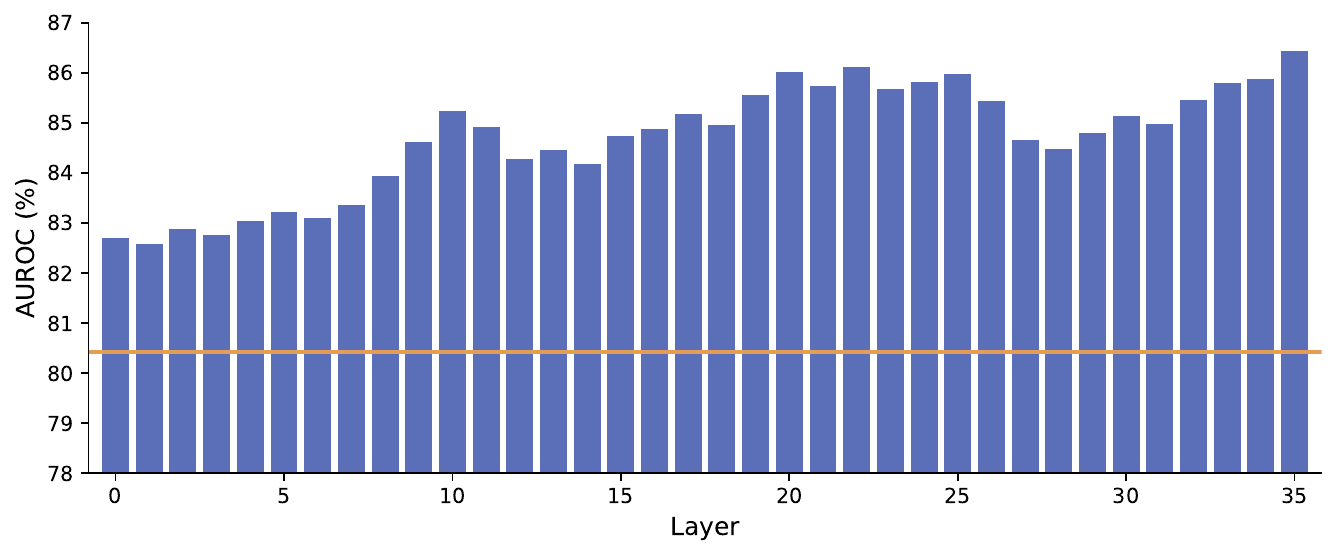}
\caption{\small Effect of the layer used to compute the attention score in Eq.~\ref{eq:step_score}. Results are on TruthfulQA with Qwen3-8B, evaluated with supervised probing. The orange horizontal line denotes the performance of using the unfiltered reasoning trace.}
\label{fig:attn_layer}
\end{figure}

\subsection{Effect of step embedding weighting}
\label{app:ablation_embedding}
In Eq.~\ref{eq:ppl_embedding}, we compute the step-level embedding $\mathbf{e}_k$ by applying perplexity (PPL)-weighted averaging over the token hidden states within each step.
Specifically, the token-level perplexity is defined as $\text{PPL}(x_{k_j}) = \exp(-\log p_\theta(x_{k_j} \mid x_{<k_j}))$, where $p_\theta(x_{k_j} \mid x_{<k_j})$ is the model's predicted probability for token $x_{k_j}$ given its preceding context.
Tokens with higher perplexity receive larger weights, as they are less predictable and therefore more likely to carry discriminative information for hallucination detection.
Table~\ref{tab:ablation_embedding} compares three step embedding strategies: using only the last token, uniform averaging over all tokens in the step, and PPL-weighted averaging.
The results show that PPL-weighting consistently achieves the best performance across all detectors, while the last-token representation is generally stronger than uniform averaging but remains inferior to PPL-weighting.

\begin{table}[h]
\centering
\caption{\small Effect of step embedding weighting strategy (AUROC, \%) on TruthfulQA with Qwen3-8B. The best result in each column is highlighted in \textbf{bold}.}
\label{tab:ablation_embedding}
\small
\resizebox{0.7\linewidth}{!}{%
\begin{tabular}{lcccc}
\toprule
\textbf{Weighting Strategy}
& \textbf{CCS}~\citep{burnsdiscovering}
& \textbf{Probing}~\citep{azaria2023internal}
& \textbf{Perplexity}~\citep{ren2023outofdistribution}
& \textbf{Verbalized}~\citep{linteaching} \\
\midrule
Uniform average     & 82.56 & 85.41 & 60.88 & 59.08 \\
Last token          & 86.18 & 86.05 & 63.31 & 60.09 \\
PPL-weighted (Ours) & \textbf{87.32} & \textbf{86.44} & \textbf{63.46} & \textbf{61.10} \\
\bottomrule
\end{tabular}%
}
\end{table}

\subsection{Effect of embedding extraction location}
\label{app:ablation_location}
Following prior work~\citep{du2024haloscope,park2025steer}, we investigate the effect of the multi-head attention (MHA) architecture on step-level representation quality.
Specifically, the MHA can be conceptually expressed as:
\begin{equation}
f_{i+1} = f_i + Q_i \operatorname{Attn}_i(f_i),
\end{equation}
where $f_i$ represents the output of the $i$-th transformer block, $\operatorname{Attn}_i(f_i)$ denotes the output of the self-attention module in the $i$-th block, and $Q_i$ is the output projection matrix of the attention sub-block.
We evaluate hallucination detection performance using step embeddings extracted from three different locations within the MHA architecture, as shown in Table~\ref{tab:ablation_location}.
The results show that the block output $f$ is a favorable choice for detecting hallucinations across both LRM architectures.

\begin{table}[h]
\centering
\caption{\small Effect of embedding extraction location on hallucination detection (AUROC, \%) based on supervised probing~\citep{azaria2023internal}. The best result in each column is highlighted in \textbf{bold}.}
\label{tab:ablation_location}
\small
\resizebox{0.7\linewidth}{!}{%
\begin{tabular}{lcccc}
\toprule
\multirow{2}{*}{\textbf{Embedding Location}} 
& \multicolumn{2}{c}{\textbf{Qwen3-8B}} 
& \multicolumn{2}{c}{\textbf{DeepSeek-R1-Distill-Llama-8B}} \\
\cmidrule(lr){2-3} \cmidrule(lr){4-5}
& \textsc{TruthfulQA} & \textsc{Math} & \textsc{TruthfulQA} & \textsc{Math} \\
\midrule
$\operatorname{Attn}(f)$          & 82.40 & 84.06 & 72.08 & 80.05 \\
$Q\operatorname{Attn}(f)$         & 85.50 & 83.44 & 73.18 & 78.52 \\
$f$ (Ours)                        & \textbf{86.44} & \textbf{87.06} & \textbf{74.37} & \textbf{83.91} \\
\bottomrule
\end{tabular}%
}
\end{table}

\section{Additional analysis}
\label{app:analysis}

\subsection{Qualitative analysis}
\label{app:qualitative}
We present step-level case studies and trace-level embedding visualizations to illustrate how \model improves hallucination detection in practice. All reasoning traces are generated by Qwen3-8B~\citep{yang2025qwen3} on the \textsc{Math} benchmark with greedy decoding, and detection scores are computed using supervised probing~\citep{azaria2023internal}.

\noindent \textbf{Step-level filtering examples.} Figures~\ref{fig:rede_sample1} and~\ref{fig:rede_sample2} show two representative traces after \model filtering, where each reasoning step is marked as either \textcolor{green!40!black}{kept} or \textcolor{black!30}{filtered}.
In Figure~\ref{fig:rede_sample1}, the model gives an incorrect answer due to an arithmetic error. The original 84-step trace contains substantial repetitive and tangential content, leading to a low detection score of 0.41. After \model removes 70\% of the steps, the score increases to 0.86, making the error more salient to the detector.
Figure~\ref{fig:rede_sample2} shows the complementary case, where the model reaches the correct answer through a verbose but valid derivation. The original trace receives a score of 0.77, incorrectly suggesting hallucination. After \model removes 71\% of the steps, the score drops to 0.28, reducing this false positive. Together, these examples show that \model improves detection for both hallucinated and truthful responses by filtering noisy reasoning steps.

\begin{figure}[h]
\centering
\includegraphics[width=\linewidth]{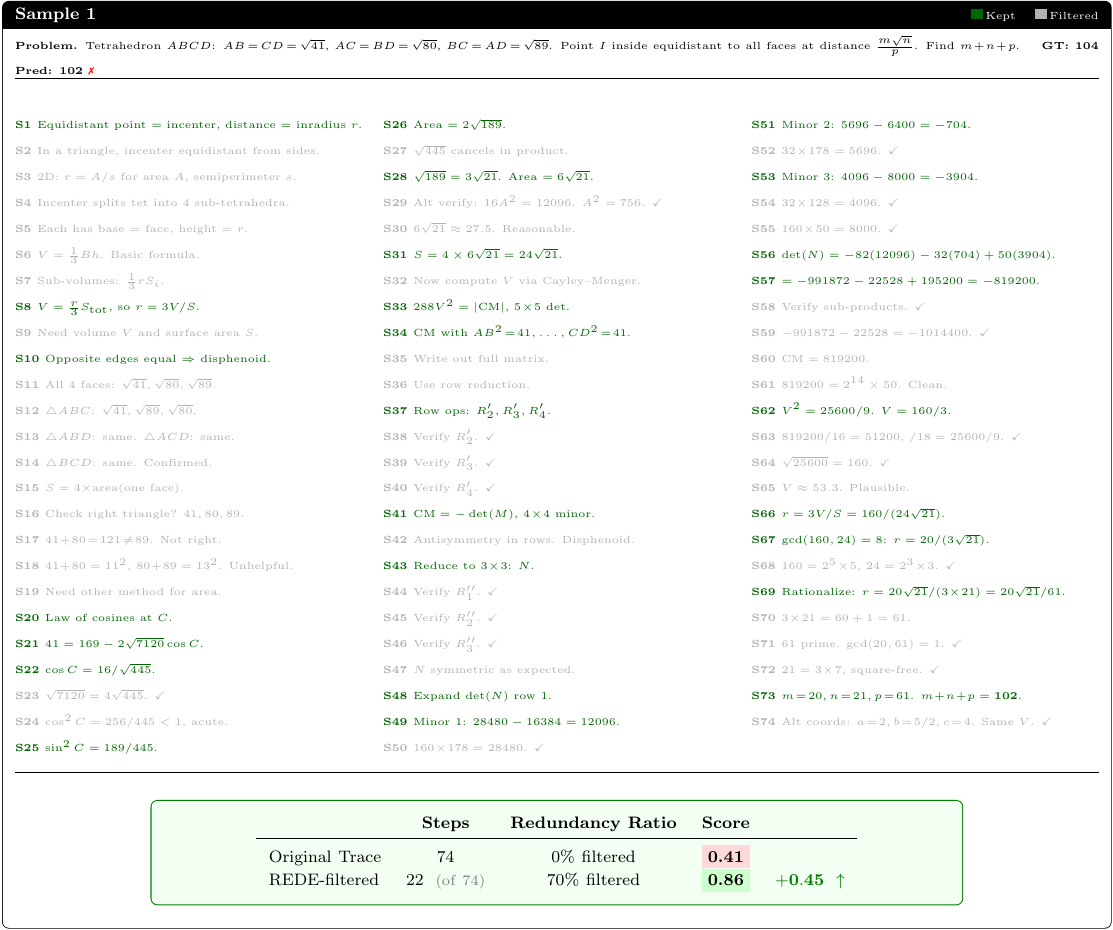}
\caption{\small \textbf{Sample~1 (incorrect answer).}
\model step-level filtering on a tetrahedron inradius problem from \textsc{Math} with Qwen3-8B.
\textcolor{green!40!black}{Green}: kept; \textcolor{black!30}{gray}: filtered.
Detection scores are from supervised probing~\citep{azaria2023internal}.
After filtering 70\% of the steps, the score increases from 0.41 to 0.86, correctly flagging the hallucinated response.}
\label{fig:rede_sample1}
\end{figure}

\begin{figure}[h]
\centering
\includegraphics[width=\linewidth]{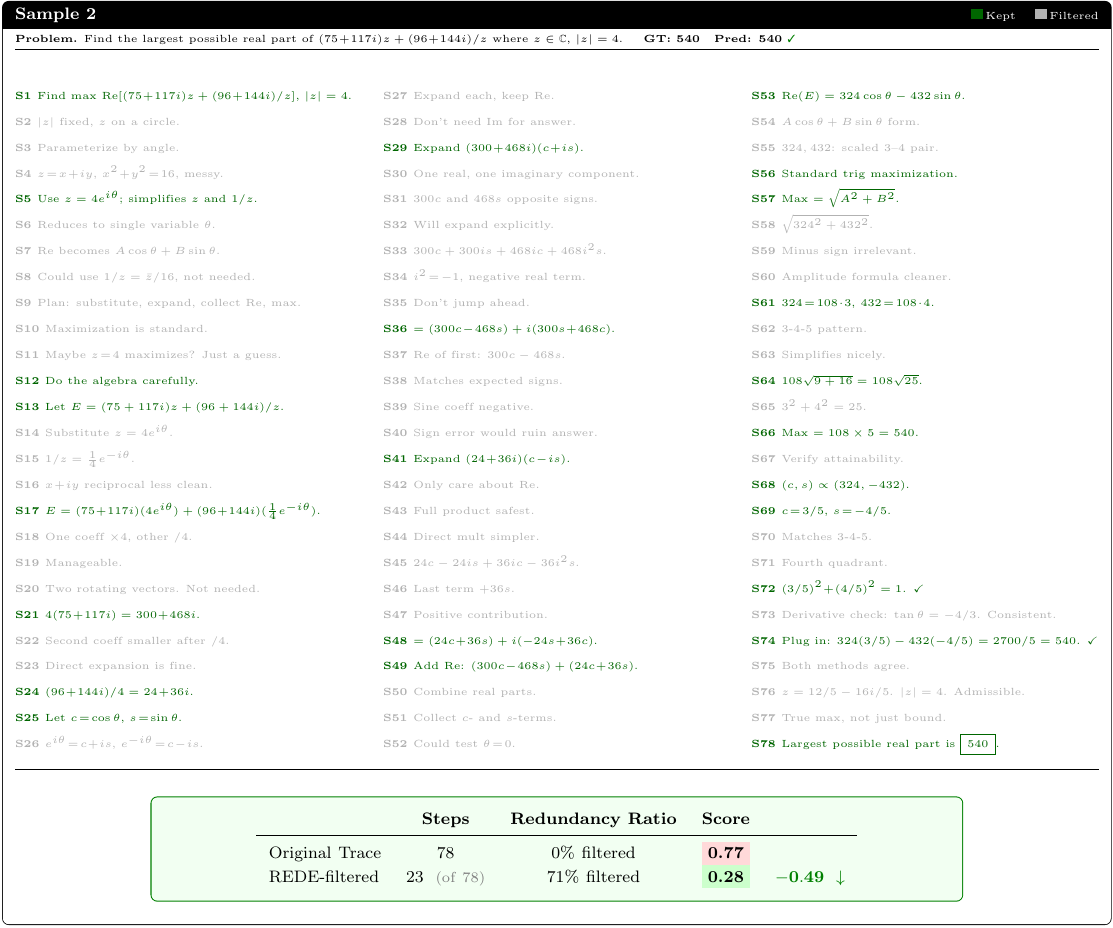}
\caption{\small \textbf{Sample~2 (correct answer).}
\model step-level filtering on a complex-number optimization problem from \textsc{Math} with Qwen3-8B.
\textcolor{green!40!black}{Green}: kept; \textcolor{black!30}{gray}: filtered.
Detection scores are from supervised probing~\citep{azaria2023internal}.
After filtering 71\% of the steps, the score decreases from 0.77 to 0.28, correctly reducing the false-positive rate.}
\label{fig:rede_sample2}
\end{figure}

\noindent \textbf{Answer embedding visualization.}
We further examine whether step-level denoising improves the separability of truthful and hallucinated traces. Figure~\ref{fig:trace_embedding} shows the answer-level embeddings on TruthfulQA with Qwen3-8B. In the original space (left), the two classes are heavily mixed. After \model filtering (right), they become more clearly separated, with only limited overlap near the decision boundary. This is consistent with Proposition~\ref{prop:noise_hurts} in Appendix~\ref{app:theory}, which suggests that removing noisy steps improves downstream discriminability by reducing within-class variance.

\begin{figure}[h]
\centering
\includegraphics[width=0.8\linewidth]{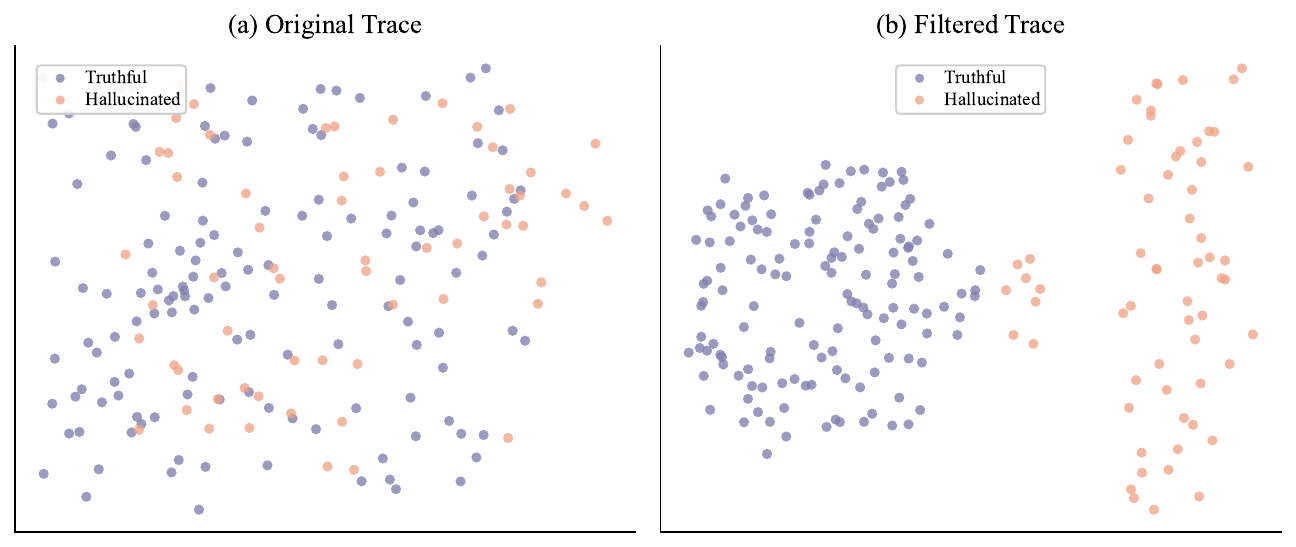}
\caption{\small Answer embeddings on TruthfulQA with Qwen3-8B via supervised probing~\citep{azaria2023internal}. Left: original full traces, where truthful and hallucinated responses overlap heavily. Right: after \model denoising, the two groups become clearly more separable.}
\label{fig:trace_embedding}
\end{figure}

\subsection{Additional results on larger LRMs}
\label{app:large_lrm_more_results}

Figure~\ref{fig:generalizability}(b) demonstrates the scalability of \model on TruthfulQA. Here we extend this evaluation to the other three benchmarks. As shown in Table~\ref{tab:large_lrm_more_results}, \model consistently outperforms two competitive LRM-specific baselines, RACE~\citep{wang2026joint} and ARS~\citep{zhang2026harnessing}, across all benchmarks on both Qwen3-32B and DeepSeek-R1-Distill-Qwen-32B. For example, compared with ARS, \model improves by 14.63\%, 4.27\%, and 8.41\% on MATH, CodeElo, and MULTIHOPQA with Qwen3-32B, respectively. Similar trends hold on DeepSeek-R1-Distill-Qwen-32B, where \model surpasses ARS by 2.99\% on MATH, 3.55\% on CodeElo, and 5.47\% on MULTIHOPQA. These results confirm that reasoning-step denoising remains effective across diverse reasoning tasks as LRMs scale up.

\begin{table}[t]
\centering
\small
\setlength{\tabcolsep}{7pt}
\caption{\small Additional results on larger LRMs across three benchmarks. All values are AUROC with supervised probing as the downstream detector. The best result in each column is highlighted in \textbf{bold}.}
\label{tab:large_lrm_more_results}
\resizebox{0.8\textwidth}{!}{%
\begin{tabular}{lcccccc}
\toprule
\multirow{2}{*}{Method} & \multicolumn{3}{c}{Qwen3-32B} & \multicolumn{3}{c}{DeepSeek-R1-Distill-Qwen-32B} \\
\cmidrule(lr){2-4} \cmidrule(lr){5-7}
& MATH & CodeElo & MULTIHOPQA & MATH & CodeElo & MULTIHOPQA \\
\midrule
RACE~\citep{wang2026joint} & 77.14 & 80.96 & 77.83 & 67.66 & 63.72 & 67.91 \\
ARS~\citep{zhang2026harnessing} & 70.09 & 78.94 & 73.26 & 82.44 & 78.63 & 70.47 \\
\model & \textbf{84.72} & \textbf{83.21} & \textbf{81.67} & \textbf{85.43} & \textbf{82.18} & \textbf{75.94} \\
\bottomrule
\end{tabular}%
}
\end{table}

\subsection{Effect of sampling strategies}
\label{app:sampling}
We evaluate hallucination detection performance when \model generates reasoning traces under different sampling strategies.
Our main results are obtained with greedy decoding, which selects the most likely next token at each step.
We additionally compare it with multinomial sampling using a temperature of 0.5.
As shown in Table~\ref{tab:ablation_sampling}, greedy decoding consistently outperforms multinomial sampling across all detectors.
The improvement is particularly notable for the Perplexity and Verbalized detectors, suggesting that more deterministic reasoning traces provide more reliable signals for downstream hallucination detection.

\begin{table}[h]
\centering
\caption{\small Hallucination detection results under different sampling strategies on TruthfulQA with Qwen3-8B. The best result in each column is highlighted in \textbf{bold}.}
\label{tab:ablation_sampling}
\small
\resizebox{0.8\linewidth}{!}{%
\begin{tabular}{lcccc}
\toprule
\textbf{Sampling Strategy}
& \textbf{CCS}~\citep{burnsdiscovering}
& \textbf{Probing}~\citep{azaria2023internal}
& \textbf{Perplexity}~\citep{ren2023outofdistribution}
& \textbf{Verbalized}~\citep{linteaching} \\
\midrule
Multinomial sampling  & 86.49 & 85.07 & 58.11 & 57.14 \\
Greedy decoding (Ours) & \textbf{87.32} & \textbf{86.44} & \textbf{63.46} & \textbf{61.10} \\
\bottomrule
\end{tabular}%
}
\end{table}

\subsection{Accuracy of step selection}
\label{app:step_acc}
We further evaluate whether \model can accurately identify noisy steps. On AIME 2024~\citep{aime24} with Qwen3-8B, we compare the steps flagged as noisy by \model against human annotations that categorize each step as informative, irrelevant, or repetitive. \model achieves an accuracy of 86.37\% in classifying steps as informative versus noisy, confirming that the learned projection reliably identifies genuinely noisy steps and that the downstream detection gains stem from accurate step filtering.

\subsection{Effect of truthfulness labeling methods}
\label{app:labeling}
In our main experiments, the correctness of model-generated answers is judged using a strong external model Qwen3-32B.
In this ablation, we show that the results remain robust under different judgment methods, including ROUGE-L, BLEURT~\citep{sellam2020bleurt}, and a different judge model, DeepSeek-R1-Distill-Qwen-32B~\citep{guo2025deepseek}.
For ROUGE-L, a generation is deemed truthful when the similarity score between the generation and the ground truth exceeds a threshold of 0.3.
For BLEURT, we use the \texttt{bleurt-base-128} variant, a learned metric built upon BERT~\citep{devlin2019bert} and augmented with diverse lexical and semantic-level supervision signals.
As shown in Table~\ref{tab:ableling}, \model consistently achieves strong detection performance across all labeling methods, confirming that our approach does not rely on a specific judge model or evaluation metric.

\begin{table}[h]
\centering
\caption{\small Hallucination detection results (AUROC, \%) under different truthfulness labeling methods on TruthfulQA and \textsc{Math}. For \model, we report results using probing as the downstream detector. The best result in each column is highlighted in \textbf{bold}.}
\label{tab:ableling}
\small
\setlength{\tabcolsep}{3.5pt}
\resizebox{0.8\columnwidth}{!}{
\begin{tabular}{lcccccc}
\toprule
\multirow{2}{*}{\textbf{Labeling Method}}
& \multicolumn{2}{c}{\textbf{ROUGE-L}}
& \multicolumn{2}{c}{\textbf{BLEURT}}
& \multicolumn{2}{c}{\textbf{DeepSeek-R1-Distill-Qwen-32B}} \\
\cmidrule(lr){2-3} \cmidrule(lr){4-5} \cmidrule(lr){6-7}
& TruthfulQA & \textsc{Math}
& TruthfulQA & \textsc{Math}
& TruthfulQA & \textsc{Math} \\
\midrule
RACE~\citep{wang2026joint}
& 85.82 & 77.34 & 86.21 & 76.95 & 65.21 & 69.02 \\

ARS (CCS)~\citep{zhang2026harnessing}
& 93.56 & 81.71 & 90.18 & 84.75 & 80.07 & 65.49 \\

\model (Probing)
& \textbf{95.08} & \textbf{83.77} & \textbf{92.77} & \textbf{85.04} & \textbf{85.13} & \textbf{78.41} \\
\bottomrule
\end{tabular}
}
\end{table}

\subsection{Results on an additional dataset}
\label{app:gsm8k}
To further evaluate the effectiveness of \model beyond the datasets used in the main paper, we conduct experiments on GSM8K~\citep{cobbe2021training}. For mathematical reasoning, we use the train split, which contains 7,473 problems.
Following the same experimental setup as in our main experiments, we reserve 25\% of the data for testing, use 100 examples for validation, and use the remaining examples for training.
As shown in Table~\ref{tab:gsm8k}, using \model-selected steps consistently outperforms using the original reasoning trace across all detectors and both model families.
The gains are particularly substantial for CCS and remain consistent for probing, perplexity, and verbalized confidence, showing that reasoning-step denoising remains effective on this additional mathematical reasoning dataset.

\begin{table}[h]
\centering
\caption{\small \textbf{Comparison of hallucination detection using the original reasoning trace versus \model-selected steps on GSM8K.} All values are percentages (AUROC). The best results are highlighted in \textbf{bold}.}
\label{tab:gsm8k}
\small
\setlength{\tabcolsep}{4pt}
\renewcommand{\arraystretch}{1.08}
\resizebox{0.8\columnwidth}{!}{%
\begin{tabular}{lcccccccc}
\toprule
\multirow{2}{*}{\textbf{Model}} &
\multicolumn{2}{c}{\textbf{CCS}~\citep{burnsdiscovering}} &
\multicolumn{2}{c}{\textbf{Probing}~\citep{azaria2023internal}} &
\multicolumn{2}{c}{\textbf{Perplexity}~\citep{ren2023outofdistribution}} &
\multicolumn{2}{c}{\textbf{Verbalized}~\citep{linteaching}} \\
\cmidrule(lr){2-3} \cmidrule(lr){4-5} \cmidrule(lr){6-7} \cmidrule(lr){8-9}
& Original & Filtered & Original & Filtered & Original & Filtered & Original & Filtered \\
\midrule
Qwen3-8B 
& 60.08 & \textbf{91.01} & 87.50 & \textbf{92.44} & 55.21 & \textbf{60.03} & 50.07 & \textbf{60.09} \\
DeepSeek-R1-Distill-Llama-8B 
& 58.31 & \textbf{84.22} & 78.61 & \textbf{84.66} & 52.31 & \textbf{58.07} & 53.81 & \textbf{62.30} \\
\bottomrule
\end{tabular}%
}
\end{table}

\subsection{Validating the attention signal}
\label{app:attention_validation}
We further investigate (i) whether final-answer attention reflects a meaningful notion of step usefulness, and (ii) the advantage of \model over simple position-based filtering strategies. Experiments are conducted on all four benchmarks with Qwen3-8B, using supervised probing as the downstream detector. All filtering operations use the same drop ratio $\zeta = 70\%$ as \model.

\noindent \textbf{Final-answer attention reflects step usefulness.} We compare three filtering strategies against the original unfiltered trace: dropping the top-$70\%$ and the bottom-$70\%$ of reasoning steps ranked by their final-answer attention score. For each strategy, we evaluate the cosine similarity between the answer embedding of the remaining trace and that of the full trace, together with the downstream detection AUROC. As shown in Table~\ref{tab:attention_drop}, dropping high-attention steps substantially degrades both metrics compared with the original trace (e.g., AUROC drops from 80.42 to 57.18 on TruthfulQA), whereas dropping low-attention steps preserves and often improves detection quality (e.g., 84.97 vs. 80.42 on TruthfulQA) while maintaining a high cosine similarity to the full-trace answer embedding. This asymmetry shows that final-answer attention reliably captures step-level usefulness for hallucination detection: high-attention steps carry information essential to the final answer, while low-attention steps are largely dispensable.

\noindent \textbf{Comparison with position-based filtering strategies.} We compare \model against two position-based baselines under the same drop ratio: Drop-Latest removes the latest $70\%$ of steps, and Drop-Earliest removes the earliest $70\%$ of steps. As shown in Table~\ref{tab:position_baselines}, Drop-Latest performs worst across all datasets, and while Drop-Earliest is stronger, it still falls well below \model (e.g., 86.44\% vs. 77.53\% on TruthfulQA). This demonstrates that our attention-based supervision captures step-level usefulness beyond simple positional patterns, and \model effectively leverages this signal to identify informative steps throughout the reasoning trace.

\begin{table}[h]
\centering
\caption{Effect of dropping steps ranked by final-answer attention score on Qwen3-8B, compared against the original unfiltered trace. ``Cos Sim'' denotes the cosine similarity between the answer embedding of the remaining trace and that of the full trace (1.000 for the original trace by definition); AUROC is computed with supervised probing~\citep{azaria2023internal}.}
\label{tab:attention_drop}
\resizebox{0.8\textwidth}{!}{%
\begin{tabular}{lcccccccc}
\toprule
 & \multicolumn{2}{c}{TruthfulQA} & \multicolumn{2}{c}{MATH} & \multicolumn{2}{c}{CodeElo} & \multicolumn{2}{c}{\textsc{MULTIHOPQA}} \\
\cmidrule(lr){2-3} \cmidrule(lr){4-5} \cmidrule(lr){6-7} \cmidrule(lr){8-9}
Method & Cos Sim & AUROC & Cos Sim & AUROC & Cos Sim & AUROC & Cos Sim & AUROC \\
\midrule
Original trace (no filtering) & 1.000 & 80.42 & 1.000 & 81.05 & 1.000 & 82.82 & 1.000 & 77.84 \\
Drop top-$70\%$ attention    & 0.427 & 57.18 & 0.361 & 52.84 & 0.483 & 61.07 & 0.394 & 50.62 \\
Drop bottom-$70\%$ attention & 0.872 & 84.97 & 0.815 & 83.62 & 0.846 & 85.11 & 0.829 & 79.48 \\
\bottomrule
\end{tabular}%
}
\end{table}

\begin{table}[h]
\centering
\caption{Comparison of \model against position-based filtering baselines, all using drop ratio $\zeta = 70\%$ on Qwen3-8B. Drop-Latest removes the latest $70\%$ of steps (retaining the earliest $30\%$), and Drop-Earliest removes the earliest $70\%$ of steps (retaining the latest $30\%$). AUROC is computed with supervised probing~\citep{azaria2023internal}. The best result in each column is highlighted in \textbf{bold}.}
\label{tab:position_baselines}
\resizebox{0.6\textwidth}{!}{%
\begin{tabular}{lcccc}
\toprule
Method & TruthfulQA & MATH & CodeElo & \textsc{MULTIHOPQA} \\
\midrule
Drop-Latest          & 68.42 & 65.71 & 70.83 & 66.94 \\
Drop-Earliest        & 77.53 & 75.68 & 79.14 & 74.29 \\
\model (Ours)        & \textbf{86.44} & \textbf{87.06} & \textbf{87.19} & \textbf{82.94} \\
\bottomrule
\end{tabular}%
}
\end{table}

\subsection{Applicability in a black-box setting}
\label{app:blackbox}

Our main setup assumes white-box access to the target LRM in order to compute final-answer attention and step embeddings. We further study whether \model can be applied when the target LRM is a black box, by using a separate white-box LRM as a proxy. Specifically, for each input generated by the black-box target, we feed the same prompt and its generated reasoning trace into the proxy to compute the final-answer attention and step embeddings, and then apply \model to obtain the filtered trace. The filtered trace is passed back to the target model for verbalized certainty estimation, which does not require internal access. We use Qwen3-32B~\citep{yang2025qwen3} as the proxy and evaluate on two target models: DeepSeek-R1-Distill-Llama-8B~\citep{guo2025deepseek} and Qwen3-8B~\citep{yang2025qwen3}.

As shown in Table~\ref{tab:blackbox}, the proxy-based variant consistently improves detection over the original trace across all four benchmarks and both target models, improving over the original by 5.92\% on MultiHopQA with DeepSeek and 8.04\% on TruthfulQA with Qwen3-8B, recovering most of the gain obtained with white-box \model. This suggests that \model remains effective in black-box settings where a suitable proxy LRM is available.

\begin{table}[h]
\centering
\caption{Applicability of \model in a black-box setting with Qwen3-32B as the white-box proxy. Detection uses verbalized certainty, which does not require internal access to the target. ``\model (proxy)'' uses Qwen3-32B for attention scoring and step embedding, while ``\model (white-box)'' has direct access to the target and is shown as an upper-bound reference. All values are AUROC (\%). The best result among black-box methods is highlighted in \textbf{bold}.}
\label{tab:blackbox}
\resizebox{\textwidth}{!}{%
\begin{tabular}{lcccccccc}
\toprule
 & \multicolumn{4}{c}{DeepSeek-R1-Distill-Llama-8B} & \multicolumn{4}{c}{Qwen3-8B} \\
\cmidrule(lr){2-5} \cmidrule(lr){6-9}
Method & TruthfulQA & MATH & CodeElo & \textsc{MULTIHOPQA} & TruthfulQA & MATH & CodeElo & \textsc{MULTIHOPQA} \\
\midrule
Original trace     & 56.61 & 69.71 & 79.13 & 50.87 & 50.37 & 57.19 & 78.89 & 63.75 \\
\model (proxy)     & \textbf{59.85} & \textbf{74.26} & \textbf{82.07} & \textbf{56.79} & \textbf{58.41} & \textbf{62.83} & \textbf{82.14} & \textbf{67.62} \\
\midrule
\rowcolor{gray!15}
\model (white-box) & 61.94 & 76.13 & 84.49 & 59.56 & 61.10 & 65.13 & 83.89 & 69.50 \\
\bottomrule
\end{tabular}%
}
\end{table}

\subsection{Impact of step filtering on downstream task accuracy}
\label{appendix:filtering_accuracy}

Our main experiments measure hallucination detection performance (AUROC). Here we examine a complementary aspect: whether the filtered reasoning trace preserves the LRM's ability to produce correct answers. This evaluation targets \emph{task-level accuracy}, which is distinct from hallucination detection. Concretely, for each reasoning trace generated by Qwen3-8B on \textsc{MATH}, we remove the steps identified as noisy by \model and regenerate the final answer conditioned on the retained steps. Task accuracy is evaluated using the same judge model as in our main experiments. After filtering, the reasoning traces are reduced by approximately 70\% in length. Despite this substantial reduction, the filtered traces achieve an accuracy of 80.00\%, compared to 81.43\% for the original traces, maintaining a comparable level of task performance. This confirms that the vast majority of discarded steps are uninformative for solving the task, and that \model's filtering preserves the reasoning capability of the LRM even with a significantly compressed trace.

\subsection{Results with additional metrics}
\label{app:metrics}
Our main experiments use AUROC as the primary evaluation metric, as it measures discrimination ability across all decision thresholds.
To provide a more complete evaluation, we additionally report Accuracy and F1 score.
As shown in Table~\ref{tab:ablation_metrics}, using \model-Filtered steps consistently improves over using the Original reasoning trace across all three metrics for both CCS and probing.
Notably, beyond the substantial gains in AUROC, \model also achieves strong improvements in ACC and F1, further confirming the effectiveness of reasoning-step denoising.

\begin{table}[h]
\centering
\caption{\small \textbf{Hallucination detection results under additional evaluation metrics on TruthfulQA with Qwen3-8B.} We compare using the Original reasoning trace versus \model-Filtered steps. The best result in each group is highlighted in \textbf{bold}.}
\label{tab:ablation_metrics}
\small
\setlength{\tabcolsep}{5pt}
\renewcommand{\arraystretch}{1.08}
\resizebox{0.5\columnwidth}{!}{%
\begin{tabular}{lcccccc}
\toprule
\multirow{2}{*}{\textbf{Setting}}
& \multicolumn{3}{c}{\textbf{CCS}~\citep{burnsdiscovering}}
& \multicolumn{3}{c}{\textbf{Probing}~\citep{azaria2023internal}} \\
\cmidrule(lr){2-4} \cmidrule(lr){5-7}
& AUROC & ACC & F1 & AUROC & ACC & F1 \\
\midrule
Original   
& 68.63 & 58.91 & 65.03 
& 80.42 & 66.57 & 63.08 \\
Filtered 
& \textbf{87.32} & \textbf{70.47} & \textbf{73.95} 
& \textbf{86.44} & \textbf{71.09} & \textbf{68.82} \\
\bottomrule
\end{tabular}%
}
\end{table}

\section{Broader impact and limitations}
\label{sec:imapct_and_limitation}
\textbf{Broader Impact.} Large reasoning models (LRMs) are increasingly used in applications where users may rely on the factuality of generated answers. Our work proposes \model, a lightweight framework that denoises reasoning traces for hallucination detection, and we expect it to have a positive impact on the reliability of LRM-based systems, such as question answering, educational assistants, and coding support. By filtering noisy reasoning steps before downstream detection, \model can help flag potentially unreliable outputs prior to user-facing deployment. At the same time, \model is not a guarantee of truthfulness. Like other detection methods, it may still make incorrect predictions, which could either over-trust hallucinated answers or suppress correct but unconventional reasoning traces. Therefore, we view \model as a reliability-enhancing component rather than a substitute for human oversight, especially in high-stakes settings.

\noindent \textbf{Limitations.} \model currently operates as a filtering module on reasoning traces. It does not intervene during the generation process itself. A promising direction for future work is to integrate the denoising signal into the LRM, for example by incorporating the step-level informativeness objective into model training so that the LRM learns to suppress noisy steps during generation.

\newpage 
\section{Theoretical analysis}
\label{app:theory}

In this section, we provide a theoretical perspective on why \model improves hallucination detection.
Our analysis follows the practical inference rule in Section~\ref{sec:inference}, where steps are filtered by their intra-trace $\kappa$-NN distances in the shaped space.
At a high level, we formalize the following story:
(1) the oracle informative trace provides a more discriminative representation for hallucination detection than the full trace containing noisy steps;
(2) after representation shaping, informative steps become locally dense while noisy steps remain locally sparse, which separates their $\kappa$-NN distance scores;
(3) under rank-based removal of the top-$\zeta\%$ largest $\kappa$-NN distances, the resulting filtering errors are small; and
(4) small filtering errors imply that the hallucination detection risk on the filtered trace is close to that on the oracle informative trace.

To keep the analysis analytically tractable, we introduce in Assumption~\ref{assump:label_uninform} an $\varepsilon$-approximate form of conditional label-uninformativeness of the noisy perturbation, parameterized by two scalar quantities $(\varepsilon_\mu,\varepsilon_c)$. Proposition~\ref{prop:noise_hurts} is correspondingly stated as an additive perturbation bound whose slack decays continuously with $(\varepsilon_\mu,\varepsilon_c)$ and vanishes in the strict-independence limit, so that our bounds remain meaningful across a broad range of noisy-step behaviors.

\subsection{Setup and definitions}

Consider a test-time prompt-answer pair $(\bar{\mathbf{p}}, \bar{\mathbf{a}})$ with reasoning trace
$
\bar{\mathbf{C}}=\{\bar{\mathbf{c}}_1,\ldots,\bar{\mathbf{c}}_{\bar{K}}\},
$
following the test-time notation in Section~\ref{sec:inference}.
Let $\bar{\mathbf{C}}^{\mathrm{info}} \subseteq \bar{\mathbf{C}}$ denote the subset of informative reasoning steps, and let
$
\bar{\mathbf{C}}^{\mathrm{noise}} = \bar{\mathbf{C}}\setminus \bar{\mathbf{C}}^{\mathrm{info}}
$
denote the subset of noisy steps.
At test time, \model produces a filtered trace $\bar{\mathbf{C}}^{\mathrm{fil}} \subseteq \bar{\mathbf{C}}$ by removing steps that are identified as noisy in the learned representation space.

Let $\boldsymbol{\psi}(\bar{\mathbf{c}}) \in \mathbb{R}^{d_{\Phi}}$ denote the representation of reasoning step $\bar{\mathbf{c}}$ used by the downstream hallucination detector, and let the trace-level representation be
\begin{equation}
\boldsymbol{\Phi}(\bar{\mathbf{p}}, \bar{\mathbf{C}}, \bar{\mathbf{a}})
=
\boldsymbol{\Phi}_0(\bar{\mathbf{p}}, \bar{\mathbf{a}})
+
\frac{1}{\bar{K}}\sum_{\bar{\mathbf{c}}\in \bar{\mathbf{C}}} \boldsymbol{\psi}(\bar{\mathbf{c}}),
\label{eq:theory_trace_repr}
\end{equation}
where $\boldsymbol{\Phi}_0(\bar{\mathbf{p}}, \bar{\mathbf{a}})$ captures the prompt-answer component independent of the reasoning steps.
This form covers common downstream detectors that operate on pooled hidden states or other additive trace summaries.
We normalize by the original trace length $\bar{K}$ so that filtered and oracle traces are compared under a common scale.

Let $g:\mathbb{R}^{d_{\Phi}}\to\mathbb{R}$ be a downstream detector, and let $\ell(g(\boldsymbol{\Phi}),y)$ be the detection loss for truthfulness label $y\in\{0,1\}$.
We define the population risk on a trace subset $\bar{\mathbf{C}}' \subseteq \bar{\mathbf{C}}$ as
\begin{equation}
\mathcal{R}(g;\bar{\mathbf{C}}')
=
\mathbb{E}\big[\ell(g(\boldsymbol{\Phi}(\bar{\mathbf{p}},\bar{\mathbf{C}}',\bar{\mathbf{a}})),y)\big],
\label{eq:theory_risk}
\end{equation}
where the expectation is taken over the data distribution.

To quantify the quality of filtering, we define two error rates:
\begin{align}
\alpha
&:=
\mathbb{E}\!\left[
\frac{\left|\bar{\mathbf{C}}^{\mathrm{fil}}\cap \bar{\mathbf{C}}^{\mathrm{noise}}\right|}{\bar{K}}
\right],
\label{eq:theory_alpha}
\\[4pt]
\beta
&:=
\mathbb{E}\!\left[
\frac{\left|\bar{\mathbf{C}}^{\mathrm{info}}\setminus \bar{\mathbf{C}}^{\mathrm{fil}}\right|}{\bar{K}}
\right].
\label{eq:theory_beta}
\end{align}
Here, $\alpha$ measures the fraction of noisy steps that are erroneously retained, while $\beta$ measures the fraction of informative steps that are mistakenly removed; both expectations are taken over the data distribution, including the randomness of the trace length $\bar{K}$.

Let $\bar{\mathbf{e}}_i$ denote the pre-projection embedding of step $\bar{\mathbf{c}}_i$, and let
\begin{equation}
\bar{\mathbf{z}}_i = f_\phi(\bar{\mathbf{e}}_i) \in \mathbb{R}^{d'}
\label{eq:theory_projected_step}
\end{equation}
be the projected step embedding produced by the learned projection head $f_\phi$.
Consistent with the practical inference rule in Section~\ref{sec:inference} and the cosine-similarity-based training objective in Eq.~\ref{eq:total_loss}, throughout this appendix we instantiate the distance over the projected space as the cosine distance
\begin{equation}
D(\mathbf{u},\mathbf{v})
:=
1 - \frac{\mathbf{u}^\top\mathbf{v}}{\|\mathbf{u}\|_2\,\|\mathbf{v}\|_2},
\label{eq:theory_cosine}
\end{equation}
matching the score $S_i$ used at inference in Section~\ref{sec:inference}.
To match this practical inference rule, we define the step-level filtering score by the intra-trace $\kappa$-NN cosine distance
\begin{equation}
S_i
:=
D(\bar{\mathbf{z}}_i,\bar{\mathbf{z}}_i^{(\kappa)}),
\label{eq:theory_knn_score}
\end{equation}
where $\bar{\mathbf{z}}_i^{(\kappa)}$ denotes the $\kappa$-th nearest neighbor of $\bar{\mathbf{z}}_i$ among $\{\bar{\mathbf{z}}_j\}_{j\neq i}$ within the same trace under $D$.
We use $\kappa$ here to denote the nearest-neighbor parameter and reserve $\bar{K}$ for the number of steps in the test trace; correspondingly, the number of nearest neighbors denoted by $k$ in Section~\ref{sec:inference} is written as $\kappa$ throughout this appendix.

Following the practical rank-based rule, let
\begin{equation}
M := \lceil \zeta \bar{K} \rceil
\label{eq:theory_removed_steps}
\end{equation}
denote the number of removed steps, where $\zeta\in(0,1)$ is the removal ratio.
The filtered trace $\bar{\mathbf{C}}^{\mathrm{fil}}$ is obtained by removing the $M$ steps with the largest scores $\{S_i\}_{i=1}^{\bar{K}}$.

For a radius $r>0$, define the local neighbor count around step $i$ by
\begin{equation}
N_i(r)
:=
\sum_{j\neq i}
\mathbf{1}\!\left\{
D(\bar{\mathbf{z}}_j,\bar{\mathbf{z}}_i) \le r
\right\}.
\label{eq:theory_neighbor_count}
\end{equation}
Then the $\kappa$-NN score admits the equivalent characterization
\begin{equation}
S_i \le r
\iff
N_i(r)\ge \kappa,
\qquad
S_i > r
\iff
N_i(r)<\kappa.
\label{eq:knn_count_equiv}
\end{equation}

\subsection{Oracle informative traces are more discriminative}
\label{subsec:oracle_discriminative}

We first formalize why noisy reasoning steps are harmful for hallucination detection.
The key intuition is that noisy steps act as approximately label-uninformative perturbations to the trace representation: they do not substantially improve the class-conditional mean separation relevant for truthfulness prediction, but they increase the within-class variability, thereby reducing class separability.

Let
\[
\boldsymbol{\Phi}^{\star} := \boldsymbol{\Phi}(\bar{\mathbf{p}},\bar{\mathbf{C}}^{\mathrm{info}},\bar{\mathbf{a}}),
\qquad
\boldsymbol{\delta} := \frac{1}{\bar{K}}\sum_{\bar{\mathbf{c}}\in \bar{\mathbf{C}}^{\mathrm{noise}}}\boldsymbol{\psi}(\bar{\mathbf{c}}),
\]
so that the full-trace representation satisfies
\begin{equation}
\boldsymbol{\Phi}(\bar{\mathbf{p}},\bar{\mathbf{C}},\bar{\mathbf{a}}) = \boldsymbol{\Phi}^{\star} + \boldsymbol{\delta}.
\label{eq:phi_decompose}
\end{equation}

For each label $y\in\{0,1\}$, define the class-conditional means and covariances
\[
\boldsymbol{\mu}_y^\star := \mathbb{E}[\boldsymbol{\Phi}^{\star} \mid y],
\qquad
\boldsymbol{\Sigma}_y^\star := \mathrm{Cov}(\boldsymbol{\Phi}^{\star} \mid y),
\]
and
\[
\boldsymbol{\mu}_y := \mathbb{E}[\boldsymbol{\Phi} \mid y],
\qquad
\boldsymbol{\Sigma}_y := \mathrm{Cov}(\boldsymbol{\Phi} \mid y).
\]
Let the pooled within-class covariance matrices be
\[
\boldsymbol{\Sigma}_w^\star := \pi_0 \boldsymbol{\Sigma}_0^\star + \pi_1 \boldsymbol{\Sigma}_1^\star,
\qquad
\boldsymbol{\Sigma}_w := \pi_0 \boldsymbol{\Sigma}_0 + \pi_1 \boldsymbol{\Sigma}_1,
\]
where $\pi_y := \mathbb{P}(Y=y)$.

\begin{assumption}[$\varepsilon$-approximate label-uninformative perturbation]
\label{assump:label_uninform}
There exist constants $\varepsilon_\mu, \varepsilon_c \ge 0$ such that for each $y\in\{0,1\}$:
\begin{enumerate}
    \item[(i)] (Approximate mean-zero)\quad $\big\|\mathbb{E}[\boldsymbol{\delta} \mid y]\big\|_2 \le \varepsilon_\mu$;
    \item[(ii)] (Bounded cross-covariance)\quad $\big\|\mathrm{Cov}(\boldsymbol{\Phi}^{\star},\boldsymbol{\delta} \mid y)\big\|_{\mathrm{op}} \le \varepsilon_c$;
    \item[(iii)] $\mathrm{Cov}(\boldsymbol{\delta} \mid y) = \boldsymbol{\Sigma}_\delta(y) \succeq \mathbf{0}$, $\boldsymbol{\Sigma}_w^\star \succ \mathbf{0}$, and the cross-covariance slack is moderate relative to the informative covariance, i.e., $\varepsilon_c \le \lambda_{\min}(\boldsymbol{\Sigma}_w^\star)/4$.
\end{enumerate}
The strict-independence case $\boldsymbol{\delta}\perp\boldsymbol{\Phi}^{\star}\mid y$ together with $\mathbb{E}[\boldsymbol{\delta}\mid y]=\mathbf{0}$ is recovered as the special limit $\varepsilon_\mu=\varepsilon_c=0$.
\end{assumption}

\begin{remark}[Scope and tightness of Assumption~\ref{assump:label_uninform}]
\label{remark:assumption_limitation}
Assumption~\ref{assump:label_uninform} captures a broad class of noisy-step behaviors through two nonnegative slack parameters $(\varepsilon_\mu,\varepsilon_c)$, which control, respectively, the class-conditional mean of the noisy perturbation $\boldsymbol{\delta}$ and its second-order correlation with the informative component $\boldsymbol{\Phi}^{\star}$. The mild magnitude condition on $\varepsilon_c$ in~(iii) ensures that the perturbed within-class covariance $\boldsymbol{\Sigma}_w$ remains well-conditioned relative to its oracle counterpart $\boldsymbol{\Sigma}_w^\star$, which is needed for the matrix-inverse perturbation bound used in the proof of Proposition~\ref{prop:noise_hurts}.
The strict-independence case corresponds to the limit $\varepsilon_\mu=\varepsilon_c=0$, while general $(\varepsilon_\mu,\varepsilon_c)$ allow arbitrary nonzero slack within the stated bounds.
Proposition~\ref{prop:noise_hurts} below is an additive perturbation bound whose slack $\Delta(\varepsilon_\mu,\varepsilon_c)$ decays continuously with these parameters, so the tightness of the bound scales smoothly with how close the noisy perturbation is to being label-uninformative.
\end{remark}

We quantify class separability of a trace representation by the Fisher discriminability
\begin{equation}
\mathcal{J}(\bar{\mathbf{C}}')
:=
\big(\boldsymbol{\mu}_1(\bar{\mathbf{C}}')-\boldsymbol{\mu}_0(\bar{\mathbf{C}}')\big)^\top
\boldsymbol{\Sigma}_w(\bar{\mathbf{C}}')^{-1}
\big(\boldsymbol{\mu}_1(\bar{\mathbf{C}}')-\boldsymbol{\mu}_0(\bar{\mathbf{C}}')\big),
\label{eq:fisher}
\end{equation}
where
\[
\boldsymbol{\mu}_y(\bar{\mathbf{C}}')
:=
\mathbb{E}[\boldsymbol{\Phi}(\bar{\mathbf{p}},\bar{\mathbf{C}}',\bar{\mathbf{a}}) \mid y],
\qquad
\boldsymbol{\Sigma}_w(\bar{\mathbf{C}}')
:=
\pi_0 \boldsymbol{\Sigma}_0(\bar{\mathbf{C}}') + \pi_1 \boldsymbol{\Sigma}_1(\bar{\mathbf{C}}').
\]

\begin{proposition}[Noisy steps reduce trace discriminability, approximate version]
\label{prop:noise_hurts}
Under Assumption~\ref{assump:label_uninform}, the full-trace Fisher discriminability is bounded by the oracle one up to a slack controlled by $(\varepsilon_\mu,\varepsilon_c)$:
\begin{equation}
\mathcal{J}(\bar{\mathbf{C}})
\le
\mathcal{J}(\bar{\mathbf{C}}^{\mathrm{info}})
+
\Delta(\varepsilon_\mu,\varepsilon_c),
\label{eq:noise_hurts_approx}
\end{equation}
where
\begin{equation}
\Delta(\varepsilon_\mu,\varepsilon_c)
:=
\frac{4\,\varepsilon_\mu\,\|\boldsymbol{\mu}_1^\star-\boldsymbol{\mu}_0^\star\|_2 + 4\,\varepsilon_\mu^2}{\lambda_{\min}(\boldsymbol{\Sigma}_w^\star)}
+
\frac{4\,\varepsilon_c\,\big(\|\boldsymbol{\mu}_1^\star-\boldsymbol{\mu}_0^\star\|_2+2\varepsilon_\mu\big)^{2}}{\lambda_{\min}(\boldsymbol{\Sigma}_w^\star)^{2}}.
\label{eq:slack_def}
\end{equation}
In particular, $\Delta(\varepsilon_\mu,\varepsilon_c)\to 0$ as $\varepsilon_\mu,\varepsilon_c \to 0$, recovering the exact monotonicity
$\mathcal{J}(\bar{\mathbf{C}}) \le \mathcal{J}(\bar{\mathbf{C}}^{\mathrm{info}})$ in the strict-independence limit.
\end{proposition}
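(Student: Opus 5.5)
Writing $\mathbf{m}^\star:=\boldsymbol{\mu}_1^\star-\boldsymbol{\mu}_0^\star$ and $\mathbf{m}:=\boldsymbol{\mu}_1-\boldsymbol{\mu}_0$, the plan is to compare $\mathcal{J}(\bar{\mathbf{C}})=\mathbf{m}^\top\boldsymbol{\Sigma}_w^{-1}\mathbf{m}$ with $\mathcal{J}(\bar{\mathbf{C}}^{\mathrm{info}})=\mathbf{m}^{\star\top}(\boldsymbol{\Sigma}_w^\star)^{-1}\mathbf{m}^\star$. The decomposition \eqref{eq:phi_decompose} gives two exact identities:
\[
\mathbf{m}=\mathbf{m}^\star+\mathbf{d},\quad \mathbf{d}:=\mathbb{E}[\boldsymbol{\delta}\mid 1]-\mathbb{E}[\boldsymbol{\delta}\mid 0],\qquad \boldsymbol{\Sigma}_w=\boldsymbol{\Sigma}_w^\star+\bar{\boldsymbol{\Sigma}}_\delta+\mathbf{E},
\]
where $\bar{\boldsymbol{\Sigma}}_\delta:=\pi_0\boldsymbol{\Sigma}_\delta(0)+\pi_1\boldsymbol{\Sigma}_\delta(1)\succeq\mathbf{0}$ and $\mathbf{E}:=\sum_y\pi_y\big(\mathrm{Cov}(\boldsymbol{\Phi}^\star,\boldsymbol{\delta}\mid y)+\mathrm{Cov}(\boldsymbol{\delta},\boldsymbol{\Phi}^\star\mid y)\big)$ is symmetric. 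Assumption~\ref{assump:label_uninform}(i) gives $\|\mathbf{d}\|_2\le 2\varepsilon_\mu$, and (ii) gives $\|\mathbf{E}\|_{\mathrm{op}}\le 2\varepsilon_c$.

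First I drop the PSD noise covariance. Set $\mathbf{A}:=\boldsymbol{\Sigma}_w^\star+\mathbf{E}$. Since $\|\mathbf{E}\|_{\mathrm{op}}\le 2\varepsilon_c\le\lambda_{\min}(\boldsymbol{\Sigma}_w^\star)/2$ by (iii), we have $\lambda_{\min}(\mathbf{A})\ge\lambda_{\min}(\boldsymbol{\Sigma}_w^\star)/2>0$. Then $\boldsymbol{\Sigma}_w=\mathbf{A}+\bar{\boldsymbol{\Sigma}}_\delta\succeq\mathbf{A}\succ\mathbf{0}$, and operator monotonicity of the inverse gives $\boldsymbol{\Sigma}_w^{-1}\preceq\mathbf{A}^{-1}$. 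Hence $\mathcal{J}(\bar{\mathbf{C}})\le\mathbf{m}^\top\mathbf{A}^{-1}\mathbf{m}$. This is the step that recovers exact monotonicity when $\varepsilon_\mu=\varepsilon_c=0$.

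Second, I handle the cross-covariance with the resolvent identity $\mathbf{A}^{-1}-(\boldsymbol{\Sigma}_w^\star)^{-1}=-\mathbf{A}^{-1}\mathbf{E}(\boldsymbol{\Sigma}_w^\star)^{-1}$. Its operator norm is at most $2\varepsilon_c\cdot\frac{2}{\lambda_{\min}}\cdot\frac{1}{\lambda_{\min}}=4\varepsilon_c/\lambda_{\min}(\boldsymbol{\Sigma}_w^\star)^2$. Combined with $\|\mathbf{m}\|_2\le\|\mathbf{m}^\star\|_2+2\varepsilon_\mu$, this gives
\[
\mathbf{m}^\top\mathbf{A}^{-1}\mathbf{m}\le\mathbf{m}^\top(\boldsymbol{\Sigma}_w^\star)^{-1}\mathbf{m}+\frac{4\varepsilon_c(\|\mathbf{m}^\star\|_2+2\varepsilon_\mu)^2}{\lambda_{\min}(\boldsymbol{\Sigma}_w^\star)^2},
\]
which is exactly the second term of $\Delta$.

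Third, I handle the mean shift by expanding
\[
\mathbf{m}^\top(\boldsymbol{\Sigma}_w^\star)^{-1}\mathbf{m}=\mathcal{J}(\bar{\mathbf{C}}^{\mathrm{info}})+2\mathbf{d}^\top(\boldsymbol{\Sigma}_w^\star)^{-1}\mathbf{m}^\star+\mathbf{d}^\top(\boldsymbol{\Sigma}_w^\star)^{-1}\mathbf{d}.
\]
Cauchy--Schwarz bounds the cross term by $4\varepsilon_\mu\|\mathbf{m}^\star\|_2/\lambda_{\min}$, and the last term is at most $4\varepsilon_\mu^2/\lambda_{\min}$. This reproduces the first term of $\Delta$ exactly, and chaining the three inequalities gives \eqref{eq:noise_hurts_approx}. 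The limit statement is immediate because $\Delta$ is a polynomial in $(\varepsilon_\mu,\varepsilon_c)$ with no constant term.

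The main obstacle is the first two steps: making sure $\mathbf{A}$ stays positive definite, so that both $\boldsymbol{\Sigma}_w^{-1}\preceq\mathbf{A}^{-1}$ and the factor-$2$ bound on $\|\mathbf{A}^{-1}\|_{\mathrm{op}}$ hold. This is where the condition $\varepsilon_c\le\lambda_{\min}/4$ and the factor $2$ in $\|\mathbf{E}\|_{\mathrm{op}}\le 2\varepsilon_c$ (coming from the two transposed cross terms) have to combine to give exactly the constant $4$ in the second term of $\Delta$.
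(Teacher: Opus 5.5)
Your proof is correct and gives exactly the paper's $\Delta(\varepsilon_\mu,\varepsilon_c)$. The mean-shift step is identical to the paper's; the difference is that you handle the two covariance perturbations in the opposite order.

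The paper sets $\mathbf{A}:=\boldsymbol{\Sigma}_w^\star+\bar{\boldsymbol{\Sigma}}_\delta$ and treats $\boldsymbol{\Sigma}_w=\mathbf{A}+\bar{\mathbf{E}}$ as a perturbation of $\mathbf{A}$. It bounds $\|\boldsymbol{\Sigma}_w^{-1}-\mathbf{A}^{-1}\|_{\mathrm{op}}$ with a cited inverse-perturbation inequality whose denominator is $\lambda_{\min}(\mathbf{A})\bigl(\lambda_{\min}(\mathbf{A})-\|\bar{\mathbf{E}}\|_{\mathrm{op}}\bigr)$. Only afterwards does it discard the noise covariance, via $\mathbf{A}^{-1}\preceq(\boldsymbol{\Sigma}_w^\star)^{-1}$.

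You instead absorb the cross-covariance first into $\mathbf{A}=\boldsymbol{\Sigma}_w^\star+\mathbf{E}$. You then discard $\bar{\boldsymbol{\Sigma}}_\delta$ at once through $\boldsymbol{\Sigma}_w\succeq\mathbf{A}\succ\mathbf{0}$, and control $\mathbf{A}^{-1}-(\boldsymbol{\Sigma}_w^\star)^{-1}$ with the exact resolvent identity plus Weyl's inequality.

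What each ordering buys:
\begin{itemize}
\item Your route is more self-contained. It needs no external perturbation lemma, and invertibility of $\boldsymbol{\Sigma}_w$ follows directly from the Loewner ordering.
\item The paper's route measures the cross-covariance slack against the noise-inflated matrix $\boldsymbol{\Sigma}_w^\star+\bar{\boldsymbol{\Sigma}}_\delta$, and only at the end lower-bounds its smallest eigenvalue by $\lambda_{\min}(\boldsymbol{\Sigma}_w^\star)$. If one kept $\lambda_{\min}(\mathbf{A})$ there, the $\varepsilon_c$-term would tighten when the noisy steps carry substantial variance. The argument would also need only $2\varepsilon_c<\lambda_{\min}(\boldsymbol{\Sigma}_w^\star+\bar{\boldsymbol{\Sigma}}_\delta)$ rather than Assumption~\ref{assump:label_uninform}(iii).
\item Your ordering gives up that refinement, because your monotonicity step requires $\boldsymbol{\Sigma}_w^\star+\mathbf{E}$ itself to be positive definite.
\end{itemize}
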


\begin{proof}
By Eq.~\ref{eq:phi_decompose} and Assumption~\ref{assump:label_uninform}(i),
\[
\boldsymbol{\mu}_y
=
\boldsymbol{\mu}_y^\star + \mathbb{E}[\boldsymbol{\delta} \mid y]
=
\boldsymbol{\mu}_y^\star + \boldsymbol{\eta}_y,
\qquad
\|\boldsymbol{\eta}_y\|_2\le \varepsilon_\mu,
\]
so that
\[
\|\boldsymbol{\mu}_1-\boldsymbol{\mu}_0\|_2
\le
\|\boldsymbol{\mu}_1^\star-\boldsymbol{\mu}_0^\star\|_2 + 2\varepsilon_\mu.
\]
For the covariances, write
\[
\boldsymbol{\Sigma}_y
=
\boldsymbol{\Sigma}_y^\star + \boldsymbol{\Sigma}_\delta(y) + \mathbf{E}_y,
\qquad
\mathbf{E}_y := \mathrm{Cov}(\boldsymbol{\Phi}^{\star},\boldsymbol{\delta}\mid y) + \mathrm{Cov}(\boldsymbol{\delta},\boldsymbol{\Phi}^{\star}\mid y),
\]
with $\|\mathbf{E}_y\|_{\mathrm{op}}\le 2\varepsilon_c$ by Assumption~\ref{assump:label_uninform}(ii).
Therefore
\[
\boldsymbol{\Sigma}_w
=
\boldsymbol{\Sigma}_w^\star + \bar{\boldsymbol{\Sigma}}_\delta + \bar{\mathbf{E}},
\qquad
\bar{\boldsymbol{\Sigma}}_\delta := \pi_0\boldsymbol{\Sigma}_\delta(0)+\pi_1\boldsymbol{\Sigma}_\delta(1)\succeq \mathbf{0},
\quad
\|\bar{\mathbf{E}}\|_{\mathrm{op}}\le 2\varepsilon_c.
\]
Let $\mathbf{A}:=\boldsymbol{\Sigma}_w^\star+\bar{\boldsymbol{\Sigma}}_\delta \succeq \boldsymbol{\Sigma}_w^\star \succ \mathbf{0}$, so $\boldsymbol{\Sigma}_w=\mathbf{A}+\bar{\mathbf{E}}$.
By operator monotonicity of inversion on the positive-definite cone, $\mathbf{A}^{-1}\preceq (\boldsymbol{\Sigma}_w^\star)^{-1}$.
Using the standard perturbation bound for matrix inversion~\citep{horn2012matrix},
\[
\big\|\boldsymbol{\Sigma}_w^{-1}-\mathbf{A}^{-1}\big\|_{\mathrm{op}}
\le
\frac{\|\bar{\mathbf{E}}\|_{\mathrm{op}}}{\lambda_{\min}(\mathbf{A})\,(\lambda_{\min}(\mathbf{A})-\|\bar{\mathbf{E}}\|_{\mathrm{op}})}.
\]
Since $\lambda_{\min}(\mathbf{A}) \ge \lambda_{\min}(\boldsymbol{\Sigma}_w^\star)$ and $\|\bar{\mathbf{E}}\|_{\mathrm{op}}\le 2\varepsilon_c \le \lambda_{\min}(\boldsymbol{\Sigma}_w^\star)/2$ by Assumption~\ref{assump:label_uninform}(iii), we have
$\lambda_{\min}(\mathbf{A})-\|\bar{\mathbf{E}}\|_{\mathrm{op}}\ge \lambda_{\min}(\boldsymbol{\Sigma}_w^\star)/2$, and therefore
\[
\big\|\boldsymbol{\Sigma}_w^{-1}-\mathbf{A}^{-1}\big\|_{\mathrm{op}}
\le
\frac{2\varepsilon_c}{\lambda_{\min}(\boldsymbol{\Sigma}_w^\star)\cdot\lambda_{\min}(\boldsymbol{\Sigma}_w^\star)/2}
=
\frac{4\varepsilon_c}{\lambda_{\min}(\boldsymbol{\Sigma}_w^\star)^{2}}.
\]
Combining the two bounds,
\begin{align*}
\mathcal{J}(\bar{\mathbf{C}})
&=
(\boldsymbol{\mu}_1-\boldsymbol{\mu}_0)^\top\boldsymbol{\Sigma}_w^{-1}(\boldsymbol{\mu}_1-\boldsymbol{\mu}_0)\\
&\le
(\boldsymbol{\mu}_1-\boldsymbol{\mu}_0)^\top\mathbf{A}^{-1}(\boldsymbol{\mu}_1-\boldsymbol{\mu}_0)
+
\big\|\boldsymbol{\Sigma}_w^{-1}-\mathbf{A}^{-1}\big\|_{\mathrm{op}}\,
\|\boldsymbol{\mu}_1-\boldsymbol{\mu}_0\|_2^{2}\\
&\le
(\boldsymbol{\mu}_1-\boldsymbol{\mu}_0)^\top(\boldsymbol{\Sigma}_w^\star)^{-1}(\boldsymbol{\mu}_1-\boldsymbol{\mu}_0)
+
\frac{4\varepsilon_c\,(\|\boldsymbol{\mu}_1^\star-\boldsymbol{\mu}_0^\star\|_2+2\varepsilon_\mu)^{2}}{\lambda_{\min}(\boldsymbol{\Sigma}_w^\star)^{2}}.
\end{align*}
For the first term, writing $\boldsymbol{\mu}_1-\boldsymbol{\mu}_0 = (\boldsymbol{\mu}_1^\star-\boldsymbol{\mu}_0^\star)+(\boldsymbol{\eta}_1-\boldsymbol{\eta}_0)$ with $\|\boldsymbol{\eta}_1-\boldsymbol{\eta}_0\|_2\le 2\varepsilon_\mu$, expanding the quadratic form, and using $\|(\boldsymbol{\Sigma}_w^\star)^{-1}\|_{\mathrm{op}}\le 1/\lambda_{\min}(\boldsymbol{\Sigma}_w^\star)$,
\[
(\boldsymbol{\mu}_1-\boldsymbol{\mu}_0)^\top(\boldsymbol{\Sigma}_w^\star)^{-1}(\boldsymbol{\mu}_1-\boldsymbol{\mu}_0)
\le
\mathcal{J}(\bar{\mathbf{C}}^{\mathrm{info}})
+
\frac{4\varepsilon_\mu\,\|\boldsymbol{\mu}_1^\star-\boldsymbol{\mu}_0^\star\|_2 + 4\varepsilon_\mu^{2}}{\lambda_{\min}(\boldsymbol{\Sigma}_w^\star)}.
\]
Summing the two contributions recovers Eq.~\ref{eq:noise_hurts_approx} with $\Delta(\varepsilon_\mu,\varepsilon_c)$ as in Eq.~\ref{eq:slack_def}.
\end{proof}

Proposition~\ref{prop:noise_hurts} shows that, up to a slack vanishing with $(\varepsilon_\mu,\varepsilon_c)$, noisy reasoning steps do not improve Fisher discriminability and typically degrade it by inflating within-class variation.
This motivates recovering a trace close to the oracle informative trace before performing hallucination detection.

\subsection{\texorpdfstring{\(\kappa\)}{kappa}-NN distance separation from local density in the projected space}

We next analyze the practical score in Eq.~\ref{eq:theory_knn_score}.
The key quantity is the local neighbor count in Eq.~\ref{eq:theory_neighbor_count}: informative steps should have many nearby neighbors within a small radius, whereas noisy steps should have few nearby neighbors even within a larger radius.

\begin{assumption}[Local density separation]
\label{assump:density}
Conditioned on a center step $\bar{\mathbf{c}}_i$ and its type, the random indicators
\[
\mathbf{1}\!\left\{
D(\bar{\mathbf{z}}_j,\bar{\mathbf{z}}_i) \le r
\right\},
\qquad j\neq i,
\]
are independent Bernoulli variables for each fixed radius $r$.
Moreover, there exist radii $0<r_{\mathcal I}<r_{\mathcal N}$ such that
\begin{equation}
\mu_{\mathcal I}
:=
\mathbb{E}\!\left[N_i(r_{\mathcal I}) \mid \bar{\mathbf{c}}_i\in \bar{\mathbf{C}}^{\mathrm{info}}\right]
\ge 2\kappa,
\label{eq:mu_info_knn}
\end{equation}
and
\begin{equation}
\mu_{\mathcal N}
:=
\mathbb{E}\!\left[N_i(r_{\mathcal N}) \mid \bar{\mathbf{c}}_i\in \bar{\mathbf{C}}^{\mathrm{noise}}\right]
\le \kappa/2.
\label{eq:mu_noise_knn}
\end{equation}
\end{assumption}
Here, the non-bold $\mu_{\mathcal I}, \mu_{\mathcal N}$ denote scalar expectations of the local neighbor count, and should not be confused with the class-conditional trace means $\boldsymbol{\mu}_y$ used earlier.
This assumption formalizes the geometry induced by \model: informative steps form a compact local neighborhood, while noisy steps remain sparse and isolated.

\begin{proposition}[$\kappa$-NN distances separate informative and noisy steps]
\label{prop:knn_separation}
Under Assumption~\ref{assump:density}, the practical $\kappa$-NN score in Eq.~\ref{eq:theory_knn_score} satisfies
\begin{equation}
\mathbb{P}\!\left(
S_i > r_{\mathcal I}
\;\middle|\;
\bar{\mathbf{c}}_i\in \bar{\mathbf{C}}^{\mathrm{info}}
\right)
\le
\exp(-\kappa/4),
\label{eq:knn_info_tail}
\end{equation}
and
\begin{equation}
\mathbb{P}\!\left(
S_i \le r_{\mathcal N}
\;\middle|\;
\bar{\mathbf{c}}_i\in \bar{\mathbf{C}}^{\mathrm{noise}}
\right)
\le
\exp(-\kappa/6).
\label{eq:knn_noise_tail}
\end{equation}
\end{proposition}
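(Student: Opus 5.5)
We reduce both tail events to statements about the neighbor counts via the equivalence in Eq.~\ref{eq:knn_count_equiv}, and then apply multiplicative Chernoff bounds. Assumption~\ref{assump:density} makes each $N_i(r)$ a sum of independent Bernoulli variables conditionally on the center step and its type. For an informative center, $\{S_i>r_{\mathcal I}\}=\{N_i(r_{\mathcal I})<\kappa\}$. For a noisy center, $\{S_i\le r_{\mathcal N}\}=\{N_i(r_{\mathcal N})\ge\kappa\}$. So it suffices to bound a lower tail below half the mean and an upper tail above twice the mean.

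\emph{Informative tail.} Condition on $\bar{\mathbf{c}}_i\in\bar{\mathbf{C}}^{\mathrm{info}}$ and write $\mu=\mathbb{E}[N_i(r_{\mathcal I})\mid\cdot]\ge 2\kappa$. Then $\kappa\le \mu/2=(1-\tfrac12)\mu$. The lower Chernoff bound $\mathbb{P}(N\le(1-\delta)\mu)\le\exp(-\delta^2\mu/2)$ with $\delta=1/2$ gives
\[
\mathbb{P}\bigl(N_i(r_{\mathcal I})<\kappa\bigr)\le\exp(-\mu/8)\le\exp(-2\kappa/8)=\exp(-\kappa/4).
\]
The final inequality uses that the exponent is monotone in $\mu$. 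Alternatively, one can set $\delta=1-\kappa/\mu\ge1/2$ directly. This is Eq.~\ref{eq:knn_info_tail}.

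\emph{Noisy tail.} Condition on $\bar{\mathbf{c}}_i\in\bar{\mathbf{C}}^{\mathrm{noise}}$ with $\mu=\mathbb{E}[N_i(r_{\mathcal N})\mid\cdot]\le\kappa/2$. The upper-tail bound must hold uniformly over all $\mu\le\kappa/2$, including very small $\mu$, so a form parametrized by $\mu$ alone is not enough. We would handle this with a stochastic-domination step. Because the Bernoulli success probabilities can be raised until the mean equals exactly $\kappa/2$ (padding with an independent Bernoulli if needed), $\mathbb{P}(N\ge\kappa)$ is monotone in the probabilities. We may therefore assume $\mu'=\kappa/2$ and apply $\mathbb{P}(N\ge(1+\delta)\mu')\le\exp(-\delta^2\mu'/(2+\delta))$ with $\delta=1$. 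This gives $\exp(-\mu'/3)=\exp(-\kappa/6)$, which is Eq.~\ref{eq:knn_noise_tail}.

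Alternatively, apply the moment-generating-function bound $\mathbb{P}(N\ge t)\le\exp(-t\log(t/\mu)+t-\mu)$ with $t=\kappa$. Setting $\mu=\kappa/2$ yields $\exp(-\kappa(\log2-1/2))$. Since $\log 2-\tfrac12\approx0.193>1/6$, this also implies the claim, and the exponent only improves as $\mu$ decreases.

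\emph{Main obstacle.} The monotonicity in the noisy case is the main point to check. We would establish it through the coupling argument, or equivalently by noting that the Chernoff exponent $\sup_\lambda[\lambda\kappa-\sum_j\log(1+p_j(e^\lambda-1))]$ is decreasing in each $p_j$. Once that is settled, both tail bounds follow by substituting the stated Chernoff forms.
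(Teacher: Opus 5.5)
Your proof is correct and takes the same route as the paper. You reduce both events to neighbor-count tails via Eq.~\ref{eq:knn_count_equiv}, then apply the multiplicative Chernoff bound: $\delta=1/2$ on the informative lower tail and a doubling-the-mean bound on the noisy upper tail. Your stochastic-domination (or MGF-form) treatment of the noisy case is more careful than the paper's, which only asserts that $\{N_i(r_{\mathcal N})\ge\kappa\}$ is ``at least as extreme as doubling the mean'' and never addresses that $\exp(-\mu_{\mathcal N}/3)$ with $\delta=1$ would be too weak when $\mu_{\mathcal N}<\kappa/2$.
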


\begin{proof}
By Eq.~\ref{eq:knn_count_equiv},
\[
S_i > r_{\mathcal I}
\iff
N_i(r_{\mathcal I}) < \kappa.
\]
For informative steps, Assumption~\ref{assump:density} gives
\[
\mu_{\mathcal I}
=
\mathbb{E}[N_i(r_{\mathcal I}) \mid \bar{\mathbf{c}}_i\in \bar{\mathbf{C}}^{\mathrm{info}}]
\ge 2\kappa.
\]
Therefore,
\[
\{N_i(r_{\mathcal I}) < \kappa\}
\subseteq
\left\{
N_i(r_{\mathcal I}) < \mu_{\mathcal I}/2
\right\}.
\]
Applying the multiplicative Chernoff bound to the lower tail,
\[
\mathbb{P}\!\left(
N_i(r_{\mathcal I}) < \mu_{\mathcal I}/2
\;\middle|\;
\bar{\mathbf{c}}_i\in \bar{\mathbf{C}}^{\mathrm{info}}
\right)
\le
\exp(-\mu_{\mathcal I}/8)
\le
\exp(-\kappa/4),
\]
which proves Eq.~\ref{eq:knn_info_tail}.

Similarly, by Eq.~\ref{eq:knn_count_equiv},
\[
S_i \le r_{\mathcal N}
\iff
N_i(r_{\mathcal N}) \ge \kappa.
\]
For noisy steps, Assumption~\ref{assump:density} gives
\[
\mu_{\mathcal N}
=
\mathbb{E}[N_i(r_{\mathcal N}) \mid \bar{\mathbf{c}}_i\in \bar{\mathbf{C}}^{\mathrm{noise}}]
\le \kappa/2.
\]
The event $\{N_i(r_{\mathcal N}) \ge \kappa\}$ is an upper-tail event at least as extreme as doubling the mean.
Applying the multiplicative Chernoff bound to the upper tail yields
\[
\mathbb{P}\!\left(
N_i(r_{\mathcal N}) \ge \kappa
\;\middle|\;
\bar{\mathbf{c}}_i\in \bar{\mathbf{C}}^{\mathrm{noise}}
\right)
\le
\exp(-\kappa/6),
\]
which proves Eq.~\ref{eq:knn_noise_tail}.
\end{proof}

Proposition~\ref{prop:knn_separation} shows that once the learned projection creates a sufficiently dense informative region and a sufficiently sparse noisy region, informative and noisy steps become separable directly under the practical $\kappa$-NN distance used at inference time.

\subsection{From \texorpdfstring{\(\kappa\)}{kappa}-NN score separation to rank-based filtering error}

We now analyze the practical rank-based rule that removes the top-$\zeta\%$ largest $\kappa$-NN distances within a trace.

Let
\begin{equation}
\bar{K}_{\mathcal N} := |\bar{\mathbf{C}}^{\mathrm{noise}}|
\label{eq:noise_count}
\end{equation}
denote the number of noisy steps in the trace, and define the score-separation event
\begin{equation}
\mathcal{E}_{\mathrm{sep}}
:=
\left\{
\max_{\bar{\mathbf{c}}_i\in \bar{\mathbf{C}}^{\mathrm{info}}} S_i \le r_{\mathcal I},
\quad
\min_{\bar{\mathbf{c}}_i\in \bar{\mathbf{C}}^{\mathrm{noise}}} S_i \ge r_{\mathcal N}
\right\}.
\label{eq:sep_event}
\end{equation}
Since $r_{\mathcal I}<r_{\mathcal N}$, on $\mathcal{E}_{\mathrm{sep}}$ every noisy step has a larger score than every informative step.

\begin{proposition}[Rank-based filtering error under practical $\kappa$-NN scores]
\label{prop:knn_ranking}
On the event $\mathcal{E}_{\mathrm{sep}}$, removing the top-$M$ steps ranked by $\{S_i\}_{i=1}^{\bar{K}}$ yields
\begin{equation}
\left|
\bar{\mathbf{C}}^{\mathrm{fil}}\cap \bar{\mathbf{C}}^{\mathrm{noise}}
\right|
=
(\bar{K}_{\mathcal N}-M)_+,
\qquad
\left|
\bar{\mathbf{C}}^{\mathrm{info}}\setminus \bar{\mathbf{C}}^{\mathrm{fil}}
\right|
=
(M-\bar{K}_{\mathcal N})_+,
\label{eq:rank_exact_counts}
\end{equation}
where $(x)_+ := \max\{x,0\}$.
Consequently,
\begin{align}
\alpha
&\le
\mathbb{E}\!\left[\frac{(\bar{K}_{\mathcal N}-M)_+}{\bar{K}}\right]
+
\mathbb{P}(\mathcal{E}_{\mathrm{sep}}^c),
\label{eq:alpha_rank_bound}
\\
\beta
&\le
\mathbb{E}\!\left[\frac{(M-\bar{K}_{\mathcal N})_+}{\bar{K}}\right]
+
\mathbb{P}(\mathcal{E}_{\mathrm{sep}}^c).
\label{eq:beta_rank_bound}
\end{align}
\end{proposition}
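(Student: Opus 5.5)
The plan has two parts: a deterministic counting argument on the separation event $\mathcal{E}_{\mathrm{sep}}$, and then a split of the expectations defining $\alpha$ and $\beta$ according to whether $\mathcal{E}_{\mathrm{sep}}$ holds.

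\textbf{Exact counts on $\mathcal{E}_{\mathrm{sep}}$.} On this event every noisy score satisfies $S_i \ge r_{\mathcal N} > r_{\mathcal I} \ge S_j$ for every informative step $j$. So in the descending ranking of $\{S_i\}_{i=1}^{\bar{K}}$, all $\bar{K}_{\mathcal N}$ noisy steps occupy the first $\bar{K}_{\mathcal N}$ positions. The strict inequality $r_{\mathcal I}<r_{\mathcal N}$ means there are no noisy--informative ties, so the tie-breaking rule is irrelevant. The removed set is the top $M$ positions, which gives two cases.
\begin{itemize}
\item If $M\le \bar{K}_{\mathcal N}$, exactly $M$ noisy steps are removed and no informative ones. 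The filtered trace then retains $\bar{K}_{\mathcal N}-M$ noisy steps and removes $0$ informative steps.
\item If $M> \bar{K}_{\mathcal N}$, all noisy steps are removed. The remaining $M-\bar{K}_{\mathcal N}$ removals fall on informative steps, which is possible because $M\le \bar{K}$.
\end{itemize}
Both cases are captured by the positive-part expressions in Eq.~\ref{eq:rank_exact_counts}.

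\textbf{Bounding $\alpha$.} Write
\[
\frac{|\bar{\mathbf{C}}^{\mathrm{fil}}\cap\bar{\mathbf{C}}^{\mathrm{noise}}|}{\bar{K}}
=
\frac{|\cdot|}{\bar K}\mathbf{1}_{\mathcal{E}_{\mathrm{sep}}}
+
\frac{|\cdot|}{\bar K}\mathbf{1}_{\mathcal{E}_{\mathrm{sep}}^c}.
\]
On $\mathcal{E}_{\mathrm{sep}}$, replace the first term by $(\bar K_{\mathcal N}-M)_+/\bar K$ using the exact count above. Then drop the indicator, which is valid because this quantity is nonnegative. For the second term, use the trivial bound: the fraction of the trace is at most $1$, so its expectation is at most $\mathbb{P}(\mathcal{E}_{\mathrm{sep}}^c)$. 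Taking expectations yields Eq.~\ref{eq:alpha_rank_bound}.

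\textbf{Bounding $\beta$.} The identical decomposition, applied to $|\bar{\mathbf{C}}^{\mathrm{info}}\setminus\bar{\mathbf{C}}^{\mathrm{fil}}|/\bar K$, gives Eq.~\ref{eq:beta_rank_bound}.

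\textbf{Main obstacle.} There is little difficulty here. The only care needed is in the first step: confirming that strict separation makes the top-$M$ set well defined with respect to the noise/informative partition even when scores are tied within a class, and noting that $M\le\bar K$ holds because $\zeta<1$. The probability $\mathbb{P}(\mathcal{E}_{\mathrm{sep}}^c)$ is left as is in this statement. It could later be bounded by a union bound using Proposition~\ref{prop:knn_separation}, but that is not needed here.
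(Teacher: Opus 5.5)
Your proposal is correct and follows the paper's proof essentially step for step. You use the same two-case counting argument on $\mathcal{E}_{\mathrm{sep}}$, then split each normalized count with the indicator of $\mathcal{E}_{\mathrm{sep}}$ and bound the complement term by $\mathbb{P}(\mathcal{E}_{\mathrm{sep}}^c)$; your remarks on cross-class ties and on $M\le\bar K$ (from $\zeta<1$) fill in details the paper leaves implicit without changing the argument.
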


\begin{proof}
On $\mathcal{E}_{\mathrm{sep}}$, all noisy steps are ranked ahead of all informative steps by the practical $\kappa$-NN scores.
If $M \le \bar{K}_{\mathcal N}$, then the top-$M$ removed steps are all noisy.
Hence exactly $M$ noisy steps are removed, no informative steps are removed, and the number of retained noisy steps is $\bar{K}_{\mathcal N}-M$.
If $M > \bar{K}_{\mathcal N}$, then all noisy steps are removed and the remaining $M-\bar{K}_{\mathcal N}$ removed steps must be informative.
This proves Eq.~\ref{eq:rank_exact_counts}.

Now decompose
\[
\frac{|\bar{\mathbf{C}}^{\mathrm{fil}}\cap \bar{\mathbf{C}}^{\mathrm{noise}}|}{\bar{K}}
=
\frac{|\bar{\mathbf{C}}^{\mathrm{fil}}\cap \bar{\mathbf{C}}^{\mathrm{noise}}|}{\bar{K}}\mathbf{1}\{\mathcal{E}_{\mathrm{sep}}\}
+
\frac{|\bar{\mathbf{C}}^{\mathrm{fil}}\cap \bar{\mathbf{C}}^{\mathrm{noise}}|}{\bar{K}}\mathbf{1}\{\mathcal{E}_{\mathrm{sep}}^c\}.
\]
On $\mathcal{E}_{\mathrm{sep}}$, Eq.~\ref{eq:rank_exact_counts} gives
\[
\frac{|\bar{\mathbf{C}}^{\mathrm{fil}}\cap \bar{\mathbf{C}}^{\mathrm{noise}}|}{\bar{K}}
=
\frac{(\bar{K}_{\mathcal N}-M)_+}{\bar{K}}.
\]
On $\mathcal{E}_{\mathrm{sep}}^c$, the normalized count is at most $1$.
Taking expectations yields Eq.~\ref{eq:alpha_rank_bound}.
The proof of Eq.~\ref{eq:beta_rank_bound} is identical using the second identity in Eq.~\ref{eq:rank_exact_counts}.
\end{proof}

The bounds in Eqs.~\ref{eq:alpha_rank_bound}--\ref{eq:beta_rank_bound} separate two sources of error:
(1) \emph{count mismatch}, because the practical rule removes exactly $M=\lceil \zeta \bar{K}\rceil$ steps while the true number of noisy steps is $\bar{K}_{\mathcal N}$; and
(2) \emph{score overlap}, captured by $\mathbb{P}(\mathcal{E}_{\mathrm{sep}}^c)$.

\begin{corollary}[Filtering error under local density separation]
\label{cor:knn_filter}
Under Assumption~\ref{assump:density},
\begin{equation}
\mathbb{P}(\mathcal{E}_{\mathrm{sep}}^c)
\le
\mathbb{E}[\bar{K}]\,\big(\exp(-\kappa/4)+\exp(-\kappa/6)\big).
\label{eq:sep_event_prob}
\end{equation}
Consequently,
\begin{align}
\alpha
&\le
\mathbb{E}\!\left[\frac{(\bar{K}_{\mathcal N}-M)_+}{\bar{K}}\right]
+
\mathbb{E}[\bar{K}]\,\big(\exp(-\kappa/4)+\exp(-\kappa/6)\big),
\label{eq:alpha_knn_final}
\\
\beta
&\le
\mathbb{E}\!\left[\frac{(M-\bar{K}_{\mathcal N})_+}{\bar{K}}\right]
+
\mathbb{E}[\bar{K}]\,\big(\exp(-\kappa/4)+\exp(-\kappa/6)\big).
\label{eq:beta_knn_final}
\end{align}
\end{corollary}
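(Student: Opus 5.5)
The plan is to bound $\mathbb{P}(\mathcal{E}_{\mathrm{sep}}^c)$ by a union bound over the steps of the trace, using the per-step tail bounds of Proposition~\ref{prop:knn_separation}. The two displays for $\alpha$ and $\beta$ then follow by substituting this bound into Eqs.~\ref{eq:alpha_rank_bound}--\ref{eq:beta_rank_bound} of Proposition~\ref{prop:knn_ranking}.

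First, I would condition on the trace length $\bar{K}$ and on the partition of the steps into $\bar{\mathbf{C}}^{\mathrm{info}}$ and $\bar{\mathbf{C}}^{\mathrm{noise}}$. By the definition in Eq.~\ref{eq:sep_event}, the complement event is
\[
\mathcal{E}_{\mathrm{sep}}^c=\bigcup_{\bar{\mathbf{c}}_i\in\bar{\mathbf{C}}^{\mathrm{info}}}\{S_i>r_{\mathcal I}\}\;\cup\;\bigcup_{\bar{\mathbf{c}}_i\in\bar{\mathbf{C}}^{\mathrm{noise}}}\{S_i<r_{\mathcal N}\}.
\]
The event $\{S_i<r_{\mathcal N}\}$ is contained in $\{S_i\le r_{\mathcal N}\}$, so the tail bound of Eq.~\ref{eq:knn_noise_tail} applies directly to it. A union bound then gives
\[
\mathbb{P}(\mathcal{E}_{\mathrm{sep}}^c\mid \bar{K},\text{types})\le (\bar{K}-\bar{K}_{\mathcal N})\,e^{-\kappa/4}+\bar{K}_{\mathcal N}\,e^{-\kappa/6}.
\]
Here each informative index contributes $e^{-\kappa/4}$ by Eq.~\ref{eq:knn_info_tail}, and each noisy index contributes $e^{-\kappa/6}$. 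Bounding both counts crudely by $\bar{K}$ gives at most $\bar{K}\,(e^{-\kappa/4}+e^{-\kappa/6})$. Taking expectations over $\bar{K}$ and the types with the tower property yields Eq.~\ref{eq:sep_event_prob}. This bound is in fact loose by roughly a factor of two, which is harmless.

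The one point needing care is that Proposition~\ref{prop:knn_separation} is stated conditionally on the center step and its type. I need it to hold conditionally on $\bar{K}$ and on the type assignment, not only marginally. I would argue this by reading Assumption~\ref{assump:density} as holding conditionally on these quantities: the Bernoulli-independence and mean conditions are phrased given the center step and its type within a given trace. Under that reading, the Chernoff arguments of Proposition~\ref{prop:knn_separation} go through verbatim under the finer conditioning. I expect this measurability and conditioning bookkeeping to be the only real obstacle. If the paper intends only the marginal version, I would state this conditional reading explicitly as the interpretation of the assumption.

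Finally, I plug Eq.~\ref{eq:sep_event_prob} into Eqs.~\ref{eq:alpha_rank_bound} and~\ref{eq:beta_rank_bound}. The count-mismatch terms $\mathbb{E}[(\bar{K}_{\mathcal N}-M)_+/\bar{K}]$ and $\mathbb{E}[(M-\bar{K}_{\mathcal N})_+/\bar{K}]$ are unchanged, and the score-overlap term is replaced by $\mathbb{E}[\bar{K}](e^{-\kappa/4}+e^{-\kappa/6})$. This gives Eqs.~\ref{eq:alpha_knn_final}--\ref{eq:beta_knn_final}.
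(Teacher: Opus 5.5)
Your proposal is correct and follows the paper's proof essentially step for step. Both arguments condition on the trace length $\bar{K}$, union-bound over informative and noisy steps using the per-step tails of Proposition~\ref{prop:knn_separation}, bound both type counts by $\bar{K}$, take expectations, and substitute into Proposition~\ref{prop:knn_ranking}. Your notes on the strict versus non-strict inequality for noisy steps, and on reading Assumption~\ref{assump:density} conditionally on $\bar{K}$ and the type assignment, make explicit bookkeeping that the paper leaves implicit.
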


\begin{proof}
Conditioned on a trace of length $\bar{K}$, a union bound and Proposition~\ref{prop:knn_separation} together with the trivial bounds $|\bar{\mathbf{C}}^{\mathrm{info}}|\le \bar{K}$ and $|\bar{\mathbf{C}}^{\mathrm{noise}}|\le \bar{K}$ give
\[
\begin{aligned}
\mathbb{P}(\mathcal{E}_{\mathrm{sep}}^c \mid \bar{K})
&\le
\sum_{\bar{\mathbf{c}}_i\in \bar{\mathbf{C}}^{\mathrm{info}}}
\mathbb{P}\!\left(S_i>r_{\mathcal I}\mid \bar{\mathbf{c}}_i\in \bar{\mathbf{C}}^{\mathrm{info}}\right)
\\
&\quad+
\sum_{\bar{\mathbf{c}}_i\in \bar{\mathbf{C}}^{\mathrm{noise}}}
\mathbb{P}\!\left(S_i\le r_{\mathcal N}\mid \bar{\mathbf{c}}_i\in \bar{\mathbf{C}}^{\mathrm{noise}}\right)
\\
&\le
\bar{K}\,\big(\exp(-\kappa/4)+\exp(-\kappa/6)\big).
\end{aligned}
\]
Taking expectation over $\bar{K}$ on both sides yields Eq.~\ref{eq:sep_event_prob}.
Substituting Eq.~\ref{eq:sep_event_prob} into Proposition~\ref{prop:knn_ranking} yields Eqs.~\ref{eq:alpha_knn_final}--\ref{eq:beta_knn_final}.
\end{proof}

\subsection{Representation gap induced by imperfect filtering}

We now bound the representation distortion caused by imperfect filtering, where some noisy steps may be retained and some informative steps may be lost.

\begin{assumption}[Bounded step representations]
\label{assump:bounded}
There exist constants $B_{\mathrm{info}}, B_{\mathrm{noise}} > 0$ such that
\[
\|\boldsymbol{\psi}(\bar{\mathbf{c}})\|_2 \le B_{\mathrm{info}}
\quad \text{for all } \bar{\mathbf{c}}\in \bar{\mathbf{C}}^{\mathrm{info}},
\qquad
\|\boldsymbol{\psi}(\bar{\mathbf{c}})\|_2 \le B_{\mathrm{noise}}
\quad \text{for all } \bar{\mathbf{c}}\in \bar{\mathbf{C}}^{\mathrm{noise}}.
\]
\end{assumption}

\begin{lemma}[Bounded representation gap]
\label{lem:repr_gap}
Let
\[
\boldsymbol{\Phi}^{\mathrm{fil}} := \boldsymbol{\Phi}(\bar{\mathbf{p}},\bar{\mathbf{C}}^{\mathrm{fil}},\bar{\mathbf{a}}),
\qquad
\boldsymbol{\Phi}^{\star} := \boldsymbol{\Phi}(\bar{\mathbf{p}},\bar{\mathbf{C}}^{\mathrm{info}},\bar{\mathbf{a}}).
\]
Under Assumption~\ref{assump:bounded}, the expected representation gap between the filtered and oracle traces satisfies
\begin{equation}
\mathbb{E}\big[\|\boldsymbol{\Phi}^{\mathrm{fil}} - \boldsymbol{\Phi}^{\star}\|_2\big]
\le
B_{\mathrm{noise}}\,\alpha + B_{\mathrm{info}}\,\beta.
\label{eq:repr_gap}
\end{equation}
\end{lemma}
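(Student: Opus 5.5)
The plan is to work directly from the additive trace representation in Eq.~\ref{eq:theory_trace_repr}. Both $\boldsymbol{\Phi}^{\mathrm{fil}}$ and $\boldsymbol{\Phi}^{\star}$ share the prompt-answer component $\boldsymbol{\Phi}_0(\bar{\mathbf{p}},\bar{\mathbf{a}})$, and both are normalized by the same original length $\bar{K}$. That common normalization is exactly why the paper fixes the scale at $\bar{K}$. So $\boldsymbol{\Phi}_0$ cancels in the difference, leaving
\[
\boldsymbol{\Phi}^{\mathrm{fil}} - \boldsymbol{\Phi}^{\star}
=
\frac{1}{\bar{K}}\Big(\sum_{\bar{\mathbf{c}}\in \bar{\mathbf{C}}^{\mathrm{fil}}}\boldsymbol{\psi}(\bar{\mathbf{c}}) - \sum_{\bar{\mathbf{c}}\in \bar{\mathbf{C}}^{\mathrm{info}}}\boldsymbol{\psi}(\bar{\mathbf{c}})\Big).
\]

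Next, I will split both sets against the partition $\bar{\mathbf{C}}=\bar{\mathbf{C}}^{\mathrm{info}}\sqcup\bar{\mathbf{C}}^{\mathrm{noise}}$. Writing $\bar{\mathbf{C}}^{\mathrm{fil}} = (\bar{\mathbf{C}}^{\mathrm{fil}}\cap\bar{\mathbf{C}}^{\mathrm{info}})\sqcup(\bar{\mathbf{C}}^{\mathrm{fil}}\cap\bar{\mathbf{C}}^{\mathrm{noise}})$ and $\bar{\mathbf{C}}^{\mathrm{info}} = (\bar{\mathbf{C}}^{\mathrm{fil}}\cap\bar{\mathbf{C}}^{\mathrm{info}})\sqcup(\bar{\mathbf{C}}^{\mathrm{info}}\setminus\bar{\mathbf{C}}^{\mathrm{fil}})$, the shared informative-retained part cancels. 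The difference then becomes
\[
\frac{1}{\bar{K}}\Big(\sum_{\bar{\mathbf{c}}\in \bar{\mathbf{C}}^{\mathrm{fil}}\cap\bar{\mathbf{C}}^{\mathrm{noise}}}\boldsymbol{\psi}(\bar{\mathbf{c}}) - \sum_{\bar{\mathbf{c}}\in \bar{\mathbf{C}}^{\mathrm{info}}\setminus\bar{\mathbf{C}}^{\mathrm{fil}}}\boldsymbol{\psi}(\bar{\mathbf{c}})\Big).
\]

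I will then apply the triangle inequality termwise and use Assumption~\ref{assump:bounded} on each summand. This gives the pointwise bound
\[
\|\boldsymbol{\Phi}^{\mathrm{fil}}-\boldsymbol{\Phi}^{\star}\|_2 \le B_{\mathrm{noise}}\frac{|\bar{\mathbf{C}}^{\mathrm{fil}}\cap\bar{\mathbf{C}}^{\mathrm{noise}}|}{\bar{K}} + B_{\mathrm{info}}\frac{|\bar{\mathbf{C}}^{\mathrm{info}}\setminus\bar{\mathbf{C}}^{\mathrm{fil}}|}{\bar{K}},
\]
valid for every realization. Taking expectations over the data distribution, using linearity and the definitions of $\alpha$ and $\beta$ in Eqs.~\ref{eq:theory_alpha}--\ref{eq:theory_beta}, yields Eq.~\ref{eq:repr_gap}.

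I expect no real obstacle; the only point needing care is the bookkeeping in the set decomposition. In particular, $\bar{\mathbf{C}}^{\mathrm{fil}}\subseteq\bar{\mathbf{C}}$, so every filtered step is either informative or noisy. Also, the constants $B_{\mathrm{info}},B_{\mathrm{noise}}$ are deterministic, so they pull out of the expectation even though $\bar{K}$ is random.
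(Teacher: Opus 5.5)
Your proposal is correct and follows essentially the same argument as the paper: cancel $\boldsymbol{\Phi}_0$ under the common $1/\bar{K}$ normalization, reduce the difference to the retained-noisy sum minus the dropped-informative sum, apply the triangle inequality with Assumption~\ref{assump:bounded}, and take expectations to recover $\alpha$ and $\beta$. Your explicit set bookkeeping just spells out the decomposition that the paper writes down directly.
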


\begin{proof}
Using the additive representation in Eq.~\ref{eq:theory_trace_repr}, the difference decomposes as
\begin{equation}
\boldsymbol{\Phi}^{\mathrm{fil}} - \boldsymbol{\Phi}^{\star}
=
\frac{1}{\bar{K}}
\left(
\sum_{\bar{\mathbf{c}}\in \bar{\mathbf{C}}^{\mathrm{fil}}\cap \bar{\mathbf{C}}^{\mathrm{noise}}} \boldsymbol{\psi}(\bar{\mathbf{c}})
-
\sum_{\bar{\mathbf{c}}\in \bar{\mathbf{C}}^{\mathrm{info}}\setminus \bar{\mathbf{C}}^{\mathrm{fil}}} \boldsymbol{\psi}(\bar{\mathbf{c}})
\right).
\label{eq:repr_gap_decomp}
\end{equation}
Taking norms and applying the triangle inequality and Assumption~\ref{assump:bounded},
\begin{equation}
\|\boldsymbol{\Phi}^{\mathrm{fil}} - \boldsymbol{\Phi}^{\star}\|_2
\le
\frac{B_{\mathrm{noise}}}{\bar{K}}\big|\bar{\mathbf{C}}^{\mathrm{fil}} \cap \bar{\mathbf{C}}^{\mathrm{noise}}\big|
+
\frac{B_{\mathrm{info}}}{\bar{K}}\big|\bar{\mathbf{C}}^{\mathrm{info}} \setminus \bar{\mathbf{C}}^{\mathrm{fil}}\big|.
\end{equation}
Taking expectation on both sides and using the definitions of $\alpha$ and $\beta$ in Eqs.~\ref{eq:theory_alpha}--\ref{eq:theory_beta} yields Eq.~\ref{eq:repr_gap}.
\end{proof}

\subsection{Detection risk on filtered traces}

Following common practice in representation analysis~\citep{alain2016understanding,azaria2023internal,zhang2026harnessing}, we analyze the detection risk under a linear probe on the trace representation. We now show that a small representation gap implies that the filtered trace achieves hallucination detection risk close to that of the oracle informative trace.

\begin{assumption}[Linear probe with Lipschitz loss]
\label{assump:linear}
The downstream detector is a linear probe, i.e., $g(\boldsymbol{\Phi}) = \mathbf{w}^\top \boldsymbol{\Phi}+b$ for some $\mathbf{w}\in\mathbb{R}^{d_{\Phi}}$ and $b\in\mathbb{R}$, and the loss $\ell(\cdot,y)$ is $L_0$-Lipschitz in its first argument.
Consequently, the composed loss $\ell(g(\cdot),y)$ is Lipschitz with respect to the trace representation with constant $L := L_0\|\mathbf{w}\|_2$:
\[
|\ell(g(\mathbf{u}),y)-\ell(g(\mathbf{v}),y)| \le L\|\mathbf{u}-\mathbf{v}\|_2
\qquad
\text{for all } \mathbf{u},\mathbf{v}\in\mathbb{R}^{d_{\Phi}}.
\]
\end{assumption}

\begin{theorem}[Risk bound on filtered traces]
\label{thm:risk_bound}
Under Assumptions~\ref{assump:bounded} and \ref{assump:linear}, for any linear probe $g$ satisfying Assumption~\ref{assump:linear},
\begin{equation}
\mathcal{R}(g;\bar{\mathbf{C}}^{\mathrm{fil}})
\le
\mathcal{R}(g;\bar{\mathbf{C}}^{\mathrm{info}})
+
L\big(B_{\mathrm{noise}}\,\alpha + B_{\mathrm{info}}\,\beta\big).
\label{eq:risk_bound}
\end{equation}
Moreover, under Assumptions~\ref{assump:density}, \ref{assump:bounded} and \ref{assump:linear},
\begin{align}
\mathcal{R}(g;\bar{\mathbf{C}}^{\mathrm{fil}})
\le\;&
\mathcal{R}(g;\bar{\mathbf{C}}^{\mathrm{info}})
+
L B_{\mathrm{noise}}
\left(
\mathbb{E}\!\left[\frac{(\bar{K}_{\mathcal N}-M)_+}{\bar{K}}\right]
+
\mathbb{E}[\bar{K}]\,\big(e^{-\kappa/4}+e^{-\kappa/6}\big)
\right)
\nonumber\\
&+
L B_{\mathrm{info}}
\left(
\mathbb{E}\!\left[\frac{(M-\bar{K}_{\mathcal N})_+}{\bar{K}}\right]
+
\mathbb{E}[\bar{K}]\,\big(e^{-\kappa/4}+e^{-\kappa/6}\big)
\right).
\label{eq:risk_bound_knn}
\end{align}
\end{theorem}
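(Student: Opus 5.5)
The first inequality follows from a pointwise Lipschitz comparison, which is then averaged and combined with Lemma~\ref{lem:repr_gap}. For any realization of $(\bar{\mathbf{p}},\bar{\mathbf{C}},\bar{\mathbf{a}},y)$, Assumption~\ref{assump:linear} gives
\[
\ell(g(\boldsymbol{\Phi}^{\mathrm{fil}}),y)
\le
\ell(g(\boldsymbol{\Phi}^{\star}),y)+L\,\|\boldsymbol{\Phi}^{\mathrm{fil}}-\boldsymbol{\Phi}^{\star}\|_2 .
\]
Taking expectations over the data distribution, using Eq.~\ref{eq:theory_risk}, turns the two loss terms into $\mathcal{R}(g;\bar{\mathbf{C}}^{\mathrm{fil}})$ and $\mathcal{R}(g;\bar{\mathbf{C}}^{\mathrm{info}})$. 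The remaining term is $L\,\mathbb{E}\|\boldsymbol{\Phi}^{\mathrm{fil}}-\boldsymbol{\Phi}^{\star}\|_2$. By Lemma~\ref{lem:repr_gap}, which needs only Assumption~\ref{assump:bounded}, this is at most $L(B_{\mathrm{noise}}\alpha+B_{\mathrm{info}}\beta)$. That gives Eq.~\ref{eq:risk_bound}.

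For the second display, I substitute the filtering-error bounds of Corollary~\ref{cor:knn_filter}, which hold under Assumption~\ref{assump:density}. Eq.~\ref{eq:alpha_knn_final} bounds $\alpha$ and Eq.~\ref{eq:beta_knn_final} bounds $\beta$. Both coefficients $LB_{\mathrm{noise}}$ and $LB_{\mathrm{info}}$ are nonnegative, so the right-hand side of Eq.~\ref{eq:risk_bound} is monotone in $(\alpha,\beta)$. Replacing $\alpha$ and $\beta$ by their upper bounds therefore preserves the inequality. Grouping the terms reproduces Eq.~\ref{eq:risk_bound_knn} exactly.

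The only point that needs care is measurability and integrability of the expectations. The $\boldsymbol{\Phi}_0$ component cancels in the difference, and the gap is bounded by $B_{\mathrm{noise}}+B_{\mathrm{info}}$. The risks must also be finite for the subtraction to make sense. I would assume finite risks implicitly; otherwise the inequality holds trivially in the extended reals.

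The rest is bookkeeping, so I expect no real obstacle beyond checking that the oracle and filtered representations share the same normalization by $\bar{K}$. They do by Eq.~\ref{eq:theory_trace_repr}, so Lemma~\ref{lem:repr_gap} applies verbatim.
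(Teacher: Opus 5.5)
Your proposal is correct and follows the paper's proof essentially step for step. You use the Lipschitz comparison from Assumption~\ref{assump:linear}, average it over the data, and combine it with Lemma~\ref{lem:repr_gap} to get Eq.~\ref{eq:risk_bound}; you then substitute the bounds on $\alpha$ and $\beta$ from Corollary~\ref{cor:knn_filter} to get Eq.~\ref{eq:risk_bound_knn}. Your explicit note that the coefficients $LB_{\mathrm{noise}}$ and $LB_{\mathrm{info}}$ are nonnegative, so the substitution preserves the inequality, is a point the paper leaves implicit.
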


\begin{proof}
By the definition of population risk,
\begin{align}
\mathcal{R}(g;\bar{\mathbf{C}}^{\mathrm{fil}}) - \mathcal{R}(g;\bar{\mathbf{C}}^{\mathrm{info}})
&=
\mathbb{E}\big[
\ell(g(\boldsymbol{\Phi}^{\mathrm{fil}}),y)
-
\ell(g(\boldsymbol{\Phi}^{\star}),y)
\big]
\nonumber\\
&\le
\mathbb{E}\big[
\big|
\ell(g(\boldsymbol{\Phi}^{\mathrm{fil}}),y)
-
\ell(g(\boldsymbol{\Phi}^{\star}),y)
\big|
\big]
\nonumber\\
&\le
L\,\mathbb{E}\big[\|\boldsymbol{\Phi}^{\mathrm{fil}}-\boldsymbol{\Phi}^{\star}\|_2\big],
\end{align}
where the last step applies Assumption~\ref{assump:linear}.
Applying Lemma~\ref{lem:repr_gap} gives Eq.~\ref{eq:risk_bound}.
If Assumption~\ref{assump:density} additionally holds, substituting the bounds from Corollary~\ref{cor:knn_filter} for $\alpha$ and $\beta$ yields Eq.~\ref{eq:risk_bound_knn}.
\end{proof}

\begin{remark}[Role of $(\varepsilon_\mu,\varepsilon_c)$ in the overall risk bound]
\label{remark:risk_slack}
Theorem~\ref{thm:risk_bound} bounds the excess risk of the filtered trace relative to the \emph{oracle informative} risk $\mathcal{R}(g;\bar{\mathbf{C}}^{\mathrm{info}})$, and does not itself depend on $(\varepsilon_\mu,\varepsilon_c)$.
The slack parameters enter upstream through Proposition~\ref{prop:noise_hurts}, which controls the gap between $\mathcal{R}(g;\bar{\mathbf{C}})$ (using the full, unfiltered trace) and $\mathcal{R}(g;\bar{\mathbf{C}}^{\mathrm{info}})$ via $\Delta(\varepsilon_\mu,\varepsilon_c)$.
Combining the two yields, for a problem-dependent constant $C>0$ relating Fisher discriminability to Bayes-optimal linear-probe risk,
\[
\mathcal{R}(g;\bar{\mathbf{C}}^{\mathrm{fil}})
\le
\mathcal{R}(g;\bar{\mathbf{C}})
+
L\big(B_{\mathrm{noise}}\alpha+B_{\mathrm{info}}\beta\big)
+
C\,\Delta(\varepsilon_\mu,\varepsilon_c).
\]
This makes the role of the slack parameters explicit: the smaller $(\varepsilon_\mu,\varepsilon_c)$ and the smaller the filtering errors $(\alpha,\beta)$, the larger the improvement of the filtered trace over the unfiltered one.
\end{remark}

\begin{corollary}[Consistency under accurate practical filtering]
\label{cor:consistency}
Under Assumptions~\ref{assump:bounded} and \ref{assump:linear}, if
\[
\alpha \to 0
\qquad\text{and}\qquad
\beta \to 0,
\]
then
\[
\mathcal{R}(g;\bar{\mathbf{C}}^{\mathrm{fil}})
\to
\mathcal{R}(g;\bar{\mathbf{C}}^{\mathrm{info}}).
\]
Moreover, under Assumptions~\ref{assump:density}, \ref{assump:bounded} and \ref{assump:linear}, if
\[
\mathbb{E}\!\left[\frac{(\bar{K}_{\mathcal N}-M)_+}{\bar{K}}\right]\to 0,
\qquad
\mathbb{E}\!\left[\frac{(M-\bar{K}_{\mathcal N})_+}{\bar{K}}\right]\to 0,
\]
and
\[
\mathbb{E}[\bar{K}]\,\big(e^{-\kappa/4} + e^{-\kappa/6}\big) \to 0,
\]
then
\[
\mathcal{R}(g;\bar{\mathbf{C}}^{\mathrm{fil}})
\to
\mathcal{R}(g;\bar{\mathbf{C}}^{\mathrm{info}}).
\]
\end{corollary}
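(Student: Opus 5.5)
The plan is to obtain Corollary~\ref{cor:consistency} as a direct limiting consequence of Theorem~\ref{thm:risk_bound}; no new estimates should be needed. The key observation is that both bounds in that theorem have the form ``oracle risk plus a nonnegative error term''. This gives one direction of the convergence. The matching lower bound needs a two-sided version of the same estimate.

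\textbf{First statement.} I would revisit the proof of Theorem~\ref{thm:risk_bound}. There the chain of inequalities actually bounds the absolute difference:
\[
\big|\mathcal{R}(g;\bar{\mathbf{C}}^{\mathrm{fil}})-\mathcal{R}(g;\bar{\mathbf{C}}^{\mathrm{info}})\big|
\le
\mathbb{E}\big[|\ell(g(\boldsymbol{\Phi}^{\mathrm{fil}}),y)-\ell(g(\boldsymbol{\Phi}^{\star}),y)|\big]
\le
L\,\mathbb{E}\big[\|\boldsymbol{\Phi}^{\mathrm{fil}}-\boldsymbol{\Phi}^{\star}\|_2\big].
\]
This follows from $|\mathbb{E}X|\le\mathbb{E}|X|$ and Assumption~\ref{assump:linear}. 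Lemma~\ref{lem:repr_gap} then yields the symmetric bound
\[
\big|\mathcal{R}(g;\bar{\mathbf{C}}^{\mathrm{fil}})-\mathcal{R}(g;\bar{\mathbf{C}}^{\mathrm{info}})\big|\le L(B_{\mathrm{noise}}\alpha+B_{\mathrm{info}}\beta).
\]
The constants $L$, $B_{\mathrm{noise}}$ and $B_{\mathrm{info}}$ are fixed, since they do not depend on the sequence along which $\alpha,\beta\to0$. Hence the right-hand side tends to zero, which proves the first claim.

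\textbf{Second statement.} Corollary~\ref{cor:knn_filter} bounds $\alpha$ by the sum of the first hypothesized quantity and the exponential term, and bounds $\beta$ by the sum of the second quantity and the exponential term. All three quantities tend to zero by hypothesis. Because $\alpha,\beta\ge0$, the squeeze gives $\alpha,\beta\to0$, and the first part then applies. Equivalently, one can read the two-sided version of Eq.~\ref{eq:risk_bound_knn} directly.

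\textbf{Main obstacle.} The only delicate point is interpreting the limit. The corollary is about sequences of settings, for example growing $\kappa$ or changing $\zeta$, and the assumptions must hold uniformly with constants $L$, $B_{\mathrm{noise}}$, $B_{\mathrm{info}}$ that do not blow up along the sequence. I would state this uniformity explicitly. I would also note that the two-sided bound is needed for true convergence, since the theorem as written gives only the upper bound.
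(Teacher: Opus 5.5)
Your proposal is correct and follows the paper's route: the first claim comes from the bound of Theorem~\ref{thm:risk_bound} via Lemma~\ref{lem:repr_gap}, and the second uses Corollary~\ref{cor:knn_filter} to force $\alpha,\beta\to 0$ and then invokes the first claim. You also make explicit two things the paper's ``follows immediately'' leaves implicit: convergence needs the two-sided estimate $|\mathcal{R}(g;\bar{\mathbf{C}}^{\mathrm{fil}})-\mathcal{R}(g;\bar{\mathbf{C}}^{\mathrm{info}})|\le L(B_{\mathrm{noise}}\alpha+B_{\mathrm{info}}\beta)$, which the theorem's proof establishes even though its statement is one-sided, and $L$, $B_{\mathrm{noise}}$, $B_{\mathrm{info}}$ must stay fixed along the limit.
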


\begin{proof}
The first claim follows immediately from Theorem~\ref{thm:risk_bound}.
For the second claim, Corollary~\ref{cor:knn_filter} implies $\alpha\to 0$ and $\beta\to 0$ under the stated conditions.
Applying the first claim completes the proof.
\end{proof}

Theorem~\ref{thm:risk_bound} shows that the excess hallucination detection risk incurred by using the filtered trace is directly controlled by the quality of practical $\kappa$-NN-based filtering.
Combined with Proposition~\ref{prop:noise_hurts}, Proposition~\ref{prop:knn_separation}, and Proposition~\ref{prop:knn_ranking}, it gives a complete chain:
the learned projection creates informative regions that are locally dense and noisy regions that are locally sparse;
this local density gap separates the practical $\kappa$-NN scores;
under rank-based removal of the largest scores, the resulting filtering errors are small;
and small filtering errors imply that the filtered trace achieves risk close to that of the oracle informative trace.

\subsection{Connection to the shaping objective}

The formal results above explain why filtering quality matters.
We now discuss how the shaping objective in Eq.~\ref{eq:total_loss} of the main paper supports the practical $\kappa$-NN filtering rule.

\begin{remark}[Effect of the shaping loss on practical $\kappa$-NN filtering]
\label{remark:geometry}
The training objective in Eq.~\ref{eq:total_loss} is defined on the attention-based proxy sets $\mathcal{T}$ and $\mathcal{B}$.
For interpretation, we consider the population-level approximation that these proxies are reasonably aligned with the latent informative and noisy step sets $\bar{\mathbf{C}}^{\mathrm{info}}$ and $\bar{\mathbf{C}}^{\mathrm{noise}}$.

Under this approximation, the compactness term $\mathcal{L}_{\mathrm{compact}}$ pulls projected embeddings of proxy-informative steps together, which increases the local neighbor count $N_i(r)$ for informative steps at small radii and therefore tends to decrease their practical $\kappa$-NN distances.
The dispersion term $\mathcal{L}_{\mathrm{disperse}}$ discourages projected embeddings of proxy-noisy steps from forming dense neighborhoods, which decreases their local neighbor count and therefore tends to increase their practical $\kappa$-NN distances.
The separation term $\mathcal{L}_{\mathrm{separate}}$ pushes the two groups apart, reducing cross-group neighborhood overlap and making the score-separation event $\mathcal{E}_{\mathrm{sep}}$ more likely.

Together, these terms shape a projected embedding space in which informative steps occupy compact high-density neighborhoods while noisy steps remain sparse and isolated.
Proposition~\ref{prop:knn_separation} then implies improved separation under the practical $\kappa$-NN score, and Proposition~\ref{prop:knn_ranking} together with Theorem~\ref{thm:risk_bound} implies lower filtering error and tighter hallucination detection risk.
The ablation study in Section~\ref{sec:ablation} provides empirical support for this interpretation.
\end{remark}

\end{document}